\newlength{\defbaselineskip}
\newtheorem*{remark*}{Remark}
\newtheorem*{cor*}{Corollary}
\declaretheorem[name=Theorem,numberwithin=section]{theorem}
\declaretheorem[name=Proposition,sharenumber=theorem]{prop}
\declaretheorem[name=Lemma,sharenumber=theorem]{lemma}
\declaretheorem[name=Corollary,sharenumber=theorem]{cor}
\declaretheoremstyle[headfont=\bfseries,bodyfont=\normalfont,style=definition]{}
\declaretheorem[name=Definition,sharenumber=theorem,style=definition]{definition}
\declaretheoremstyle[bodyfont=\normalfont,style=remark]{}
\declaretheorem[name=Remark,sharenumber=theorem,style=remark]{remark}
\newcommand{\name}{\textsc{HoroPCA}}
\newcommand{\piHoro}{\pi^{\operatorname{H}}}
\newcommand{\piGeo}{\pi^{\operatorname{G}}}
\newcommand{\piEuc}{\pi^{\operatorname{E}}}
\newcommand{\piMinkowski}{\pi^{\operatorname{Mink}}}
\newcommand{\R}{\mathbb{R}}
\renewcommand{\S}{\mathbb{S}}
\newcommand{\Sinf}{\S_\infty}
\renewcommand{\H}{\mathbb{H}}
\DeclareMathOperator{\arccosh}{cosh^{-1}}
\DeclareMathOperator{\arctanh}{tanh^{-1}}
\DeclareMathOperator*{\argmin}{argmin}
\DeclareMathOperator*{\argmax}{argmax}
\DeclareMathOperator{\GH}{GH}
\begin{document}

\title{HoroPCA: Hyperbolic Dimensionality Reduction via\\ Horospherical Projections}

\author[$\ddagger$]{Ines Chami\thanks{Equal contribution.}}
\author[$\dagger$]{Albert Gu$^*$}
\author[$\mathsection$]{Dat Nguyen$^*$}
\author[$\dagger$]{Christopher R{\'e}}
\affil[$\dagger$]{Department of Computer Science, Stanford University}
\affil[$\ddagger$]{Institute of Computational and Mathematical Engineering, Stanford University}
\affil[$\mathsection$]{Department of Mathematics, Stanford University\vspace{4pt}}

\affil[ ]{\footnotesize{\texttt{chami@cs.stanford.edu}, \texttt{albertgu@cs.stanford.edu}, \texttt{datpn2@gmail.com},\texttt{chrismre@cs.stanford.edu}}}

\maketitle

\begin{abstract}
This paper studies Principal Component Analysis (PCA) for data lying in hyperbolic spaces.
Given directions, PCA relies on: (1) a parameterization of subspaces spanned by these directions, (2) a method of \emph{projection} onto subspaces that preserves information in these directions, and (3) an \emph{objective} to optimize, namely the variance explained by projections.
We generalize each of these concepts to the hyperbolic space and propose \name{}, a method for hyperbolic dimensionality reduction.
By focusing on the core problem of extracting \emph{principal directions}, \name{} theoretically better preserves information in the original data such as distances, compared to previous generalizations of PCA.
Empirically, we validate that \name{} outperforms existing dimensionality reduction methods, significantly reducing error in distance preservation. 
As a data whitening method, it improves downstream classification by up to 3.9\% compared to methods that don't use whitening.
Finally, we show that \name{} can be used to visualize hyperbolic data in two dimensions.

\end{abstract}

\section{Introduction}
Learning representations of data in hyperbolic spaces has recently attracted important interest in Machine Learning (ML)~\cite{nickel2017poincare,sala2018representation} due to their ability to represent hierarchical data with high fidelity in low dimensions~\cite{sarkar2011low}. 
Many real-world datasets exhibit hierarchical structures, and hyperbolic embeddings have led to state-of-the-art results in applications such as question answering~\cite{tay2018hyperbolic}, node classification~\cite{chami2019hyperbolic}, link prediction~\cite{balazevic2019multi,chami2020low} and word embeddings~\cite{tifrea2018poincar}.
These developments motivate the need for algorithms that operate in hyperbolic spaces such as nearest neighbor search~\cite{krauthgamer2006algorithms,wu2020nearest}, hierarchical clustering~\cite{monath2019gradient,chami2020trees}, or dimensionality reduction which is the focus of this work. 

Euclidean Principal Component Analysis (PCA) is a fundamental dimensionality reduction technique which seeks directions that best explain the original data. %
PCA is an important primitive in data analysis and has many important uses such as (i) dimensionality reduction (e.g.\ for memory efficiency), (ii) data whitening and pre-processing for downstream tasks, and (iii) data visualization. 

Here, we seek a generalization of PCA to hyperbolic geometry.
Given a core notion of \textbf{directions}, PCA involves the following ingredients:
\setlist{nolistsep}
\begin{enumerate}[topsep=0ex,leftmargin=*]
    \item A nested sequence of affine \textbf{subspaces} (flags) spanned by a set of directions.
    \item A \textbf{projection} method which maps points to these subspaces while preserving information (e.g.\ dot-product) along each direction. 
    \item A \textbf{variance} objective to help choose the best directions.
\end{enumerate}

\begin{figure*}
    \centering
    \begin{subfigure}[b]{0.3250\textwidth}
    \centering
        \includegraphics[width=0.6\textwidth]{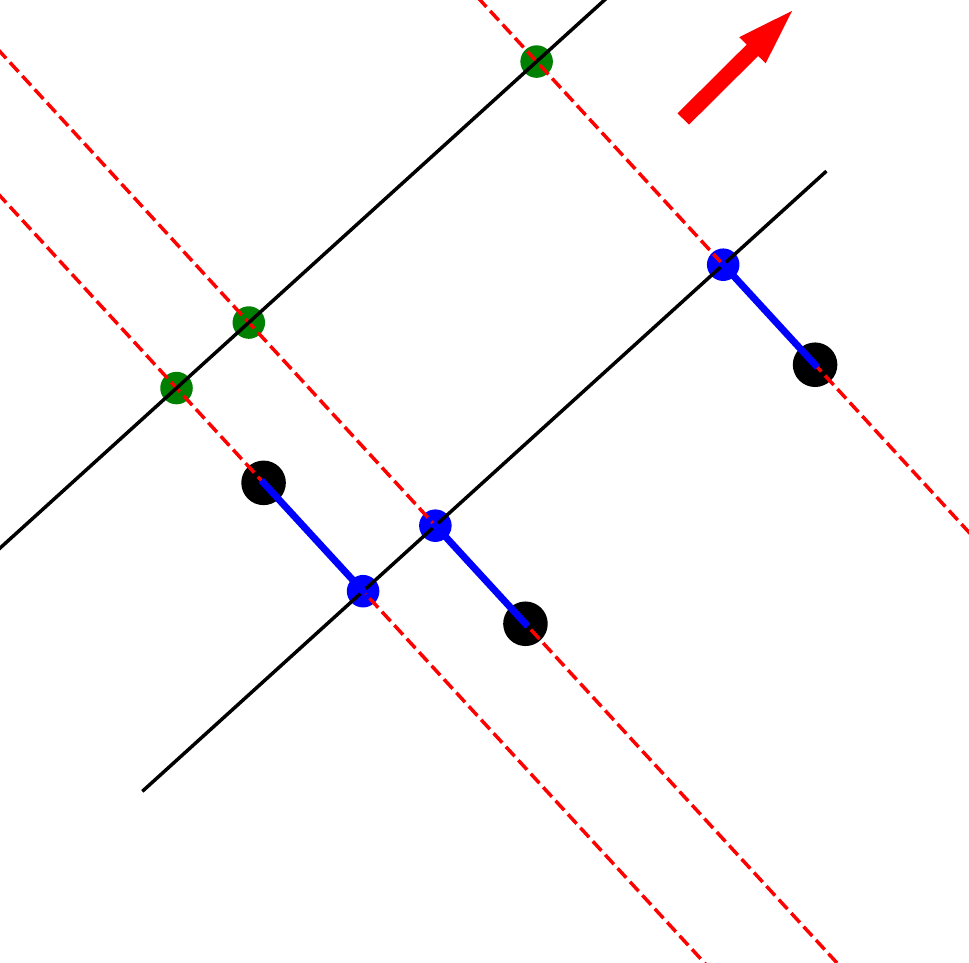}
        \caption{Euclidean projections.}
        \label{fig:euc_lines}
    \end{subfigure}
    \begin{subfigure}[b]{0.3250\textwidth}
    \centering
        \includegraphics[width=0.65\textwidth]{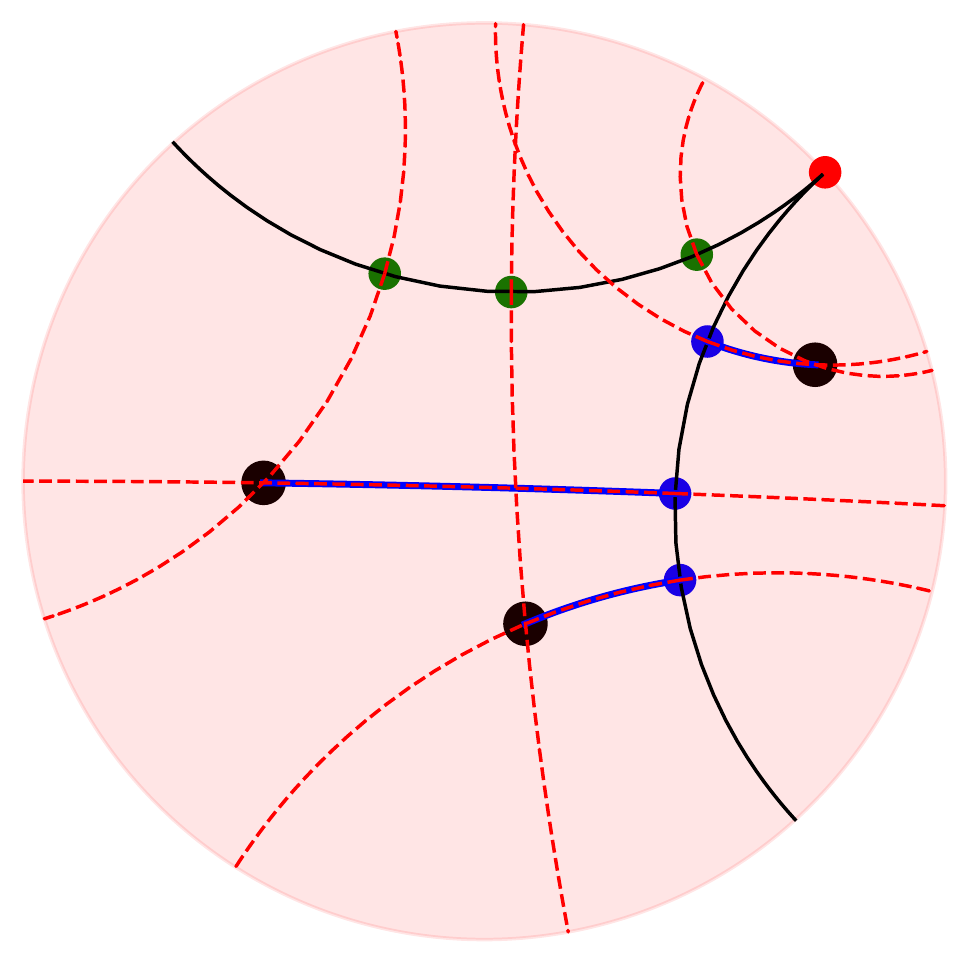}
        \caption{Hyperbolic geodesic projections.}
        \label{fig:geo_lines}
    \end{subfigure}
    \begin{subfigure}[b]{0.3250\textwidth}
    \centering
        \includegraphics[width=0.65\textwidth]{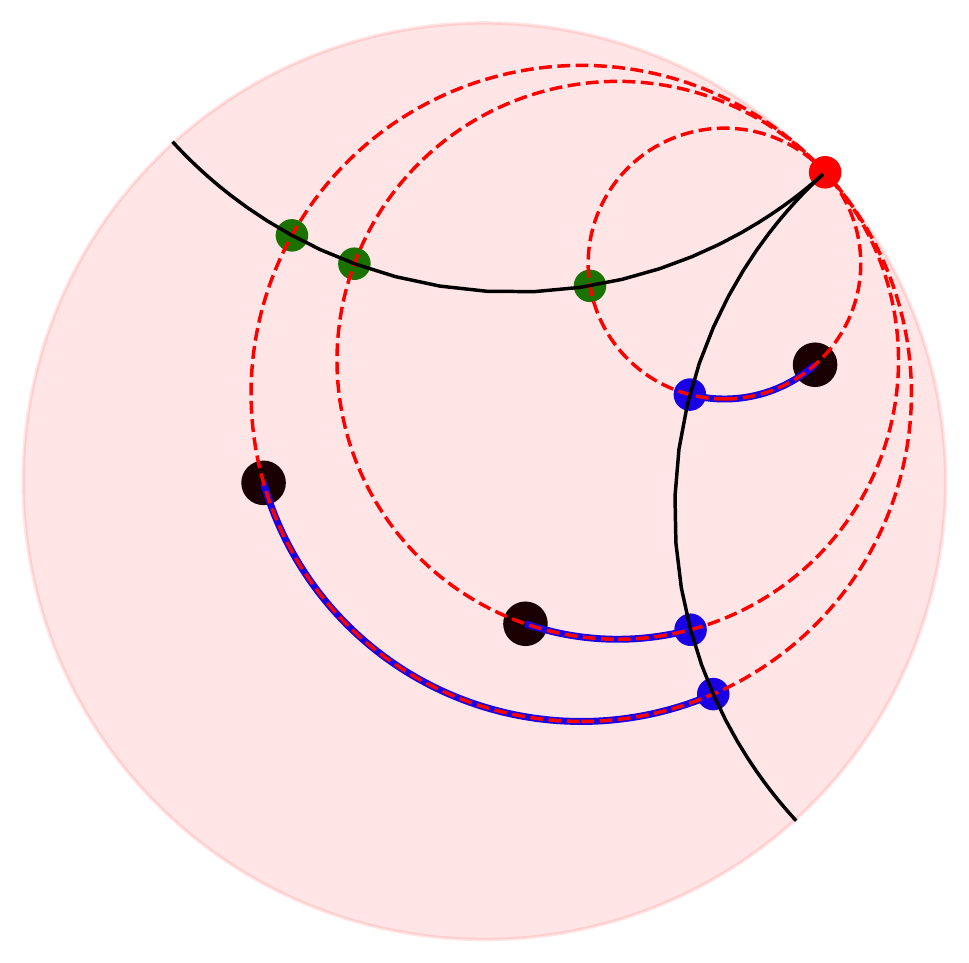}
        \caption{Hyperbolic horospherical projections.}
        \label{fig:horo_lines}
    \end{subfigure}
    \caption{
    Given datapoints (black dots), Euclidean and horospherical projections preserve distance information across different subspaces (black lines) pointing towards the same direction or point at infinity, while geodesic projections do not. 
    ($\text{a}$): Distances between points, and therefore the explained variance, are invariant to translations along orthogonal directions (red lines).
    ($\text{b}$): Geodesic projections do not preserve this property: distances between green projections are not the same as distances between blue projections.
    ($\text{c}$): 
    Horospherical projections project data by sliding it along the complementary direction defined by horospheres (red circles) and there exist an isometric mapping between the blue and green projections. 
    }\label{fig:lines}
\end{figure*}
The PCA algorithm is then defined as combining these primitives: given a dataset, it chooses directions that maximize the \emph{variance} of \emph{projections} onto a \emph{subspace} spanned by those directions, so that the resulting sequence of {directions} optimally explains the data.
Crucially, the algorithm only depends on the directions of the \emph{affine} subspaces and not their locations in space~(\cref{fig:euc_lines}). Thus, in practice we can assume that they all go through the origin (and hence become \emph{linear} subspaces), which greatly simplifies computations.

Generalizing PCA to manifolds is a challenging problem that has been studied for decades, starting with Principal Geodesic Analysis (PGA)~\cite{fletcher2004pga} which parameterizes subspaces using tangent vectors at the mean of the data, and maximizes distances from projections to the mean to find optimal directions. 
More recently, the Barycentric Subspace Analysis (BSA) method~\cite{pennec2018barycentric} was introduced.
It finds a more general parameterization of nested sequences of submanifolds by minimizing the unexplained variance.
However, both PGA and BSA map points onto submanifolds using closest-point or \emph{geodesic projections}, which do not attempt to preserve information along principal directions; for example, they cannot isometrically preserve hyperbolic distances between \emph{any} points and shrink all path lengths by an exponential factor~(\cref{prop:distortion}).

Fundamentally, all previous methods only look for \emph{subspaces} rather than directions that explain the data, and can perhaps be better understood as principal \emph{subspace} analysis rather than principal \emph{component} analysis. Like with PCA, they assume that all optimal subspaces go through a chosen \emph{base point}, but unlike in the Euclidean setting, this assumption is now unjustified: ``translating'' the submanifolds does not preserve distances between projections~(\cref{fig:geo_lines}). Furthermore, the dependence on the base point is sensitive: as noted above, the shrink factor of the projection depends exponentially on the distances between the subspaces and the data. Thus, having to choose a base point increases the number of necessary parameters and reduces stability.

Here, we propose \name{}, a dimensionality reduction method for data defined in hyperbolic spaces which better preserves the properties of Euclidean PCA.
We show how to interpret \textbf{directions} using \emph{points at infinity} (or \emph{ideal points}), which then allows us to generalize core properties of PCA:  
\setlist{nolistsep}
\begin{enumerate}[topsep=0ex,leftmargin=*]
    \item To generalize the notion of affine \textbf{subspace}, we propose parameterizing geodesic subspaces as the sets spanned by these ideal points.
    This yields multiple viable {nested subspaces} (flags)~(\cref{subsec:horo_components}). 
    \item To maximally preserve information in the original data, we propose a new \textbf{projection} method that uses horospheres, a generalization of complementary directions for hyperbolic space. 
    In contrast with geodesic projections, these projections exactly preserve information -- specifically, distances to ideal points -- along each direction. 
    Consequently, they preserve distances between points much better than geodesic projections~(\cref{subsec:horo_proj}).
    \item Finally, we introduce a simple generalization of explained \textbf{variance} that is a function of distances only and can be computed in hyperbolic space~(\cref{subsec:obj}). 
\end{enumerate}
Combining these notions, we propose an algorithm that seeks a sequence of \emph{principal components} that best explain variations in hyperbolic data.
We show that this formulation retains the \emph{location-independence} property of PCA: translating target submanifolds along orthogonal directions (horospheres) preserves projected distances~(\cref{fig:horo_lines}). In particular, the algorithm's objective depends only on the directions and not locations of the submanifolds~(\cref{subsec:horo_isom}).

We empirically validate \name{} on real datasets and for three standard PCA applications.
First, (i) we show that it yields much lower distortion and higher explained variance than existing methods, reducing average distortion by up to 77\%.
Second, (ii) we validate that it can be used for data pre-processing, improving downstream classification by up to 3.8\% in Average Precision score compared to methods that don't use whitening.
Finally, (iii) we show that the low-dimensional representations learned by \name{} can be visualized to qualitatively interpret hyperbolic data.

\section{Background}
We first review some basic notions from hyperbolic geometry; a more in-depth treatment is available in standard texts~\cite{lee2013smooth}.
We discuss the generalization of coordinates and directions in hyperbolic space and then review geodesic projections. 
We finally describe generalizations of the notion of mean and variance to non-Euclidean spaces.

\subsection{The Poincar\'e Model of Hyperbolic Space}
Hyperbolic geometry is a Riemannian geometry with constant negative curvature $-1$, where curvature measures deviation from flat Euclidean geometry. For easier visualizations, we work with the $d$-dimensional Poincar\'e model of hyperbolic space: $\H^d=\{{x}\in\R^d: \|x \|<1\}$,
where $\|\cdot \|$ is the Euclidean norm. 
In this model, the Riemannian distance can be computed in cartesian coordinates by:
\begin{equation}\label{eq:hyp_distance}
    d_\H (x, y)=\arccosh \left(1+2\frac{\|x-y\|^2}{(1-\|x\|^2)(1-\|y\|^2)} \right).
\end{equation}
\paragraph{Geodesics} Shortest paths in hyperbolic space are called \emph{geodesics}. In the Poincar\'e model, they are represented by straight segments going through the origin and circular arcs perpendicular to the boundary of the unit ball~(\cref{fig:horo_geo}).

\paragraph{Geodesic submanifolds} A submanifold $M \subset \H^d$ is called \emph{(totally) geodesic} if for every $x, y \in M$, the geodesic line connecting $x$ and $y$ belongs to $M$. This generalizes the notion of affine subspaces in Euclidean spaces. In the Poincar\'e model, geodesic submanifolds are represented by linear subspaces going through the origin and spherical caps perpendicular to the boundary of the unit ball.

\begin{table}[t]
    \centering
    \begin{tabular}{@{}lll@{}}
        \toprule
        & Euclidean & Hyperbolic \\
        \midrule
        Component & Unit vector $w$ & Ideal point $p$ \\
        Coordinate & Dot product $x \cdot w$ & Busemann func. $B_{p}(x)$ \\
        \bottomrule
    \end{tabular}
    \caption{Analogies of components and their corresponding coordinates, in both Euclidean and hyperbolic space.}
    \label{table:coord}
\end{table}
\subsection{Directions in Hyperbolic space}\label{subsec:ideal}
The notions of directions, and coordinates in a given direction can be generalized to hyperbolic spaces as follows.

\paragraph{Ideal points} As with parallel rays in Euclidean spaces, geodesic rays in $\H^d$ that stay close to each other can be viewed as sharing a common \emph{endpoint at infinity}, also called an \emph{ideal point}. Intuitively, ideal points represent \emph{directions} along which points in $\H^d$ can move toward infinity.
The set of ideal points $\Sinf^{d-1}$, called the \emph{boundary at infinity} of $\H^d$, is represented by the unit sphere $\Sinf^{d-1}=\{ \|x\| = 1\}$ in the Poincar\'e model.
We abuse notations and say that a geodesic submanifold $M \subset \H^d$ contains an ideal point $p$ if the boundary of $M$ in $\Sinf^{d-1}$ contains $p$.

\paragraph{Busemann coordinates} In Euclidean spaces, each \emph{direction} can be represented by a unit vector $w$. The \emph{coordinate} of a point $x$ in the direction of $w$ is simply the dot product $w \cdot x$. In hyperbolic geometry, directions can be represented by ideal points but dot products are not well-defined. Instead, we take a ray-based perspective:
note that in Euclidean spaces, if we shoot a ray in the direction of $w$ from the origin, the coordinate $w \cdot x$ can be viewed as the \emph{normalized distance to infinity in the direction of that ray}. 
In other words, as a point $y=tw$, ($t>0$) moves toward infinity in the direction of $w$:
\[
w \cdot x = \lim_{t \to \infty} \left( d(0, tw) -  d(x, tw)\right).
\]

This approach generalizes to other geometries: given a unit-speed geodesic ray $\gamma(t)$, the \emph{Busemann function} $B_\gamma (x)$ of $\gamma$ is defined as:\footnote{Note that compared to the above formula, the sign convention is flipped due to historical reasons.} 
\[
B_\gamma (x) = \lim_{t \to \infty} \left( d(x, \gamma(t)) - t \right).
\]
Up to an additive constant, this function only depends on the endpoint at infinity of the geodesic ray, and not the starting point $\gamma(0)$. Thus, given an ideal point $p$, we define the Busemann function $B_p(x)$ of $p$ to be the Busemann function of the geodesic ray that starts from the origin of the unit ball model and has endpoint $p$. Intuitively, it represents the \emph{coordinates} of $x$ in the direction of $p$.
In the Poincar\'e model, there is a closed formula:
\[
    B_{p}(x) = \ln \frac{ \| p - x \|^2 }{1 - \| x\|^2}.
\]

\paragraph{Horospheres} 
The level sets of Busemann functions $B_p (x)$ are called \emph{horospheres centered at} $p$. In this sense, they resemble spheres, which are level sets of distance functions. However, intrinsically as Riemannian manifolds, horospheres have curvature zero and thus also exhibit many properties of planes in Euclidean spaces.

Every geodesic with endpoint $p$ is orthogonal to every horosphere centered at $p$. Given two horospheres with the same center, every orthogonal geodesic segment connecting them has the same length. 
In this sense, \textit{concentric horospheres resemble parallel planes in Euclidean spaces}.
In the Poincar\'e model, horospheres are Euclidean spheres that touch the boundary sphere $\Sinf^{d-1}$ at their ideal centers~(\cref{fig:horo_geo}). Given an ideal point $p$ and a point $x$ in $\H^d$, there is a unique horosphere $S(p, x)$ passing through $x$ and centered at $p$.

\subsection{Geodesic Projections}

PCA uses orthogonal projections to project data onto subspaces. Orthogonal projections are usually generalized to other geometries as \emph{closest-point projections}.
Given a target submanifold $M$, each point $x$ in the ambient space is mapped to the closest-point to it in $M$:
\[
    \piGeo_M (x) = \argmin_{y \in M} d_M(x, y).
\]
One could view $\piGeo_M (\cdot)$ as the map that pushes each point $x$ along an orthogonal geodesic until it hits $M$. For this reason, it is also called \emph{geodesic projection}.
In the Poincar\'e model, these can be computed in closed-form (see~\cref{sec:app_exp}).

\subsection{Manifold Statistics}
PCA relies on data statistics which do not generalize easily to hyperbolic geometry.
One approach to generalize the arithmetic mean is to notice that it is the minimizer of the sum of squared distances to the inputs.
Motivated by this, the Fr\'echet mean~\cite{frechet1948elements} of a set of points $S$ in a Riemannian manifold $(M, d_M)$ is defined as:
\[
  \mu_M(S)\coloneqq \argmin_{y \in M} \sum_{x \in S}d_M(x, y)^2.\label{eq:frechet_mean} 
\]
This definition only depends on the intrinsic distance of the manifold. For hyperbolic spaces, since squared distance functions are convex, $\mu(S)$ always exists and is unique.\footnote{For more general geometries, existence and uniqueness hold if the data is well-localized ~\cite{kendall1990probability}.}
Analogously, the Fr\'echet variance is defined as:

\begin{equation}
    \sigma^2_M(S)\coloneqq\frac{1}{|S|} \sum_{x\in S}^N d_M(x, \mu(S))^2.
    \label{eq:frechet_var}
\end{equation}
We refer to~\cite{huckemann2020statistical} for a discussion on different intrinsic statistics in non-Euclidean spaces, and a study of their asymptotic properties.
\begin{figure}
    \begin{small}
    \centering
    \iftoggle{arxiv}{
      \includegraphics[width=2.5in]{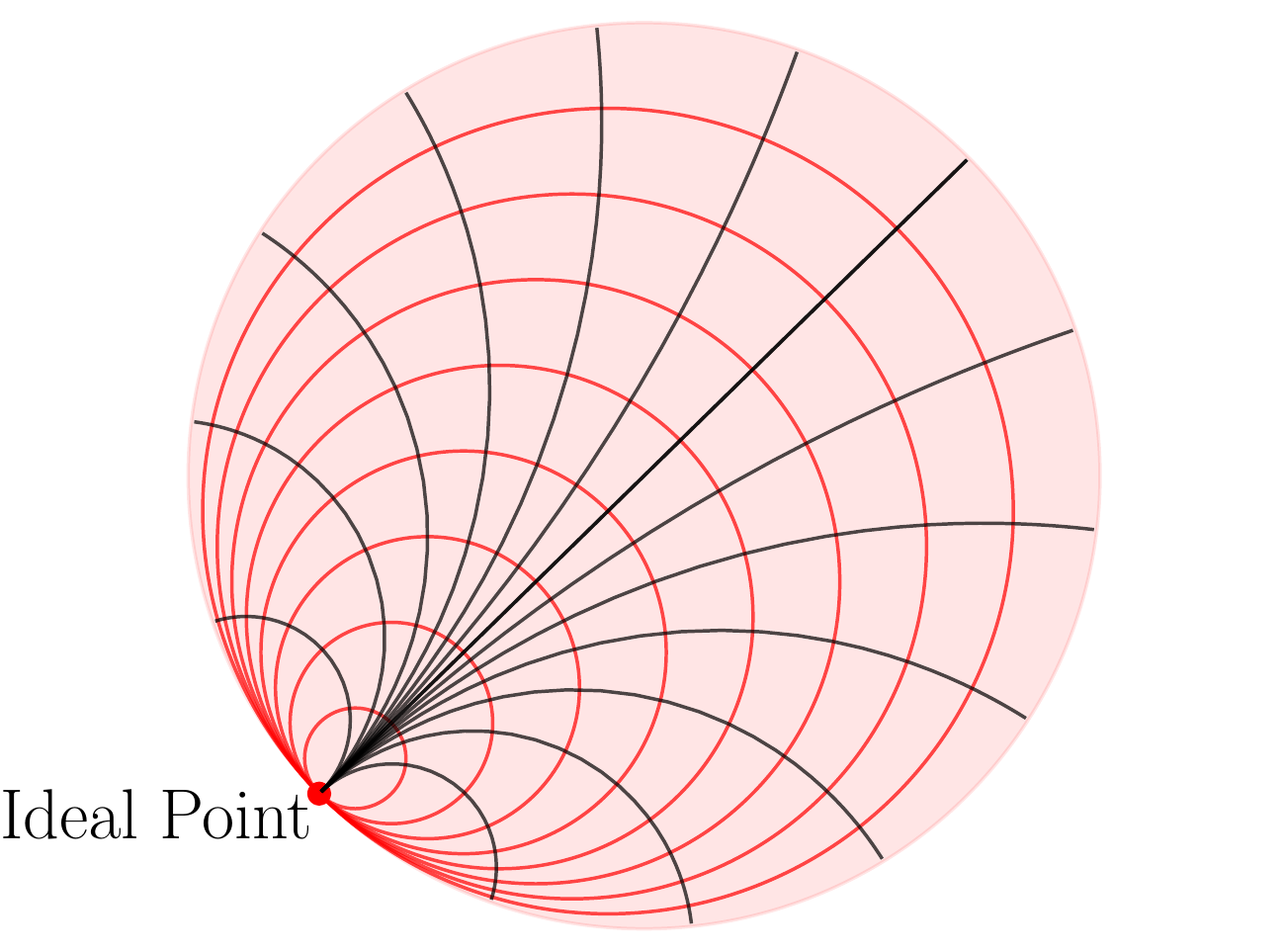}
    }{
      \includegraphics[width=0.55\columnwidth]{images/horo_geo.pdf}
    }
    \caption{Hyperbolic geodesics (black lines) going through an ideal point (in red), and horospheres (red circles) centered at that same ideal point.
    The hyperbolic lengths of geodesic segments between two horospheres are equal.}\label{fig:horo_geo}
    \end{small}
\end{figure}

\section{Generalizing PCA to the Hyperbolic Space}\label{sec:horo}
We now describe our approach to generalize PCA to hyperbolic spaces. 
The starting point of \name{} is to pick $1\le K\le d$ ideal points $p_1, \dots, p_K \in \Sinf^{d-1}$ to represent $K$ directions in hyperbolic spaces~(\cref{subsec:ideal}).
Then, we generalize the core concepts of Euclidean PCA.
In~\cref{subsec:horo_components}, we show how to generalize \emph{flags}.
In~\cref{subsec:horo_proj}, we show how to \emph{project} points onto the lower-dimensional submanifold spanned by a given set of directions, while preserving information along each direction.
In~\cref{subsec:obj}, we introducing a \emph{variance objective} to optimize and show that it is a function of the directions only. 

\subsection{Hyperbolic Flags}
\label{subsec:horo_components}
In Euclidean spaces, one can take the linear spans of more and more components to define a nested sequence of linear subspaces, called a flag. To generalize this to hyperbolic spaces, we first need to adapt the notion of linear/affine spans.
Recall that geodesic submanifolds are generalizations of affine subspaces in Euclidean spaces.
\begin{definition}
 Given a set of points $S$ (that could be inside $\H^d$ or on the boundary sphere $\Sinf^{d-1}$), the smallest geodesic submanifold of $\H^d$ that contains $S$ is called the \emph{geodesic hull} of $S$ and denoted by $\GH(S)$.
\end{definition}

Thus, given $K$ ideal points $p_1, p_2, \dots, p_K$ and a base point $b \in \H^d$, we can define a nested sequence of geodesic submanifolds $\GH(b,p_1) \subset \GH(b,p_1,p_2) \subset \dots \subset \GH(b,p_1, \dots, p_K)$. This will be our notion of flags.

\begin{remark}
The base point $b$ is only needed here for technical reasons, just like an origin $\mathbf{o}$ is needed to define linear spans in Euclidean spaces. We will see next that it does not affect the projection results or objectives~(\cref{thm:horosphere-proj-base-point-independent}). 
\end{remark}
\begin{remark}
We assume that none of $b, p_1, \dots, p_K$ are in the geodesic hull of the other $K$ points. This is analogous to being linearly independent in Euclidean spaces.
\end{remark}

\subsection{Projections via Horospheres}\label{subsec:horo_proj}
In Euclidean PCA, points are projected to the subspaces spanned by the given directions in a way that preserves coordinates in those directions. We seek a projection method in hyperbolic spaces with a similar property.

Recall that coordinates are generalized by Busemann functions~(\cref{table:coord}), and that horospheres are level sets of Busemann functions. Thus, we propose a projection that preserves coordinates by moving points along horospheres. It turns out that this projection method also preserves distances better than the traditional geodesic projection.

As a toy example, we first show how the projection is defined in the $K = 1$ case (i.e. projecting onto a geodesic) and why it tends to preserve distances well. 
We will then show how to use $K \ge 1$ ideal points simultaneously.

\subsubsection{Projecting onto \texorpdfstring{$K=1$}{K = 1} Directions}
For $K = 1$, we have one ideal point $p$ and base point $b$, and the geodesic hull $\GH(b,p)$ is just a geodesic $\gamma$. Our goal is to map every $x \in \H^d$ to a point $\piHoro_{b,p} (x)$ on $\gamma$ that has the same Busemann coordinate in the direction of $p$:
$$B_p(x) = B_p(\piHoro_{b,p} (x)).$$
Since level sets of $B_p (x)$ are horospheres centered at $p$, the above equation simply says that $\piHoro_{b,p}(x)$ belongs to the horosphere $S(p,x)$ centered at $p$ and passing through $x$. 
Thus, we define:
\begin{equation}
    \piHoro_{b,p} (x)\coloneqq \gamma \cap S(p, x).
\end{equation}
Another important property that $\piHoro_{b,p}(\cdot)$ shares with orthogonal projections in Euclidean spaces is that it preserves distances along a direction -- lengths of geodesic segments that point to $p$ are preserved after projection~(\cref{fig:horo_geo_proj}):
\begin{prop} \label{prop:horosphere-proj-preserv-best-case-1D}
For any $x \in \H^d$, if $y \in \GH(x,p)$ then:
\[
d_\H(\piHoro_{b,p}(x), \piHoro_{b,p}(y)) = d_\H(x,y).
\]
\end{prop}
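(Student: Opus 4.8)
The plan is to derive the statement from the two facts recalled in \cref{subsec:ideal}: every geodesic with ideal endpoint $p$ is orthogonal to every horosphere centered at $p$, and any two concentric horospheres are joined by orthogonal geodesic segments all of the same length. So the proof is essentially a bookkeeping argument identifying the relevant segments.

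First I would observe that the hypothesis $y \in \GH(x,p)$ says exactly that $x$ and $y$ lie on a common geodesic: $\GH(x,p)$ is the geodesic line $\delta$ through $x$ with ideal endpoint $p$. Along $\delta$ the Busemann function $B_p$ is strictly monotone — it decreases at unit rate as one moves toward $p$, which in the Poincar\'e model is also immediate from the closed formula for $B_p$ — so $\delta$ meets each horosphere centered at $p$ in a single point; in particular $x$ is the unique point of $\delta \cap S(p,x)$ and $y$ the unique point of $\delta \cap S(p,y)$. Hence the subsegment of $\delta$ from $x$ to $y$ is a geodesic segment with one endpoint on $S(p,x)$ and the other on $S(p,y)$, and since $\delta$ points at $p$ it is orthogonal to both horospheres; its length is $d_\H(x,y)$.

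Next I would run the same argument on the target geodesic $\gamma = \GH(b,p)$, which also has $p$ as an ideal endpoint. By definition of the horospherical projection, $\piHoro_{b,p}(x) = \gamma \cap S(p,x)$ and $\piHoro_{b,p}(y) = \gamma \cap S(p,y)$, and by monotonicity of $B_p$ along $\gamma$ these intersections are again single points lying in the correct order, so the subsegment of $\gamma$ between them is an orthogonal geodesic segment connecting the \emph{same} pair of concentric horospheres $S(p,x)$ and $S(p,y)$; its length is $d_\H(\piHoro_{b,p}(x), \piHoro_{b,p}(y))$. Applying the fact that all orthogonal geodesic segments between two fixed concentric horospheres have equal length then equates the length of this segment with that of the segment along $\delta$, giving $d_\H(\piHoro_{b,p}(x), \piHoro_{b,p}(y)) = d_\H(x,y)$. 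The case $x = y$ is trivial, since then $S(p,x) = S(p,y)$ and the two projections coincide.

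The only genuinely delicate point, and the one I would be most careful to justify, is that a geodesic with ideal endpoint $p$ crosses each horosphere centered at $p$ exactly once: this is what makes ``the segment between the two intersection points'' unambiguous and guarantees it is an honest orthogonal connecting segment that never re-enters the slab between the two horospheres. This reduces to strict monotonicity of $B_p$ along geodesics pointing at $p$. Alternatively, one can bypass it by noting that $\piHoro_{b,p}$ restricted to $\delta$ is a $B_p$-level-preserving bijection onto $\gamma$ that moves points along horospheres, and identifying it with the canonical length-preserving identification $\delta \to \gamma$ supplied directly by the concentric-horosphere fact.
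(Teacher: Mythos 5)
Your argument is correct and is essentially the paper's own proof: both identify the segment from $x$ to $y$ and the segment from $\piHoro_{b,p}(x)$ to $\piHoro_{b,p}(y)$ as orthogonal geodesic segments connecting the same pair of concentric horospheres $S(p,x)$ and $S(p,y)$, and then invoke the equal-length property recalled in \cref{subsec:ideal}. The extra care you take about each geodesic toward $p$ meeting each horosphere exactly once (via monotonicity of $B_p$) is a detail the paper leaves implicit, not a difference in approach.
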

\begin{proof}
This follows from the remark in \cref{subsec:ideal} about horospheres: every geodesic going through $p$ is orthogonal to every horosphere centered at $p$, and every orthogonal geodesic segment connecting concentric horospheres has the same length~(\cref{fig:horo_geo}). In this case, the segments from $x$ to $y$ and from $\piHoro_{b,p}(x)$ to $\piHoro_{b,p}(y)$ are two such segments, connecting $S(p,x)$ and $S(p,y)$.
\end{proof}
\begin{figure}
    \begin{small}
    \centering
    \iftoggle{arxiv}{
      \includegraphics[width=2in]{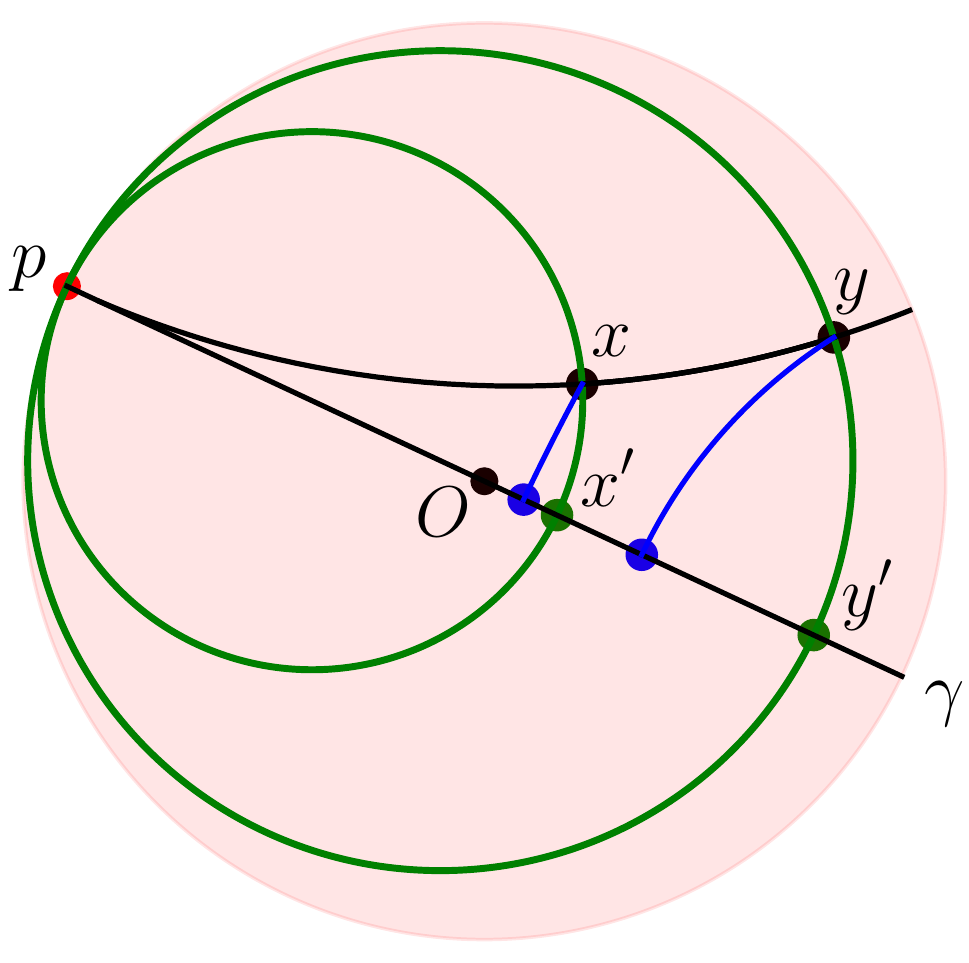}
    }
    {
      \includegraphics[width=0.42\columnwidth]{images/horo_geo_proj}
    }
    \caption{$x', y'$ are horospherical (green) projections of $x, y$. \cref{prop:horosphere-proj-preserv-best-case-1D} shows $d_\H(x',y') = d_\H (x,y)$. The distance between the two geodesic (blue) projections is smaller. 
    }\label{fig:horo_geo_proj}
    \end{small}
\end{figure}

\subsubsection{Projecting onto \texorpdfstring{$K>1$}{K > 1} Directions}\label{subsec:horo_proj_high}
We now generalize the above construction to projections onto higher-dimensional submanifolds. We describe the main ideas here; \cref{appendix-sec:horo} contains more details, including an illustration in the case $K=2$ (\cref{fig:horo_proj_3d}).

Fix a base point $b \in \H^d$ and $K > 1$ ideal points $\{p_1, \dots, p_K\}$. We want to define a map from $\H^d$ to $M \coloneqq \GH(b, p_1, \dots, p_K)$ that preserves the Busemann coordinates in the directions of $p_1, \dots, p_K$, i.e.:
\[
B_{p_j} (x)=B_{p_j}\left( \piHoro_{b,p_1,\dots,p_K}(x) \right)\text{ for every }j = 1, \dots, K.
\]

As before, the idea is to take the intersection with the horospheres centered at $p_j$'s and passing through $x$:
\begin{align*}
\pi^{\text{H}}_{b, p_1,\dots,p_K}: \H^d &\to M \\
x &\mapsto M \cap S(p_1, x) \cap \dots \cap S(p_K, x).
\end{align*}
It turns out that this intersection generally consists of two points instead of one. When that happens, one of them will be strictly closer to the base point $b$, and we define $\piHoro_{b,p_1,\dots,p_K}(x)$ to be that point.

As with~\cref{prop:horosphere-proj-preserv-best-case-1D}, $\piHoro_{b,p_1,\dots,p_K}(\cdot)$ preserves distances along $K$-dimensional manifolds %
(\cref{cor:horosphere-proj-preserve-some-distance}).
In contrast, geodesic projections in hyperbolic spaces \emph{never} preserve distances (except between points already in the target):

\begin{restatable}[]{prop}{geoshrink}
\label{prop:geodesic-proj-shrink-path}
Let $M \subset \H^d$ be a geodesic submanifold. Then every geodesic segment of distance at least $r$ from $M$ gets at least $\cosh(r)$ times shorter under the geodesic projection $\piGeo_M (\cdot)$ to $M$:
\[
\operatorname{length}(\piGeo_M (I)) \leq \frac{1}{\cosh(r)} \operatorname{length}(I).
\]\label{prop:distortion}
In particular, the shrink factor grows exponentially as the segment $I$ moves away from $M$. $\qedsymbol$
\end{restatable}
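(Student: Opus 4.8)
The plan is to deduce the stated length inequality from a pointwise operator-norm bound on the differential of $\piGeo_M$, namely $\|d\piGeo_M\|_x \le 1/\cosh\!\big(d_\H(x,M)\big)$ for every $x \notin M$, and to obtain that bound from the explicit form of the hyperbolic metric in Fermi (tubular-neighborhood) coordinates around $M$. Integrating the pointwise bound along a curve then gives the result.

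First I would record that $\H^d$ is a Hadamard manifold and $M$ is a closed, totally geodesic—hence convex—subset, so $\piGeo_M$ is well defined and smooth on $\H^d\setminus M$ (if $\dim M=0$ it is constant and the claim is trivial, so assume $\dim M=k\ge 1$). Next I would parametrize $\H^d\setminus M$ by the normal exponential map of $M$, writing a point as $(y,t,\theta)$ with $y\in M$, $t=d_\H(\cdot,M)>0$, and $\theta$ a unit normal direction at $y$; in these coordinates $\piGeo_M$ is simply $(y,t,\theta)\mapsto y$. Because the curvature is the constant $-1$ and $M$ is totally geodesic, the Jacobi fields along the normal geodesics are explicit, and the metric takes the block form
\[
ds^2 \;=\; dt^2 \;+\; \cosh^2(t)\,g_M \;+\; \sinh^2(t)\,g_{\S^{d-k-1}} .
\]
Given this, the pointwise bound is immediate: a tangent vector with components $(v_y,v_t,v_\theta)$ at $(y,t,\theta)$ satisfies $\|v\|^2\ge \cosh^2(t)\,\|v_y\|_M^2$, while $d\piGeo_M(v)=v_y$ has norm $\|v_y\|_M$, so $\|d\piGeo_M(v)\|\le \cosh(t)^{-1}\|v\|$. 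Since $\cosh$ is increasing on $[0,\infty)$, if every point of a curve $I$ lies at distance $\ge r$ from $M$ then $\|d\piGeo_M\|\le 1/\cosh(r)$ along $I$, and
\[
\operatorname{length}\!\big(\piGeo_M(I)\big)=\int \big\|d\piGeo_M\!\big(I'(s)\big)\big\|\,ds \;\le\; \frac{1}{\cosh(r)}\int \|I'(s)\|\,ds \;=\; \frac{1}{\cosh(r)}\operatorname{length}(I).
\]
(For $r=0$ this is just the $1$-Lipschitz property of projection onto a convex set; $\cosh(r)\sim\tfrac12 e^{r}$ gives the claimed exponential growth. Note the argument never uses that $I$ is geodesic.)

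The one step carrying real content—and hence the main obstacle to a self-contained proof—is the Fermi-coordinate metric formula, equivalently the claim that the normal exponential map of $M$ stretches $M$-tangent directions by exactly $\cosh t$ and unit-normal-sphere directions by $\sinh t$. I would justify it by solving the Jacobi equation $J''=J$ along a unit-speed normal geodesic: for a variation through normal geodesics emanating from a curve in $M$, the shape operator of $M$ vanishes (totally geodesic), so one can arrange $J(0)\in T_yM$, $J'(0)=0$, giving $J(t)=\cosh(t)\,P(t)$ with $P$ parallel; for a variation rotating $\theta$ at a fixed $y$ one has $J(0)=0$, $J'(0)\ne 0$, giving $J(t)=\sinh(t)\,P(t)$; and standard Gauss-lemma arguments make the three blocks mutually orthogonal. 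This is the only place a computation is needed; alternatively one could verify the metric directly in the upper half-space model with $M$ taken to be a coordinate totally geodesic subspace, but the Jacobi-field derivation is cleaner and uniform in $\dim M$.
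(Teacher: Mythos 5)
Your proof is correct, but it takes a genuinely different route from the paper's. The paper first establishes a \emph{global} planar distance bound, $\sinh(d'/2)\le \sinh(d/2)/\cosh r$, by synthetic hyperbolic trigonometry (the laws of sines and cosines applied to a quadrilateral with two right angles), reduces the higher-dimensional case to the planar one by rotating one point around the geodesic through the two foot points (its ``open book'' construction), and only then passes to the limit $y\to x$ to extract the infinitesimal bound $\|d\piGeo_M\|\le 1/\cosh\bigl(d_\H(x,M)\bigr)$, which it integrates along the path exactly as you do. You instead obtain that infinitesimal bound directly from the warped-product form of the metric in Fermi coordinates around a totally geodesic submanifold, justified by solving the Jacobi equation $J''=J$ with the two relevant initial conditions (vanishing shape operator for the $M$-tangent block giving the $\cosh t$ factor, and $J(0)=0$ for the normal-sphere block giving $\sinh t$); the identification of $\piGeo_M$ with $(y,t,\theta)\mapsto y$ is legitimate since the foot point of a normal geodesic in a Hadamard manifold is its base point. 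Both arguments are sound, and both in fact prove the inequality for arbitrary smooth paths, not only geodesic segments. What the paper's route buys is the stronger non-infinitesimal statement $\sinh(d'/2)\le\sinh(d/2)/\cosh r$ for actual pairs of projected points, which does not follow from the pointwise derivative bound alone because the connecting geodesic may dip closer to $M$ (the paper remarks on exactly this); what your route buys is a shorter, dimension-uniform derivation that bypasses the planar trigonometric lemma and the reduction step, at the cost of invoking the standard Jacobi-field computation of the metric on a tube.
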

The proof is in~\cref{appendix-sec:orthogonal-projection-distortion}.

\paragraph{Computation} Interestingly, horosphere projections can be computed without actually computing the horospheres. The key idea is that if we let $P = \GH(p_1, \dots, p_K)$ be the geodesic hull of the horospheres' centers, then the intersection $S(p_1, x) \cap \dots \cap S(p_K,x)$
is simply the orbit of $x$ under the rotations around $P$. (This is true for the same reason that spheres whose centers lie on the same axis must intersect along a circle around that axis). 
Thus, $\piHoro_{b,p_1,\dots,p_K}(\cdot)$ can be viewed as the map that rotates $x$ around until it hits $M$.
It follows that it can be computed by:
\setlist{nolistsep}
\begin{enumerate}[topsep=0ex,leftmargin=*]
    \item Find the geodesic projection $c = \piGeo_P(x)$ of $x$ onto $P$.
    \item Find the geodesic $\alpha$ on $M$ that is orthogonal to $P$ at $c$.
    \item Among the two points on $\alpha$ whose distance to $c$ equals $d_\H(x, c)$, returns the one closer to $b$.
\end{enumerate}

The detailed computations and proof that this recovers horospherical projections are provided in~\cref{appendix-sec:horo}.

\subsection{Intrinsic Variance Objective}\label{subsec:obj}
In Euclidean PCA, directions are chosen to maximally preserve information from the original data. 
In particular, PCA chooses directions that maximize the Euclidean variance of projected data. To generalize this to hyperbolic geometry, we define an analog of variance that is intrinsic, i.e. dependent only on the distances between data points. As we will see in \cref{sec:horopca}, having an intrinsic objective helps make our algorithm \emph{location (or base point) independent}.

The usual notion of Euclidean variance is the squared sum of distances to the mean of the projected datapoints. Generalizing this is challenging because non-Euclidean spaces do not have a canonical choice of mean.
Previous works have generalized variance either by using the \emph{unexplained variance} or \emph{Fr\'echet variance}.
The former is the squared sum of residual distances to the projections, and thus avoids computing a mean. However, it is not intrinsic.
The latter is intrinsic \cite{fletcher2004pga} but involves finding the Fr\'echet mean, which is not necessarily a canonical notion of mean and can only be computed by gradient descent.

Our approach uses the observation that in Euclidean space:
\[
\sigma^2(S)=\frac{1}{n}\sum_{x\in S} \left\| x - \mu(S) \right \|^2 = \frac{1}{n^2} \sum_{x,y\in S} \| x-y \|^2.
\]
Thus, we propose the following generalization of variance:
\begin{equation}
    \sigma_\mathbb{H}^2(S) = \frac{1}{n^2} \sum_{x,y\in S} d_\mathbb{H}(x, y)^2.\label{eq:variance-as-sum-of-squared-distances}
\end{equation}

This function agrees with the usual variance in Euclidean space, while being a function of distances only.
Thus it is well defined in non-Euclidean space, is easily computed,
and, as we will show next, has the desired invariance due to isometry properties of horospherical projections.

\section{\name{}}\label{sec:horopca}
\cref{sec:horo} formulated several simple primitives -- including directions, flags, projections, and variance -- in a way that is generalizable to hyperbolic geometry.
We now revisit standard PCA, showing how it has a simple definition that combines these primitives using optimization.
This directly leads to the full \name{} algorithm by simply using the hyperbolic analogs of these primitives. %

\paragraph{Euclidean PCA}
Given a dataset $S$ and a target dimension $K$, Euclidean PCA greedily finds a sequence of \emph{principal components} $p_1,\dots,p_K$ that maximizes the \emph{variance} of orthogonal \emph{projections} $\piEuc_{\mathbf{o},p_1,\dots,p_k}(\cdot)$ onto the linear \footnote{Here $\mathbf{o}$ denotes the origin, and $\piEuc_{\mathbf{o},p_1,\dots,p_k}(\cdot)$ denotes the projection onto the \emph{affine} span of $\{ \mathbf{o}, p_1,\dots,p_k \}$, which is equivalent to the \emph{linear} span of $\{p_1, \dots, p_k\}$.}
\emph{subspaces} spanned by these components:
\begin{align*}
    p_1 &=
\underset{||p||=1}{\mathrm{argmax}}\ \sigma^2(\piEuc_{\mathbf{o},p}(S))\\ \text{and}\ 
p_{k+1}&=\underset{||p||=1}{\mathrm{argmax}}\ \sigma^2(\piEuc_{\mathbf{o},p_1,\dots,p_k, p}(S)).
\end{align*}
Thus, for each $1\le k\le K$, $\{p_1, \dots, p_k\}$ is the optimal set of directions of dimension $k$.

\paragraph{\name{}}
Because we have generalized the notions of flag, projection, and variance to hyperbolic geometry,
the \name{} algorithm can be defined in the same fashion.
Given a dataset $S$ in $\H^d$ and a base point $b\in\H^d$, we seek a sequence of $K$ directions that maximizes the variance of horosphere-projected data:
\begin{equation}
    \begin{split}
        p_1 &= \argmax_{p\in\Sinf^{d-1}} \sigma^2_\H(\piHoro_{b,p}(S))\\ \text{and}\ 
        p_{k+1}&=\argmax_{p\in\Sinf^{d-1}} \sigma^2_\H(\piHoro_{b,p_1,\dots,p_k, p}(S)).
    \end{split}\label{eq:horo_opt}
\end{equation}

\paragraph{Base point independence}\label{subsec:horo_isom}
Finally, we show that algorithm~\eqref{eq:horo_opt} always returns the same results regardless of the choice of a base point $b \in \H^d$. Since our variance objective only depends on the distances between projected data points, it suffices to show that these distances do not depend on $b$.

\begin{restatable}[]{theorem}{isometry} \label{thm:horosphere-proj-base-point-independent}
For any $b,b'$ and any $x, y \in \H^d$, the two projected distances $d_\H (\piHoro_{b,p_1,\dots,p_K}(x), \piHoro_{b,p_1,\dots,p_K}(y))$ and 
$d_\H (\piHoro_{b',p_1,\dots,p_K}(x), \piHoro_{b',p_1,\dots,p_K}(y))$ are equal.
\end{restatable}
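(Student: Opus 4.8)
The plan is to realize the two projections as related by a single isometry. Write $P \coloneqq \GH(p_1,\dots,p_K)$, $M \coloneqq \GH(b,p_1,\dots,p_K)$ and $M' \coloneqq \GH(b',p_1,\dots,p_K)$, and build an isometry $\rho$ of $\H^d$ that fixes $P$ pointwise, satisfies $\rho(M)=M'$, and sends the component of $M\setminus P$ containing $b$ to the component of $M'\setminus P$ containing $b'$. Granting such a $\rho$, the theorem is immediate. Since $\rho$ fixes $P$ pointwise and $P$ contains points on every horosphere centered at each $p_j$ (a geodesic ray in $P$ aimed at $p_j$ crosses all of them), $\rho$ fixes a point on each such horosphere and hence preserves it setwise, i.e.\ $B_{p_j}\circ\rho=B_{p_j}$ for all $j$. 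Therefore $\rho$ maps $\bigcap_j S(p_j,x)$ to itself, so with $\rho(M)=M'$ it carries the intersection $M\cap\bigcap_j S(p_j,x)$ — the set of candidate values of $\piHoro_{b,p_1,\dots,p_K}(x)$ — onto $M'\cap\bigcap_j S(p_j,x)$; being an isometry it turns ``closer to $b$'' into ``closer to $\rho(b)$'', which by the side condition is the $b'$-side. Hence $\rho\bigl(\piHoro_{b,p_1,\dots,p_K}(x)\bigr)=\piHoro_{b',p_1,\dots,p_K}(x)$ for every $x$, and applying $\rho$ to both $x$ and $y$ gives the asserted equality of projected distances.

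The work — and, I expect, the only real difficulty — is in constructing $\rho$. Suppose first $K\ge 2$, so $P$ is a genuine geodesic submanifold. Let $c_0\coloneqq\piGeo_P(b)$ and let $N_{c_0}$, a totally geodesic copy of $\H^{\,d-K+1}$ through $c_0$ normal to $P$; then $b\in N_{c_0}$. Comparing tangent spaces at $c_0$ shows $\alpha\coloneqq M\cap N_{c_0}$ and $\alpha'\coloneqq M'\cap N_{c_0}$ are geodesics through $c_0$ (each is totally geodesic with one-dimensional tangent space, namely the normal line of $P$ inside $M$, resp.\ $M'$, at $c_0$), and $\alpha$ passes through $b$. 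The subgroup of isometries of $\H^d$ fixing $P$ pointwise acts on the normal slice $N_{c_0}$ as its full rotation group about $c_0$ — in the hyperboloid model, if $P$ corresponds to the linear subspace $V$ this subgroup is $\{1_V\}\times O(V^\perp)$, and $O(V^\perp)$ acts on every normal slice as the rotation group about that slice's center. Choosing the rotation that takes the ray of $\alpha$ through $b$ to the ray of $\alpha'$ on the same side of $P$ as $b'$ yields a $\rho$ fixing $P$ pointwise with $\rho(M)=\GH(P\cup\rho(\alpha))=\GH(P\cup\alpha')=M'$ and $\rho(b)$ on the $b'$-side of $M'$, as required. For $K=1$, $P=\{p_1\}$ is a single ideal point and $M,M'$ are geodesics ending at $p_1$; here I instead take $\rho$ parabolic fixing $p_1$ (in the upper half-space model with $p_1=\infty$, a horizontal Euclidean isometry), which acts transitively on geodesics ending at $p_1$ and fixes every horosphere centered there, so the same argument applies with the side condition vacuous and $M\cap S(p_1,x)$ a single point.

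The remaining ingredients are routine: isometries preserve Riemannian distance; an isometry fixing $p_j$ shifts $B_{p_j}$ by a constant, which vanishes as soon as it fixes one point (here a point of $P$); the two horospherical candidates lie on opposite sides of $P$ inside $M$ because a geodesic through the ``center'' $P$ of the orbit $\bigcap_j S(p_j,x)$ meets that orbit in exactly two points; and identifying these candidates with the output of the computation recipe is exactly the content of \cref{appendix-sec:horo}. The one load-bearing step is the transitivity used to build $\rho$ — that the pointwise stabilizer of $P$ acts on normal slices as the full rotation group, so that $M$ can be swung onto $M'$ with the correct orientation — together with the separate treatment of the degenerate $K=1$ geometry.
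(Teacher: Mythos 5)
Your proof is correct and takes essentially the same route as the paper: the paper's \cref{cor:horosphere-proj-base-point-independent} likewise shows that $\piHoro_{b,P}$ and $\piHoro_{b',P}$ differ by a rotation around the spine $P$ --- precisely the isometry $\rho$ you construct --- obtained there via the ``open book'' interpretation (\cref{thm:open-book-interpretation}) and the orbit lemma (\cref{lemma:intersection-of-horospheres-is-orbit}). Your explicit hyperboloid-model construction of $\rho$ and the separate parabolic treatment of $K=1$ supply details the paper leaves implicit, but the underlying mechanism is identical.
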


The proof is included in~\cref{appendix-sec:horo}. Thus, \name{} retains the \emph{location-independence} property of Euclidean PCA: only the \emph{directions} of target subspaces matter; their exact locations do not~(\cref{fig:lines}). Therefore, just like in the Euclidean setting, we can assume without loss of generality that $b$ is the origin $\mathbf{o}$ of the Poincar\'e model. 
This:
\setlist{nolistsep}
\begin{enumerate}[topsep=0ex,leftmargin=*]
    \item alleviates the need to use $d$ extra parameters to search for an appropriate base point, and
    \item simplifies computations, since in the Poincar\'e model, geodesics submanifolds that go through the origin are simply linear subspaces, which are easier to deal with.
\end{enumerate}
 
After computing the principal directions which span the target $M=\GH(\mathbf{o},p_1,\dots,p_K)$, the reduced dimensionality data can be found by applying an Euclidean rotation that sends $M$ to $\H^K$, which also preserves hyperbolic distances.

\section{Experiments}

\begin{table*}
\vskip 0.15in
\begin{center}
\begin{small}
\resizebox{\textwidth}{!}{\renewcommand{\arraystretch}{0.95}
  \begin{tabular}{lcccccccc}
    \toprule
    & \multicolumn{2}{c}{\textbf{Balanced Tree}} & \multicolumn{2}{c}{\textbf{Phylo Tree}}
    & \multicolumn{2}{c}{\textbf{Diseases}}  & \multicolumn{2}{c}{\textbf{CS Ph.D.}} \\
    \midrule
    & distortion ($\downarrow$) & variance ($\uparrow$) & distortion ($\downarrow$) & variance ($\uparrow$) & distortion ($\downarrow$) & variance ($\uparrow$) & distortion ($\downarrow$) & variance ($\uparrow$) \\
    \cmidrule(r){2-9}
    PCA & 0.84 & 0.34 & 0.94 & 0.40& 0.90&  0.26  & 0.84 &1.68 \\
    tPCA & 0.70 & 1.16 & 0.63 & 14.34 & 0.63 &  3.92 & 0.56 & 11.09\\
    PGA & 0.63 $\pm$ 0.07 &  2.11 $\pm$ 0.47 & 0.64 $\pm$ 0.01 & 15.29 $\pm$ 0.51 & 0.66 $\pm$ 0.02 & 3.16 $\pm$ 0.39& 0.73 $\pm$ 0.02 & 6.14 $\pm$ 0.60  \\
    PGA-Noise & 0.87 $\pm$ 0.08 & 0.29 $\pm$ 0.30 & 0.64 $\pm$ 0.02 & 15.08 $\pm$ 0.77 & 0.88 $\pm$ 0.04 & 0.53 $\pm$ 0.19 & 0.79 $\pm$ 0.03 & 4.58 $\pm$ 0.64 \\
    BSA & 0.50 $\pm$ 0.00 & 3.02 $\pm$ 0.01& {0.61} $\pm$ 0.03 &18.60 $\pm$ 1.16 & 0.52 $\pm$ 0.02 &5.95 $\pm$ 0.25 & {0.70} $\pm$ 0.01 &8.15 $\pm$ 0.96 \\
    BSA-Noise & 0.74 $\pm$ 0.12 & 1.06 $\pm$ 0.67 & 0.68 $\pm$ 0.02 & 13.71 $\pm$ 0.72 & 0.80 $\pm$ 0.11 & 1.62 $\pm$ 1.30 & 0.79 $\pm$ 0.02 & 4.41 $\pm$ 0.59 \\
    hAE & 0.26 $\pm$ 0.00 & 6.91 $\pm$ 0.00 & \underline{0.32} $\pm$ 0.04 & \underline{45.87} $\pm$ 3.52 & \underline{0.18} $\pm$ 0.00 & 14.23 $\pm$ 0.06 & \underline{0.37} $\pm$ 0.02 & \underline{22.12} $\pm$ 2.47 \\
    hMDS & \underline{0.22}& \textbf{7.54} & 0.74 &{40.51} & {0.21 } &\underline{15.05} & 0.83& {19.93}  \\
    \name{} & \textbf{0.19} $\pm$ 0.00 & \underline{7.15} $\pm$ 0.00 & \textbf{0.13} $\pm$ 0.01 & \textbf{69.16} $\pm$ 1.96 & \textbf{0.15} $\pm$ 0.01 & \textbf{15.46} $\pm$ 0.19 & \textbf{0.16} $\pm$ 0.02 & \textbf{36.79} $\pm$ 0.70 \\
    \bottomrule
    \end{tabular}
    }
    \captionof{table}{Dimensionality reduction results on 10-dimensional hyperbolic embeddings reduced to two dimensions. 
    Results are averaged over 5 runs for non-deterministic methods. Best in \textbf{bold} and second best \underline{underlined}.  
    }\label{tab:pytorch_reduction}
\end{small}
\end{center}
\vskip -0.1in
\end{table*}

We now validate the empirical benefits of \name{} on three PCA uses.
First, for dimensionality reduction, \name{} preserves information (distances and variance) better than previous methods which are sensitive to base point choices and distort distances more~(\cref{subsec:exp_reduce}).
Next, we validate that our notion of hyperbolic coordinates captures variation in the data and can be used for whitening in classification tasks~(\cref{subsec:exp_cls}).
Finally, we visualize the representations learned by \name{} in two dimensions~(\cref{subsec:exp_viz}).

\subsection{Experimental Setup}\label{subsec:exp_setup}
\paragraph{Baselines}
We compare \name{} to several dimensionality reduction methods, including:
\begin{enumerate*}[label=(\arabic*)]
    \item Euclidean PCA, which should perform poorly on hyperbolic data,
    \item Exact PGA,
    \item Tangent PCA (tPCA), which approximates PGA by moving the data in the tangent space of the Fr\'echet mean and then solves Euclidean PCA,
    \item BSA,
    \item Hyperbolic Multi-dimensional Scaling (hMDS)~\cite{sala2018representation}, which takes a distance matrix as input and recovers a configuration of points that best approximates these distances,
    \item Hyperbolic autoencoder (hAE) trained with gradient descent~\cite{ganea2018hyperbolic,hinton2006reducing}.
\end{enumerate*}
To demonstrate their dependence on base points, we also include two baselines that perturb the base point in PGA and BSA.
We open-source our implementation\footnote{\url{https://github.com/HazyResearch/HoroPCA}} and refer to~\cref{sec:app_exp} for implementation details on how we implemented all baselines and \name{}. 

\paragraph{Datasets}
For dimensionality reduction experiments, we consider standard hierarchical datasets previously used to evaluate the benefits of hyperbolic embeddings. 
More specifically, we use the datasets in~\cite{sala2018representation} including a fully balanced tree, a phylogenetic tree, a biological graph comprising of diseases' relationships and a graph of Computer Science (CS) Ph.D. advisor-advisee relationships. 
These datasets have respectively 40, 344, 516 and 1025 nodes, and we use the code from~\cite{gu2018learning} to embed them in the Poincar\'e ball.
For data whitening experiments, we reproduce the experimental setup from~\cite{cho2019large} and use the Polbooks, Football and Polblogs datasets which have 105, 115 and 1224 nodes each.
These real-world networks are embedded in two-dimensions using~\citet{chamberlain2017neural}'s embedding method.

\paragraph{Evaluation metrics}
To measure distance-preservation after projection, we use average distortion. 
If $\pi(\cdot)$ denotes a mapping from high- to low-dimensional representations, the average distortion of a dataset $S$ is computed as:
\[
    \frac{1}{\binom{|S|}{2}} \sum_{x\neq y\in S}\frac{|d_\H(\pi(x), \pi(y))-d_\H(x, y)|}{d_\H(x, y)}.
\]
We also measure the Fr\'echet variance in~\cref{eq:frechet_var}, which is the analogue of the objective that Euclidean PCA optimizes\footnote{All mentioned PCA methods, including \name{}, optimize for some forms of variance but \emph{not} Fr\'echet variance or distortion.}. 
Note that the mean in~\cref{eq:frechet_var} cannot be computed in closed-form and we therefore compute it with gradient-descent.

\subsection{Dimensionality Reduction}\label{subsec:exp_reduce}
We report metrics for the reduction of 10-dimensional embeddings to two dimensions in~\cref{tab:pytorch_reduction}, and refer to~\cref{sec:app_exp} for additional results, such as more component and dimension configurations.
All results suggest that \name{} better preserves information contained in the high-dimensional representations.  

On distance preservation, \name{} outperforms all methods with significant improvements on larger datasets. 
This supports our theoretical result that horospherical projections better preserve distances than geodesic projections. 
Furthermore, \name{} also outperforms existing methods on the explained Fr\'echet variance metric on all but one dataset. 
This suggests that our distance-based formulation of the variance~(\cref{eq:variance-as-sum-of-squared-distances}) effectively captures variations in the data. 
We also note that as expected, both PGA and BSA are sensitive to base point choices: adding Gaussian noise to the base point leads to significant drops in performance. 
In contrast, \name{} is by construction base-point independent. %

\subsection{Hyperbolic Data Whitening}\label{subsec:exp_cls}
An important use of PCA is for data whitening, as it allows practitioners to remove noise and decorrelate the data, which can improve downstream tasks such as regression or classification. Recall that standard PCA data whitening consists of (i) finding principal directions that explain the data, (ii) calculating the coordinates of each data point along these directions, and (iii) normalizing the coordinates for each direction (to have zero mean and unit variance).

Because of the close analogy between \name{} and Euclidean PCA, these steps can easily map to the hyperbolic case, where we (i) use \name{} to find principal directions (ideal points), (ii) calculate the Busemann coordinates along these directions, and (iii) normalize them as Euclidean coordinates. Note that this yields Euclidean representations, which allow leveraging powerful tools developed specifically for learning on Euclidean data.

We evaluate the benefit of this whitening step on a simple classification task. We compare to directly classifying the data with Euclidean Support Vector Machine (eSVM) or its hyperbolic counterpart (hSVM), and also to whitening with tPCA.
Note that most baselines in~\cref{subsec:exp_setup} are incompatible with data whitening: hMDS does not learn a transformation that can be applied to unseen test data, while methods like PGA and BSA do not naturally return Euclidean coordinates for us to normalize. To obtain another baseline, we use a logarithmic map to extract Euclidean coordinates from PGA. 
\begin{table}[t]
\vskip 0.15in
\begin{center}
\begin{small}
  \centering
  \begin{tabular}{lccc}
    \toprule
     & \textbf{Polbooks} & \textbf{Football} & \textbf{Polblogs} \\
    \midrule
    eSVM & \underline{69.9}$\pm$ 1.2 & 20.7 $\pm$ 3.0 & 92.3 $\pm$ 1.5 \\
    hSVM & 68.3 $\pm$ 0.6 & 20.9 $\pm$ 2.5 & {92.2} $\pm$ 1.6 \\ 
    tPCA+eSVM & 68.5 $\pm$ 0.9 & {21.2} $\pm$ 2.2 & \underline{92.4} $\pm$ 1.5 \\
    PGA+eSVM & 64.4 $\pm$ 4.1 & \underline{21.7} $\pm$ 2.2 & 82.3 $\pm$ 1.2\\
    \name{}+eSVM & \textbf{72.2} $\pm$ 2.8 & \textbf{25.0} $\pm$ 1.0 & \textbf{92.8} $\pm$ 0.9 \\
    \bottomrule
    \end{tabular}
    \captionof{table}{Data whitening experiments. We report classification accuracy averaged over 5 embedding configurations.
    Best in \textbf{bold} and second best \underline{underlined}. 
    }\label{tab:cls}
\end{small}
\end{center}
\vskip -0.2 in
\end{table}

We reproduce the experimental setup from~\cite{cho2019large} who split the datasets in 50\% train and 50\% test sets, run classification on 2-dimensional embeddings and average results over 5 different embedding configurations as was done in the original paper~(\cref{tab:cls}).
\footnote{Note that the results slightly differ from~\cite{cho2019large} which could be because of different implementations or data splits.}
\name{} whitening improves downstream classification on all datasets compared to eSVM and hSVM or tPCA and PGA whitening. 
This suggests that \name{} can be leveraged for hyperbolic data whitening. 
Further, this confirms that Busemann coordinates do capture variations in the original data. 

\subsection{Visualizations}\label{subsec:exp_viz}
When learning embeddings for ML applications (e.g. classification), increasing the dimensionality can significantly improve the embeddings' quality. 
To effectively work with these higher-dimensional embeddings, it is useful to visualize their structure and organization, which often requires reducing their representations to two or three dimensions.
Here, we consider embeddings of the mammals subtree of the Wordnet noun hierarchy learned with the algorithm from~\cite{nickel2017poincare}. 
We reduce embeddings to two dimensions using PGA and \name{} and show the results in~\cref{fig:poincare_wordnet}. We also include more visualizations for PCA and BSA in~\cref{fig:poincare_wordnet_full} in the Appendix. %
As we can see, the reduced representations obtained with \name{} yield better visualizations. 
For instance, we can see some hierarchical patterns such as ``feline hypernym of cat" or ``cat hypernym of burmese cat". 
These patterns are harder to visualize for other methods, since these do not preserve distances as well as \name{}, e.g.\ PGA has 0.534 average distortion on this dataset compared to 0.078 for \name{}. 

    \begin{figure*}[t]
        \centering
        \begin{subfigure}[b]{0.49\textwidth}  
            \centering 
            \includegraphics[width=\textwidth]{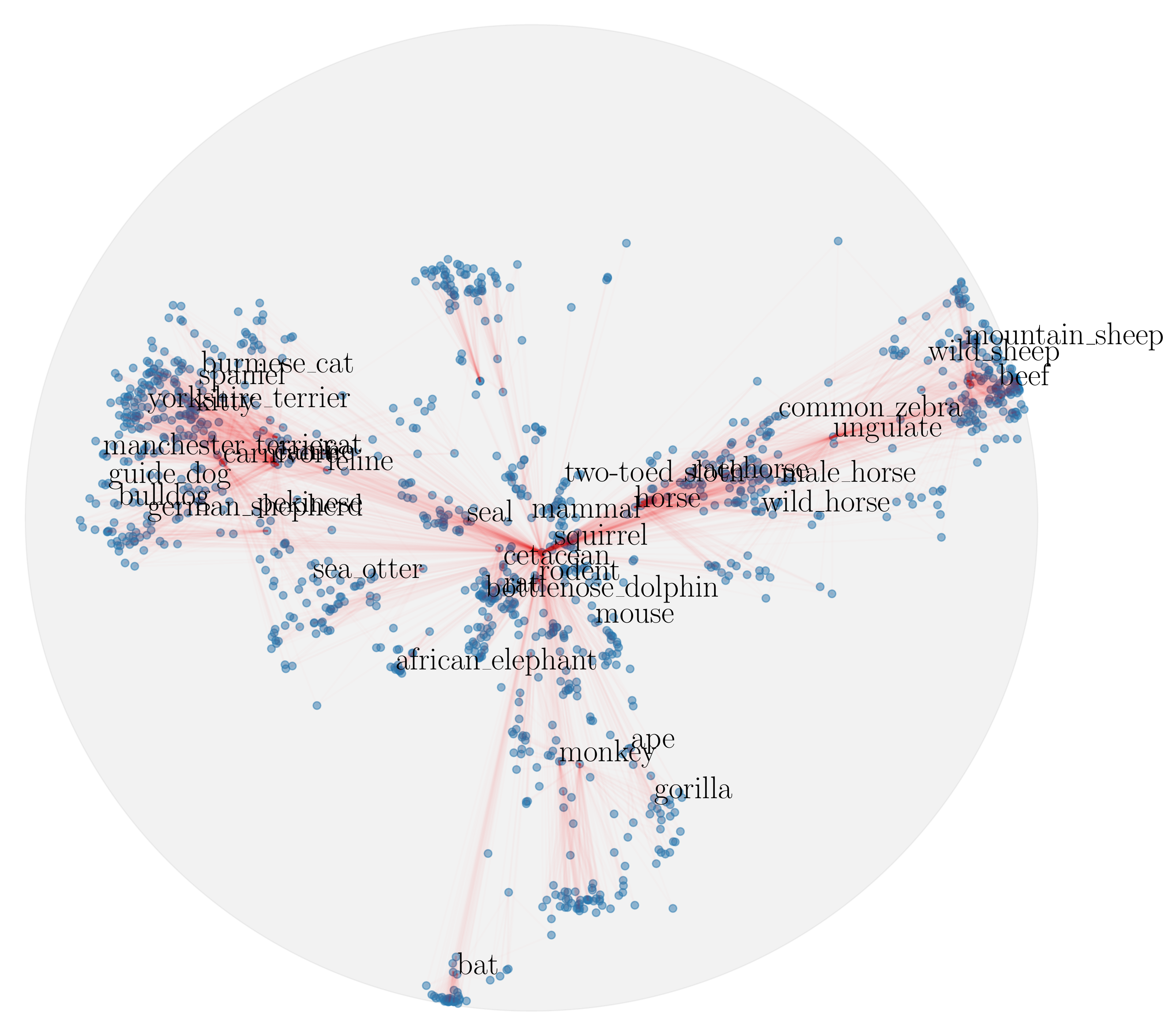}
            \caption[]%
            {{\small PGA (average distortion: 0.534)}}  
        \end{subfigure}
        \begin{subfigure}[b]{0.49\textwidth}   
            \centering 
            \includegraphics[width=\textwidth]{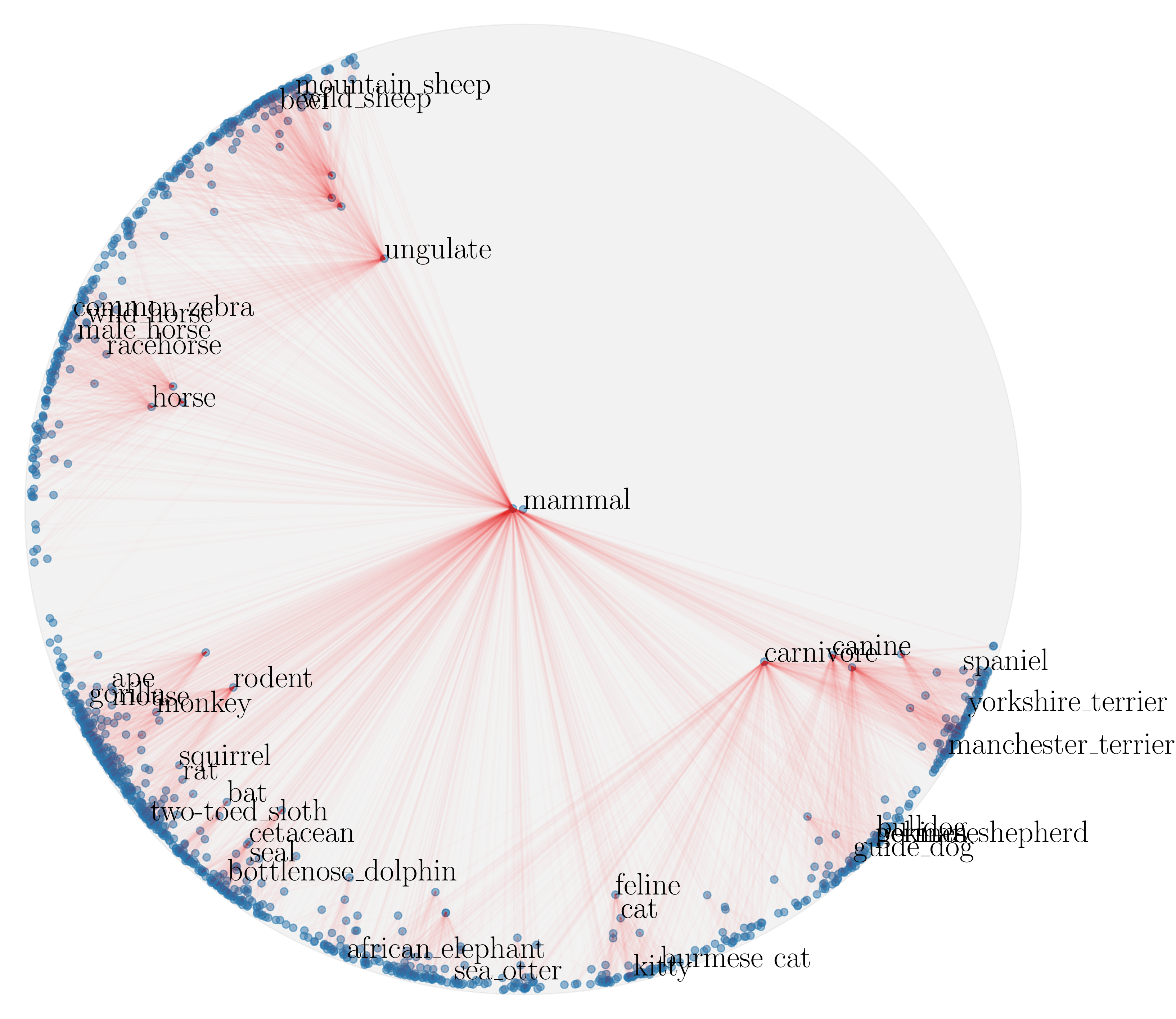}
            \caption[]%
            {{\small \name{} (average distortion: 0.078)}}     
        \end{subfigure}
        \caption[]
        {\small Visualization of embeddings of the WordNet mammal subtree computed by reducing 10-dimensional Poincar\'e embeddings~\cite{nickel2017poincare}.} 
        \label{fig:poincare_wordnet}
    \end{figure*}

\section{Related Work}\label{sec:related}
\paragraph{PCA methods in Riemannian manifolds}
We first review some approaches to extending PCA to general Riemannian geometries, of which hyperbolic geometry is a special case. For a more detailed discussion, see~\citet{pennec2018barycentric}.
The simplest such approach is tangent PCA (tPCA), which maps the data to the tangent space at the Fr\'echet mean $\mu$ using the logarithm map, then applies Euclidean PCA. A similar approach, Principal Geodesic Analysis (PGA)~\cite{fletcher2004pga}, seeks geodesic subspaces at $\mu$ that minimize the sum of squared Riemannian distances to the data. Compared to tPCA, PGA searches through the same subspaces but uses a more natural loss function.

Both PGA and tPCA project on submanifolds that go through the Fr\'echet mean. When the data is not well-centered, this may be sub-optimal, and Geodesic PCA (GPCA) was proposed to alleviate this issue~\cite{huckemann2006principal,huckemann2010intrinsic}. GPCA first finds a geodesic $\gamma$ that best fits the data, then finds other orthogonal geodesics that go through \emph{some} common point $b$ on $\gamma$. In other words, GPCA removes the constraint of PGA that $b$ is the Fr\'echet mean.
Extensions of GPCA have been proposed such as probabilistic methods~\cite{zhang2013probabilistic} and Horizontal Component Analysis~\cite{sommer2013horizontal}.

\citet{pennec2018barycentric} proposes a more symmetric approach. 
Instead of using the exponential map at a base point, it parameterizes $K$-dimensional subspaces as the \emph{barycenter} loci of $K+1$ points. 
Nested sequences of subspaces (flags) can be formed by simply adding more points.
In hyperbolic geometry, this construction coincides with the one based on geodesic hulls that we use in \cref{sec:horo}, except that it applies to points inside $\H^d$ instead of ideal points and thus needs more parameters to parameterize a flag (see \cref{remark:flag-parameter-count}).

By considering a more general type of submanifolds, \citet{hauberg2016principalcurves} gives another way to avoid the sensitive dependence on Fr\'echet mean. However, its bigger search space also makes the method computationally expensive, especially when the target dimension $K$ is bigger than $1$.

In contrast with all methods so far, \name{} relies on horospherical projections instead of geodesic projections. This yields a generalization of PCA that depends only on the directions and not specific locations of the subspaces.

\paragraph{Dimension reduction in hyperbolic geometry}
We now review some dimension reduction methods proposed specifically for hyperbolic geometry.
\citet{cvetkovski2011multidimensional} and \citet{sala2018representation} are examples of hyperbolic multidimensional scaling methods, which seek configurations of points in lower-dimensional hyperbolic spaces whose pairwise distances best approximate a given dissimilarity matrix. Unlike \name{}, they do not learn a projection that can be applied to unseen data.

\citet{tran2008dimensionality} constructs a map to lower-dimensional hyperbolic spaces whose preimages of compact sets are compact. Unlike most methods, it is data-agnostic and does not optimize any objective.

\citet{benjamini2009dimension} adapt the Euclidean Johnson–Lindenstrauss transform to hyperbolic geometry and obtain a distortion bound when the dataset size is not too big compared to the target dimension.
They do not seek an analog of Euclidean directions or projections, but nevertheless implicitly use a projection method based on pushing points along \emph{horocycles}, which shares many properties with our horo\emph{spherical} projections. In fact, the latter converges to the former as the ideal points get closer to each other.

\paragraph{From hyperbolic to Euclidean} \citet{liu2019hyperbolic} use ``distances to centroids'' to compute Euclidean representations of hyperbolic data. The Busemann functions we use bear resemblances to these centroid-based functions but are better analogs of coordinates along given directions, which is a central concept in PCA, and have better regularity properties \cite{busemann1955}. Recent works have also used Busemann functions for hyperbolic prototype learning~\cite{keller2020theory, wang2021laplacian}. These works do not define projections to lower-dimensional hyperbolic spaces. In contrast, \name{} naturally returns both hyperbolic representations (via horospherical projections) and Euclidean representations (via Busemann coordinates). This allows leveraging techniques in both settings.

\section{Conclusion}
We proposed \name{}, a method to generalize PCA to hyperbolic spaces. 
In contrast with previous PCA generalizations, \name{} preserves the core location-independence PCA property. 
Empirically, \name{} significantly outperforms previous methods on the reduction of hyperbolic data.
Future extensions of this work include deriving a closed-form solution, analyzing the stability properties of \name{}, or using the concepts introduced in this work to derive efficient nearest neighbor search algorithms or neural network operations. 

\section*{Acknowledgements}
We gratefully acknowledge the support of NIH under No. U54EB020405 (Mobilize), NSF under Nos. CCF1763315 (Beyond Sparsity), CCF1563078 (Volume to Velocity), and 1937301 (RTML); ONR under No. N000141712266 (Unifying Weak Supervision); the Moore Foundation, NXP, Xilinx, LETI-CEA, Intel, IBM, Microsoft, NEC, Toshiba, TSMC, ARM, Hitachi, BASF, Accenture, Ericsson, Qualcomm, Analog Devices, the Okawa Foundation, American Family Insurance, Google Cloud, Swiss Re, Total, the HAI-AWS Cloud Credits for Research program, the Stanford Data Science Initiative (SDSI), and members of the Stanford DAWN project: Facebook, Google, and VMWare. The Mobilize Center is a Biomedical Technology Resource Center, funded by the NIH National Institute of Biomedical Imaging and Bioengineering through Grant P41EB027060. The U.S. Government is authorized to reproduce and distribute reprints for Governmental purposes notwithstanding any copyright notation thereon. Any opinions, findings, and conclusions or recommendations expressed in this material are those of the authors and do not necessarily reflect the views, policies, or endorsements, either expressed or implied, of NIH, ONR, or the U.S. Government.

\bibliographystyle{icml2021/icml2021}
\bibliography{ref}

\clearpage
\appendix
\section{Horospherical Projection: Proofs and Discussions}\label{appendix-sec:horo}
We show that the horospherical projection $\piHoro_{b,p_1, \dots, p_K}$ in \cref{subsec:horo_proj_high} is well-defined, shares many nice properties with Euclidean orthogonal projections, and has a closed-form expression.

More specifically, this section is organized as follows. In \cref{appendix-subsec:horosphere-proj-well-defined}, we show that horospherical projections are well-defined (\cref{thm:horosphere-proj-well-defined}). In \cref{appendix-subsec:horosphere-proj-properties}, we show that they are base-point independent (\cref{cor:horosphere-proj-base-point-independent}), non-expanding (\cref{cor:horosphere-proj-non-expanding}), and distance-preserving along a family of $K$-dimensional submanifolds (\cref{cor:horosphere-proj-preserve-some-distance}). Finally, in \cref{appendix-subsec:horosphere-proj-algo}, we explain how they can be computed efficiently in the hyperboloid model.

For an illustration in the case of $K=2$ ideal points in $\H^3$, see \cref{fig:horo_proj_3d}. This figure might help explain the intuitions behind the theorems in \cref{appendix-subsec:horosphere-proj-well-defined} and \cref{appendix-subsec:horosphere-proj-properties}.

\subsection{Well-definedness} \label{appendix-subsec:horosphere-proj-well-defined}

Recall that given a base point $b \in \H^d$ and $K > 1$ ideal points $\{p_1,  \dots, p_K\}$, we would like to define $\piHoro_{b, p_1, \dots, p_K}$ by
$$x \to M \cap S(p_1, x) \cap S(p_2, x) \cap \dots \cap S(p_K, x),$$
where $M = \GH(b, p_1, \dots, p_K)$ is the target submanifold and $S(p_j, x)$ is the horosphere centered at $p_j$ and passing through $x$. For this definition to make sense, the intersection in the right hand side must contain exactly one point for each $x \in \H^d$. Unfortunately, this is not the case: In fact, the intersection generally consists of two points. Nevertheless, we will show that there is a consistent way to choose one from these two points, making the function $\piHoro_{b,p_1,\dots,p_K}$ well-defined. This is the result of \cref{thm:horosphere-proj-well-defined}.

First, to understand the above intersection, we give a more concrete description of $\cap_j S(p_j, x)$.

\begin{lemma} \label{lemma:intersection-of-horospheres-is-orbit}
    Let $P = \GH(p_1, p_2, \dots, p_K)$. Then for every $x \in \H^d$, the intersection of horospheres
	$$S(x) = S(p_1, x) \cap S(p_2, x) \cap \dots \cap S(p_k, x)$$
	is precisely the orbit of $x$ under the group $G$ of rotations around $P$.
\end{lemma}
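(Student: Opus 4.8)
The plan is to show both inclusions: that the orbit $G \cdot x$ lies inside $S(x) = \bigcap_j S(p_j, x)$, and conversely that every point of $S(x)$ can be reached from $x$ by a rotation fixing $P$ pointwise. The first inclusion is the easy half: a rotation $g \in G$ around $P$ is an isometry of $\H^d$ fixing $P$ pointwise, hence it fixes each ideal point $p_j$ (since $p_j$ is an ideal endpoint of $P = \GH(p_1,\dots,p_K)$), and therefore it preserves each Busemann function up to the additive constant that pins it down — concretely $B_{p_j}(gx) = B_{p_j}(x)$ because $g$ fixes $p_j$ and also fixes the origin-to-$p_j$ ray's basepoint lies... more carefully, since $g$ is an isometry fixing $p_j$, it carries the horosphere $S(p_j,x)$ to the horosphere centered at $p_j$ through $gx$; because $g$ also fixes some point of $P$ which lies on all these horospheres' common geodesic axis, one checks $g \cdot S(p_j,x) = S(p_j,x)$. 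Hence $gx \in S(p_j,x)$ for every $j$, so $G\cdot x \subseteq S(x)$.

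For the reverse inclusion, the strategy is to pick an arbitrary $y \in S(x)$ and produce $g \in G$ with $gx = y$. The key geometric fact to exploit is the analogy stated in \cref{subsec:ideal}: concentric horospheres behave like parallel planes, and a geodesic with endpoint $p_j$ is orthogonal to every horosphere centered at $p_j$. The cleanest route is to use the ``computation'' recipe already described in \cref{subsec:horo_proj_high}: let $c = \piGeo_P(x)$ be the geodesic (closest-point) projection of $x$ onto $P$. I would argue that $P$ is itself contained in every horosphere... no — rather, that the function $x \mapsto (B_{p_1}(x),\dots,B_{p_K}(x))$ together with $d_\H(x, P)$ and the ``angular'' position around $P$ forms a coordinate system, and that fixing the Busemann coordinates (i.e. staying in $S(x)$) forces $d_\H(y,c') = d_\H(x,c)$ where $c' = \piGeo_P(y)$, and in fact forces $c' = c$. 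Then $y$ and $x$ lie on the same sphere of radius $d_\H(x,c)$ in the totally geodesic subspace through $c$ orthogonal to $P$; since $G$ acts on this orthogonal slice as the full rotation group $SO(d-K)$ about $c$, transitively on each such sphere, there is $g\in G$ with $gx=y$.

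The main obstacle is making precise the claim that membership in $S(x)$ pins down the foot point, i.e. that $\bigcap_j S(p_j,x)$ determines $\piGeo_P(x)$ and the radial distance to it. I would handle this by working in the upper half-space or hyperboloid model where $P$ can be normalized to a standard coordinate subspace, so that the Busemann functions $B_{p_j}$ become explicit affine-type functions of the coordinates; then the level-set condition becomes a system of equations whose solution set is visibly a Euclidean-type sphere in the orthogonal slice — equivalently, invoke that the Busemann functions of the ideal endpoints of $P$, restricted to a geodesic ray orthogonal to $P$, are strictly monotonic (each equals $\pm$(signed distance) plus a constant along such a ray by the orthogonality property), so two points with equal Busemann coordinates and the ``same side'' have equal foot point on $P$ and equal distance to it. The remaining step — transitivity of the rotation group $G \cong SO(d-K)$ on spheres in the normal slice — is standard and can be stated without calculation. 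I would also remark that this is exactly the hyperbolic analog of the elementary fact that spheres in $\R^d$ with collinear centers meet in a sphere centered on that line and lying in the hyperplane orthogonal to it, which is the intuition the paper already gives.
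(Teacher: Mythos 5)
Your first inclusion matches the paper's: rotations around $P$ fix each $p_j$ and hence preserve each horosphere $S(p_j,x)$ (the paper phrases this as the analogy with spheres whose centers lie on the rotation axis; your Busemann-function version is the same fact, with the additive constant pinned down because $g$ fixes the points of $P$ itself). For the reverse inclusion you genuinely diverge. The paper picks $y\in S(x)$, observes that the perpendicular bisector of $x$ and $y$ is a geodesic hyperplane containing every $p_j$ and hence all of $P$, and composes the reflection through it with a reflection through a hyperplane containing $P$ and $y$ to manufacture a rotation around $P$ taking $x$ to $y$ --- three lines, no coordinates, no foot points. You instead try to identify $S(x)$ outright as the metric sphere about $\piGeo_P(x)$ in the normal slice and then invoke transitivity of $G$ on that sphere; this amounts to proving \cref{cor:intersection-of-horospheres-is-sphere} first and deducing the lemma from it, reversing the paper's logical order. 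That route can be made to work: in the hyperboloid model the condition $B_{p_j}(y)=B_{p_j}(x)$ becomes the linear condition that $y-x$ be Minkowski-orthogonal to the span $V$ of the light-like directions representing the $p_j$, so $S(x)$ is visibly the intersection of the hyperboloid with the affine slice through $x$ parallel to $V^\perp$, a Euclidean sphere in a space-like slice on which the pointwise stabilizer of $V$ acts transitively. But you would have to actually carry out that computation, and it is more machinery than the paper needs.

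One concrete error to fix: your parenthetical claim that each $B_{p_j}$ restricted to a geodesic ray orthogonal to $P$ ``equals $\pm$(signed distance) plus a constant \dots by the orthogonality property'' is false. The orthogonality property of \cref{subsec:ideal} concerns geodesics with \emph{endpoint} $p_j$; a ray $\gamma$ orthogonal to $P$ at $c$ does not converge to $p_j$, and in fact $\frac{d}{dt}B_{p_j}(\gamma(t))=0$ at $t=0$ (the direction from $c$ toward $p_j$ is tangent to $P$, hence orthogonal to $\dot\gamma(0)$), so $B_{p_j}\circ\gamma$ is not affine --- the correct relation is $B_{p_j}(y)=B_{p_j}(\piGeo_P(y))+\ln\cosh d_\H(y,P)$. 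Moreover, monotonicity along rays from a \emph{fixed} foot point does not by itself show that equal Busemann coordinates force equal foot points; you would also need injectivity of $c\mapsto(B_{p_1}(c),\dots,B_{p_K}(c))$ (or of its pairwise differences) on $P$, which you assert implicitly but never establish. Either carry out the hyperboloid computation in full, or adopt the paper's perpendicular-bisector argument, which sidesteps all of this.
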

\begin{proof} First, note that every rotation around $P$ preserves the horospheres $S(p_j, x)$ - just like how every rotation around an axis preserves every sphere whose center is on that axis. It follows that $S(x)$ is preserved by $G$. In particular, the orbit of $x$ under $G$ is contained in $S(x)$.

It remains to show that $S(x)$ contains no other points. To this end, consider any $y \neq x$ in $S(x)$. The perpendicular bisector $B$ of $x$ and $y$ is a totally geodesic hyperplane of $\H^d$ that contains every $p_j$ (because each $p_j$ is intuitively the center of a sphere that goes through $x$ and $y$). Thus, by the definition of geodesic hull, $B \supset P$. In particular, the reflection through $B$ sends $x$ to $y$ and fixes every point in $P$.
	
Now take any geodesic hyperplane $A$ that contains both $P$ and $y$, so that the reflection through $A$ fixes $y$ and every point in $P$. Then the composition of the reflections through $B$ and $A$ is a rotation that sends $x$ to $y$ and fixes every point in $P$. In other words, it is a rotation around $P$ that sends $x$ to $y$. Therefore, $y$ belongs to the orbit of $x$ under $G$.
\end{proof}

\begin{cor} \label{cor:intersection-of-horospheres-is-sphere}
    If $x \in P$ then $S(x) = \{x\}$. Otherwise, let $\piGeo_P (x)$ be the geodesic projection of $x$ onto $P$, and $Q(x)$ be the geodesic submanifold that orthogonally complements $P$ at $\piGeo_P(x)$. Then $S(x) \subset Q(x)$ and is precisely the (hyper)sphere in $Q(x)$ that is centered at $\piGeo_P(x)$ and passing through $x$.
\end{cor}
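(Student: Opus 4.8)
The plan is to deduce everything from \cref{lemma:intersection-of-horospheres-is-orbit}, which already identifies $S(x)$ with the orbit of $x$ under the group $G$ of rotations around $P$; the remaining work is purely to describe this orbit concretely. If $x \in P$, then every rotation around $P$ fixes $P$ pointwise, so the orbit is $\{x\}$, giving the first claim. So assume $x \notin P$ and set $c = \piGeo_P(x)$, $r = d_\H(x,c) > 0$.

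First I would record two elementary facts. (i) The minimizing geodesic from $x$ to $c$ meets $P$ orthogonally at $c$ — this is the first-order optimality condition that defines the closest-point projection $\piGeo_P$ — hence this geodesic lies in $Q(x)$, so $x \in Q(x)$ and $x$ lies on the sphere $\Sigma \subset Q(x)$ centered at $c$ of radius $r$. (ii) Every $g \in G$ fixes $c$ (since $c \in P$) and preserves $Q(x)$: as $dg_c$ is the identity on $T_c P$ it preserves the orthogonal complement $T_c Q(x)$, and an isometry fixing $c$ and preserving $T_c Q(x)$ must carry the geodesic submanifold $Q(x)$ to itself. Combining (i) and (ii), each $g$ is an isometry of $Q(x)$ fixing $c$, so it sends $x$ to a point of $Q(x)$ at distance $r$ from $c$; thus the orbit of $x$, i.e. $S(x)$, is contained in $\Sigma$.

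The substantive step is the reverse inclusion $\Sigma \subseteq S(x)$, i.e. that $G$ acts transitively on $\Sigma$. Given $z \in \Sigma$ with $z \neq x$, I would mimic the argument in the proof of \cref{lemma:intersection-of-horospheres-is-orbit}: let $B$ be the perpendicular bisector of $x$ and $z$, a totally geodesic hyperplane of $\H^d$, and claim $P \subset B$. Indeed, for any $q \in P$, the geodesic $qc$ lies in $P$ while $cx$ and $cz$ lie in $Q(x)$, so the geodesic triangles $qcx$ and $qcz$ each have a right angle at $c$; by the hyperbolic Pythagorean theorem $\cosh d_\H(q,x) = \cosh d_\H(q,c)\cosh r = \cosh d_\H(q,z)$, hence $q \in B$. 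Then, exactly as in the lemma, picking a geodesic hyperplane $A$ containing $P$ and $z$, the composition of the reflections through $B$ and through $A$ is a rotation around $P$ sending $x$ to $z$, so $z \in S(x)$. Reading off, $S(x)$ is exactly the sphere in $Q(x)$ centered at $c$ through $x$.

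The obstacle I anticipate is this transitivity step in the degenerate case where $P$ has codimension one (i.e. $K = d$), so that $Q(x)$ is a geodesic line and $\Sigma = \{x,z\}$ consists of two antipodal points: then no geodesic hyperplane strictly contains $P$, and instead one notes that here $B = P$ itself and the reflection through $P$ already sends $x$ to $z$, so the orbit is still all of $\Sigma$. Away from this case the argument is routine given the lemma and the Pythagorean identity.
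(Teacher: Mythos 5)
Your proof is correct and follows the same route as the paper's: both reduce everything to \cref{lemma:intersection-of-horospheres-is-orbit} and then identify the rotation orbit of $x$ with the sphere in $Q(x)$ centered at $\piGeo_P(x)$. The only difference is one of detail: where the paper simply asserts that rotations around $P$ restrict to the full rotation group of $Q(x)$ about $\piGeo_P(x)$ (hence act transitively on spheres there), you prove the containment $S(x)\subseteq\Sigma$ via the differential of the rotation at $\piGeo_P(x)$ and the reverse inclusion explicitly via the perpendicular-bisector/two-reflection construction together with the hyperbolic Pythagorean theorem, and you also flag the degenerate codimension-one case --- reasonable additions that the paper leaves implicit.
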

\begin{proof}
    This follows from \cref{lemma:intersection-of-horospheres-is-orbit}. If $x \in P$ then every rotation around $P$ fixes $x$, so the orbit of $x$ is just itself.
    
    Now consider the case $x \not\in P$. All rotations around $P$ must preserve $\piGeo_P (x)$ and the orthogonal complement $Q(x)$ of $P$  at $\piGeo_P (x)$. Furthermore, when restricted to the space $Q(x)$, these rotations are precisely the rotations in $Q(x)$ around the point $\piGeo_P (x)$. Thus, for every $y \in Q(x)$, the orbit of $y$ under $G$ is a sphere in $Q(x)$ centered at $\piGeo_P (x)$. In particular, $S(x)$, which is the orbit of $x$, is the sphere in $Q(x)$ that is centered at $\piGeo_P(x)$ and passing through $x$.
\end{proof}

\cref{cor:intersection-of-horospheres-is-sphere} gives the following characterization of the intersection
$$M \cap S(p_1, x) \cap S(p_2, x) \cap \dots \cap S(p_K,x) = M \cap S(x):$$
Note that $P = \GH(p_1, \dots, p_K)$ is a geodesic submanifold of $M = \GH(b, p_1, \dots, p_K)$ and that $\dim P = \dim M - 1$. Thus, through every point $y \in P$, there is a unique geodesic $\alpha$ on $M$ that goes through $y$ and is perpendicular to $P$.
\begin{cor} \label{cor:horosphere-proj-two-candidates}
    If $x \in P$ then $M \cap S(x) = \{x\}$. Otherwise, let $\alpha$ be the geodesic on $M$ that goes through $\piGeo_P (x)$ and is perpendicular to $P$. Then $M \cap S(x)$ consists of the two points on $\alpha$ whose distance to $\piGeo_P(x)$ equals $d_\H (x, \piGeo_P(x))$.
\end{cor}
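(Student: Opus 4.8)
The plan is to obtain this corollary almost immediately from \cref{cor:intersection-of-horospheres-is-sphere}, which already identifies $S(x)$ with a sphere; the only extra ingredient needed is to intersect that sphere with $M$. I would first dispose of the case $x \in P$: there \cref{cor:intersection-of-horospheres-is-sphere} gives $S(x) = \{x\}$, and since $x \in P \subset M$ the intersection $M \cap S(x)$ is just $\{x\}$, as claimed.

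For the main case $x \notin P$, set $c = \piGeo_P(x)$ and $r = d_\H(x,c) > 0$. By \cref{cor:intersection-of-horospheres-is-sphere}, $S(x)$ lies inside $Q(x)$, the geodesic submanifold orthogonally complementing $P$ at $c$, and is precisely the sphere in $Q(x)$ centered at $c$ of radius $r$. So it suffices to compute $M \cap Q(x)$ and then intersect with this sphere. I would show $M \cap Q(x) = \alpha$ by a tangent-space computation at $c$: since $P$ has codimension one in $M$, we may write $T_c M = T_c P \oplus L$ with $L$ a line orthogonal to $T_c P$ inside $T_c M$, while $T_c Q(x) = (T_c P)^\perp$ in $T_c \H^d$; hence $T_c M \cap T_c Q(x) = L$. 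Invoking the standard fact that two totally geodesic submanifolds of $\H^d$ meeting at a point intersect in the totally geodesic submanifold whose tangent space is the intersection of the two tangent spaces, $M \cap Q(x)$ is the geodesic through $c$ in direction $L$, which is perpendicular to $P$ at $c$ and therefore equals $\alpha$ by the uniqueness noted just before the statement. Using $S(x) \subseteq Q(x)$ we then get $M \cap S(x) = (M \cap Q(x)) \cap S(x) = \alpha \cap S(x)$; since $\alpha$ is a geodesic line through the center $c$ of the sphere $S(x)$ and $r > 0$, this intersection consists of exactly the two points of $\alpha$ at distance $r$ from $c$, which is the asserted description.

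I expect the one genuine obstacle to be the step $M \cap Q(x) = \alpha$: one must justify that the intersection of the two totally geodesic submanifolds $M$ and $Q(x)$ is itself totally geodesic of the expected dimension, with tangent space equal to the intersection of the tangent spaces, rather than merely counting dimensions. This is clean in the hyperbolic (symmetric-space) setting — for instance both $M$ and $Q(x)$ can be realized as fixed-point sets of isometric involutions, or one can pass to the hyperboloid model, where totally geodesic submanifolds are intersections of $\H^d$ with linear subspaces and the claim reduces to elementary linear algebra. Everything else is bookkeeping on top of \cref{cor:intersection-of-horospheres-is-sphere}.
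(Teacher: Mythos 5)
Your proposal is correct and follows essentially the same route as the paper's proof: both dispose of the $x \in P$ case via \cref{cor:intersection-of-horospheres-is-sphere}, then use $S(x) \subset Q(x)$ to reduce $M \cap S(x)$ to $\alpha \cap S(x)$ and invoke the fact that a geodesic through the center of a sphere meets it in exactly two points. The only difference is that you explicitly justify $M \cap Q(x) = \alpha$ (via tangent spaces or the hyperboloid model's linear-subspace description), a step the paper simply asserts.
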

\begin{proof} 
    The case $x \in P$ is clear since $x \in M$ and $S(x) = \{x\}$ by \cref{cor:intersection-of-horospheres-is-sphere}.
    
    For the other case, let $Q(x)$ be the orthogonal complement of $P$ at $\piGeo_P (x)$. Then by \cref{cor:intersection-of-horospheres-is-sphere}, $S(x)$ is precisely the sphere in $Q(x)$ that is centered at $c(x)$ and passing through $x$.
	
	Now note that $M \cap  Q(x) = \alpha$. Since $S(x) \subset Q(x)$, this gives $M \cap S(x) = M \cap Q(x) \cap S(x) = \alpha \cap S(x)$. We know that every sphere intersects every line through the center at two points.
\end{proof}

Therefore, to define $\piHoro_{b,p_1,\dots,p_K}(x)$, we just need to choose one of the two poins in $M \cap S(x)$ in a consistent way (so that the map is differentiable). To this end, note that $P$ cuts $M$ into two half-spaces, and exactly one of them contains the base point $b$. (Recall that $b \in M$ and $b \not\in P$ by the ``independence'' condition). We denote this half by $P_b$. Then, while $S(x)$ contains two points in $M$, it only contains one point in $P_b$:

\begin{theorem} \label{thm:horosphere-proj-well-defined}
    Let $\alpha$ be the geodesic on $M$ that goes through $\piGeo_P(x)$ and is perpendicular to $P$. Let $\alpha^+$ be the half of $\alpha$ contained in $P_b$. Then $\alpha^+$ intersects the sphere $S(x)$ at a unique point $x'$, which is also the unique intersection point between $P_b$ and $S(x)$. Thus, we can define $\piHoro_{b,p_1, \dots, p_K}$ by
    $$\piHoro_{b,p_1,\dots,p_K} (x) = \alpha^+ \cap S(x) = P_b \cap S(x).$$
    Equivalently, $\piHoro_{b,p_1, \dots, p_K}(x)$ is the point in $M \cap S(x)$ that is strictly closer to $b$.
\end{theorem}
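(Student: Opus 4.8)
The plan is to reduce everything to the intrinsic geometry of the target $M = \GH(b,p_1,\dots,p_K) \cong \H^K$ together with its totally geodesic hyperplane $P = \GH(p_1,\dots,p_K)$, building directly on the description of $M \cap S(x)$ already obtained in \cref{cor:horosphere-proj-two-candidates}. If $x \in P$ the statement is trivial: $S(x) = \{x\}$ by \cref{cor:intersection-of-horospheres-is-sphere} and $x \in P \subset M$, while $x$ is the endpoint of $\alpha^+$ in $P_b$, so $x' = x$ and there is nothing to compare. So I would assume $x \notin P$, set $c = \piGeo_P(x)$ and $r = d_\H(x,c) > 0$, and recall from \cref{cor:horosphere-proj-two-candidates} that $M \cap S(x) = \alpha \cap S(x) = \{x^+, x^-\}$ is exactly the pair of points of the geodesic $\alpha$ at distance $r$ from $c$, one on each of the two geodesic rays of $\alpha$ emanating from $c$.

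Next I would pin down the combinatorial picture. Since $\alpha \perp P$ at $c$ and $\dim M = \dim P + 1$, the vector $\alpha'(c)$ is transverse to $T_cP$, so $\alpha$ crosses $P$ at $c$; moreover $\alpha \cap P = \{c\}$, because a geodesic of $M$ meeting the totally geodesic submanifold $P$ at two points would lie in $P$. Hence $P$ separates $M$ into two open half-spaces, $\alpha \setminus \{c\}$ has exactly one ray in each, and I take $\alpha^+$ to be the closed ray lying on the side $P_b$ of $b$; relabel so that $x^+ \in \alpha^+$ and $x^-$ lies strictly on the opposite side. Since $x^\pm \neq c$ we also have $x^\pm \notin P$. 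This already gives $\alpha^+ \cap S(x) = \{x^+\}$; and since $P_b \cap S(x) \subseteq M \cap S(x) = \{x^+, x^-\}$ while $x^-$ lies in the opposite closed half and off $P$ — and the two closed halves meet only along $P$ — we also get $P_b \cap S(x) = \{x^+\}$.

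The substantive step is to identify $x^+$ as the point of $M \cap S(x)$ strictly closer to $b$. Here I would invoke the reflection isometry $\rho\colon M \to M$ that fixes $P$ pointwise and swaps the two half-spaces, which exists because $M \cong \H^K$. Since $\rho$ fixes $c$ and negates the one-dimensional normal direction $\alpha'(c)$, it preserves the unique $P$-perpendicular geodesic $\alpha$ through $c$ while reversing it, so $\rho(x^+) = x^-$ and hence $d_\H(b, x^-) = d_\H(\rho(b), x^+)$. Now $b$ and $x^+$ lie strictly on the $P_b$ side and $\rho(b)$ strictly on the other, so the geodesic segment $[x^+, \rho(b)]$ meets $P$ at a unique point $w$, giving $d_\H(x^+, \rho(b)) = d_\H(x^+, w) + d_\H(w, b) \ge d_\H(x^+, b)$, with strict inequality because $[x^+, b]$ stays inside the open half-space $P_b$ (half-spaces of $\H^K$ are geodesically convex) and therefore cannot contain $w \in P$. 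Thus $d_\H(b,x^-) > d_\H(b,x^+)$, and together with the previous paragraph this shows $\piHoro_{b,p_1,\dots,p_K}(x) := \alpha^+ \cap S(x) = P_b \cap S(x) = \{x^+\}$ is a single, well-defined point, namely the one of $M \cap S(x)$ closest to $b$.

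I expect the third paragraph to be the main obstacle, since it rests on two standard but nonobvious facts about hyperbolic space: that a totally geodesic hyperplane in $\H^K$ carries a point-fixing reflection, and that it cuts $\H^K$ into two geodesically convex open halves (the latter is what makes the crossing point $w$ unique and forces the strict inequality). The convexity can be checked directly in the hyperboloid model — a half-space is $\{y : \langle y, v\rangle \le 0\}$ for a spacelike $v$, and along a geodesic $\cosh(t)\,p + \sinh(t)\,q$ the quantity $\langle\,\cdot\,, v\rangle$ has the form $a\cosh t + b\sinh t$, whose zero set splits $\R$ into intervals. Everything else — the case $x \in P$, and the two set-equalities in the second paragraph — is essentially bookkeeping on top of \cref{cor:horosphere-proj-two-candidates}.
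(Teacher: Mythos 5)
Your proof is correct and follows essentially the same route as the paper's: reduce to the two candidate points of $M \cap S(x) = \alpha \cap S(x)$ from \cref{cor:horosphere-proj-two-candidates}, identify the unique one lying in $P_b$, and use the fact that $P$ is the perpendicular bisector of the two candidates in $M$ to conclude that this point is the one strictly closer to $b$. The only difference is that you prove the perpendicular-bisector property explicitly via the reflection through $P$ and geodesic convexity of half-spaces, whereas the paper asserts it; your argument is a valid (and standard) justification of that step.
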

\begin{proof}
    By \cref{cor:intersection-of-horospheres-is-sphere}, $S(x)$ is a sphere centered at $\piGeo_P(x)$. Then, since $\alpha^+$ is a geodesic ray starting at $\piGeo_P(x)$, it must intersect $S(x)$ at a unique point $x'$.
    
    Next, we have
    $$P_b \cap S(x) = P_b \cap M \cap S(x) = P_b \cap \alpha \cap S(x) = \alpha^+ \cap S(x),$$
    where the first equality holds because $P_b \subset M$, the second because $M \cap S(x) = \alpha \cap S(x)$ by the proof of \cref{cor:horosphere-proj-two-candidates}, and the third because $P_b \cap \alpha = \alpha^+$. Thus, $P_b \cap S(x)$ is precisely $x'$.
    
    Finally, let $x''$ be the other point of $M \cap S(x) = \alpha \cap S(x)$. Then $P$ is the perpendicular bisector of $x'$ and $x''$ in $M$. Thus, every point on the same side of $P$ in $M$ as $x'$ (but not on the boundary $P$) is strictly closer to $x'$ than to $x''$. By definition, $b$ is one of such point. Thus, $\piHoro_{b,p_1, \dots,p_K}(x)$ is the point in $M \cap S(x)$ that is closer to $b$.
\end{proof}

\begin{figure*}
    \begin{small}
    \centering
    \includegraphics[width=0.46\textwidth]{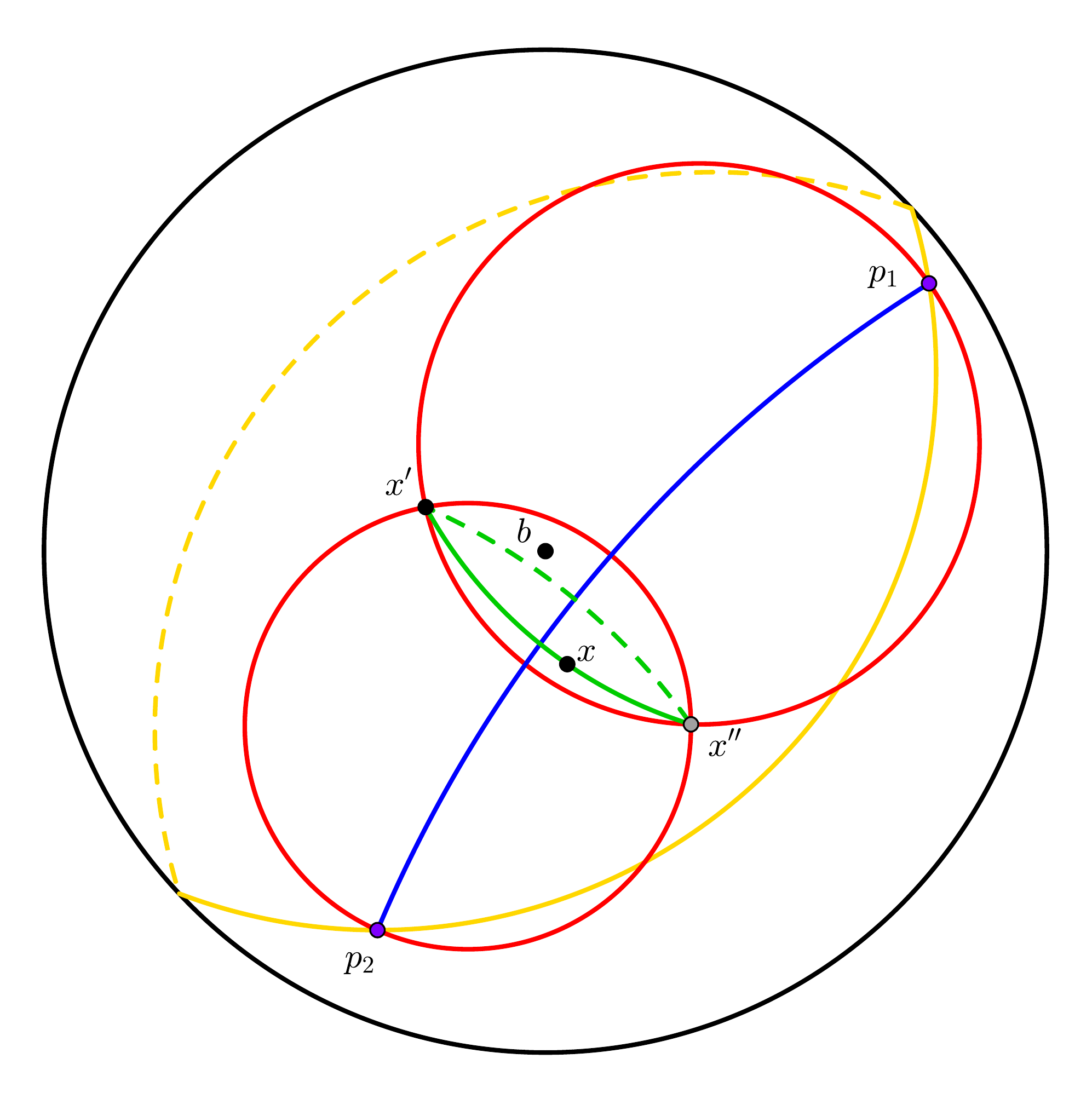}
    \caption{The horospherical projection $\piHoro_{b,p_1,p_2}$ from $\H^3$ to $2$ dimensions. Here $p_1, p_2$ are ideal points, and the base point $b \in \H^3$ is chosen to be the origin of the Poincar\'e ball. The geodesic hull $\GH(b, p_1, p_2)$ is the hyperbolic plane bounded by the {\color{yellow} yellow} circle. The geodesic between $p_1$ and $p_2$ is shown in {\color{blue} blue} and is called the spine $P = \GH(p_1,p_2)$ of this projection. For any input $x \in \H^3$, the two horospheres $S(p_1, x)$ and $S(p_2, x)$ (shown in {\color{red} red}) intersect along a circle $S(x)$ (shown in {\color{green} green}). Note that this circle is precisely the set $\{y \in \H^3: B_{p_j}(y) = B_{p_j}(x), j = 1,2\}$ and is symmetric around the spine $P$. It intersects $\GH(b,p_1,p_2)$ at two points $x'$ and $x''$, which lie on opposite sides of the spine. Since $x'$ belongs to the side containing $b$, it is closer to $b$ than $x''$ is. Thus, we define $\piHoro_{b,p_1,p_2} (x)$ to be $x'$.}\label{fig:horo_proj_3d}
    \end{small}
\end{figure*}

\subsection{Geometric properties} \label{appendix-subsec:horosphere-proj-properties}
From \cref{lemma:intersection-of-horospheres-is-orbit} and \cref{thm:horosphere-proj-well-defined}, we obtain another interpretation of $\piHoro_{b, p_1, \dots, p_K}$: It maps $\H^d$ to $P_b \subset M$ by rotating every point $x \in \H^d$ around $P$ until it hits $P_b$. In other words, we have

\begin{theorem}[The ``open book'' interpretation] \label{thm:open-book-interpretation}
	For any $x \not\in P$, let $M_x = \GH(P \cup \{x\})$. Then $P$ cuts $M_x$ into two half-spaces; we denote the half that contains $x$ by $P_x$. Then, when restricted to $P_x$, the map
    $$\piHoro_{b, p_1, \dots, p_K}: P_x \to P_b$$
    is simply a rotation around $P$. $\qedsymbol$
\end{theorem}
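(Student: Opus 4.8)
The plan is to deduce the statement directly from the two structural facts already in hand: by \cref{lemma:intersection-of-horospheres-is-orbit} the fiber $S(z) = S(p_1,z)\cap\dots\cap S(p_K,z)$ is a single orbit of the rotation group $G$ around $P$, and by \cref{thm:horosphere-proj-well-defined} the point $\piHoro_{b,p_1,\dots,p_K}(z)$ is the unique element of that orbit lying in the half-space $P_b$. Together these already say that the projection is ``rotate $z$ around $P$ until it lands in $P_b$''. What remains, and all that remains, is to check that \emph{one fixed} rotation does this simultaneously for every $z$ in a given half-space $P_x$.

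First I would produce the candidate rotation. Set $x' \coloneqq \piHoro_{b,p_1,\dots,p_K}(x)$. By \cref{thm:horosphere-proj-well-defined}, $x' \in P_b \cap S(x)$, and by \cref{lemma:intersection-of-horospheres-is-orbit}, $S(x)$ is the $G$-orbit of $x$; hence there is $g \in G$ with $g(x) = x'$. Since $g$ is a rotation around $P$ it fixes $P$ pointwise, so $g$ maps $M_x = \GH(P \cup \{x\})$ to $\GH(P \cup \{x'\})$.

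Next, the key point: I claim $g(P_x) = P_b$. Because $g(M_x) = \GH(P \cup \{x'\})$, it suffices to show $\GH(P \cup \{x'\}) = M$ and that the relevant half is $P_b$. On one hand, by \cref{cor:intersection-of-horospheres-is-sphere} the set $S(x)$ is a positive-radius sphere centered at $\piGeo_P(x) \in P$ inside the orthogonal complement $Q(x)$, and $Q(x)$ meets $P$ only at $\piGeo_P(x)$, which does not lie on $S(x)$; hence $x' \notin P$. On the other hand $x' \in P_b \subseteq M$, so $\GH(P \cup \{x'\})$ is a totally geodesic submanifold with $P \subsetneq \GH(P \cup \{x'\}) \subseteq M$, squeezed between dimensions $K-1$ and $K$; therefore it equals $M$. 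Thus $g(M_x) = M$, and $g$ carries the half $P_x$ of $M_x$ containing $x$ to the half of $M$ containing $g(x) = x'$, which is $P_b$ since $x' \in P_b$. Finally, for an arbitrary $z \in P_x$: if $z \in P$ then $g(z) = z = \piHoro_{b,p_1,\dots,p_K}(z)$ since $S(z) = \{z\}$; if $z \notin P$ then $g(z) \in g(P_x) = P_b$, and $g(z)$ lies in the $G$-orbit of $z$, i.e. in $S(z)$ by \cref{lemma:intersection-of-horospheres-is-orbit}, so $g(z) \in P_b \cap S(z) = \{\piHoro_{b,p_1,\dots,p_K}(z)\}$ by \cref{thm:horosphere-proj-well-defined}. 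In either case $\piHoro_{b,p_1,\dots,p_K}(z) = g(z)$, so $\piHoro_{b,p_1,\dots,p_K}$ restricted to $P_x$ coincides with the rotation $g$, proving the statement.

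I expect the only non-bookkeeping step to be the claim $g(P_x) = P_b$: one must verify that the single rotation pinned down by its action on the one point $x$ actually transports the whole half-space, which relies on (i) the projection of $x$ landing strictly off the spine $P$ yet inside $M$, and (ii) the uniqueness/dimension-count for totally geodesic submanifolds that forces $\GH(P\cup\{x'\}) = M$. Everything else is an immediate chase through \cref{lemma:intersection-of-horospheres-is-orbit} and \cref{thm:horosphere-proj-well-defined}.
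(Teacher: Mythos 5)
Your proof is correct and follows the same route the paper intends: the paper leaves \cref{thm:open-book-interpretation} as an immediate consequence of \cref{lemma:intersection-of-horospheres-is-orbit} and \cref{thm:horosphere-proj-well-defined}, and your argument is exactly that deduction, carefully fleshed out. The one detail you add beyond what the paper records --- verifying via the dimension count $\GH(P \cup \{x'\}) = M$ that the single rotation $g$ pinned down by $g(x) = x'$ transports the entire page $P_x$ onto $P_b$ --- is a genuine and correctly handled gap-fill rather than a departure.
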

Following this, we call $P$ the \emph{spine} of the horosphere projection. The identity $\H^d = \cup_{x \not\in P} P_x$ can be thought of as an \emph{open book decomposition} of $\H^d$ into \emph{pages} $P_x$ that are bounded by the spine $P$. The horosphere projection $\piHoro_{b, p_1, \dots, p_K}$ then simply acts by collapsing every page onto a specified page $P_b$.

Here are some consequences of this interpretation:
\begin{cor} \label{cor:horosphere-proj-only-depends-on-spine}
	$\piHoro_{b, p_1, \dots, p_K}$ only depends on the spine $P$ and not specifically on $p_1, \dots p_K$. Thus, when we are not interested in the specific ideal points, we simply write $\piHoro_{b,P}$.
\end{cor}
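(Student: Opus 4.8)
The plan is to derive the corollary directly from the ``open book'' description in \cref{thm:open-book-interpretation}, by checking that every object entering that description is a function of $b$ and the spine $P$ alone, with no residual dependence on the particular ideal points $p_1, \dots, p_K$ used to cut out $P$.

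First I would show that the target submanifold itself depends only on $b$ and $P$. Since $P = \GH(p_1, \dots, p_K)$ is the smallest geodesic submanifold containing $\{p_1, \dots, p_K\}$, it is contained in $M = \GH(b, p_1, \dots, p_K)$; hence $M$ is a geodesic submanifold containing $\{b\} \cup P$, so $M \supseteq \GH(\{b\} \cup P)$. Conversely $\GH(\{b\} \cup P) \supseteq \GH(\{b\} \cup \{p_1, \dots, p_K\}) = M$ because $P \supseteq \{p_1, \dots, p_K\}$. Thus $M = \GH(\{b\} \cup P)$, which visibly depends only on $b$ and $P$, and therefore so does the half-space $P_b$ (the component of $M \setminus P$ containing $b$).

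Next I would inspect the remaining ingredients of \cref{thm:open-book-interpretation}. For each $x \notin P$, the page $M_x = \GH(P \cup \{x\})$ and its half $P_x$ containing $x$ are by definition functions of $P$ and $x$ only, and the group $G$ of rotations around $P$ depends only on $P$. The theorem says that $\piHoro_{b,p_1,\dots,p_K}$ restricted to $P_x$ equals the restriction of some $g \in G$ with $g(P_x) = P_b$; I would observe that this restriction is independent of the choice of such a $g$. Indeed, if $g, g' \in G$ both carry $P_x$ onto $P_b$, then $g'^{-1} g$ fixes $P$ pointwise and maps the half-hyperplane $P_x$ onto itself; such an isometry fixes the perpendicular direction to $P$ pointing into $P_x$, hence fixes $\GH(P_x) = M_x$ pointwise, so $g$ and $g'$ agree on $P_x$. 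Combining this with the fact that $\piHoro_{b,p_1,\dots,p_K}$ is the identity on $P$ (for $x \in P$ the orbit $S(x)$ is a single point by \cref{cor:intersection-of-horospheres-is-sphere}), and with the decomposition $\H^d = P \cup \bigcup_{x \notin P} P_x$, the map is completely pinned down by the pair $(b, P)$: it is the identity on $P$, and on each page it is the common restriction of the rotations around $P$ that send that page onto $P_b$. Hence two families $p_1, \dots, p_K$ and $p_1', \dots, p_K'$ with the same geodesic hull $P$ yield the same map, and we may unambiguously write $\piHoro_{b,P}$.

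I do not expect a genuine obstacle here; the argument is bookkeeping on top of \cref{thm:open-book-interpretation}. The one spot that needs care — and where I would be most wary of circularity with the independence hypothesis on $b, p_1, \dots, p_K$ — is the claim that the page-collapsing map $P_x \to P_b$ is well defined independently of which rotation around $P$ realizes it.
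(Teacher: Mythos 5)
Your argument is correct and follows essentially the same route as the paper, which also derives the corollary from the open-book interpretation by observing that ``rotate $x$ around $P$ until it hits $P_b$'' never uses the individual $p_j$; your extra verifications (that $M = \GH(\{b\}\cup P)$ depends only on $b$ and $P$, and that the page-collapsing rotation is uniquely determined on each page) simply make explicit what the paper leaves implicit. The well-definedness worry you flag at the end can in any case be sidestepped by citing \cref{thm:horosphere-proj-well-defined} directly, which characterizes the map as $x \mapsto P_b \cap S(x)$ with $S(x)$ the rotation orbit of $x$ around $P$ --- both manifestly functions of $(b,P,x)$ alone.
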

\begin{proof}
    As noted above, $\piHoro_{b, p_1, \dots, p_K}(x)$ can be obtained by rotating $x$ around $P$ until it hits $P_b$. This operation does not used the exact positions of $p_j$ at all.
\end{proof}

\begin{cor} \label{cor:horosphere-proj-base-point-independent}
	The choice of $b$ does not affect the geometry of the projection $\piHoro_{b,P}$. More precisely, for any two base points $b, b' \not \in P$, the horosphere projections
	$$\piHoro_{b,P}: \H^d \to P_b \ \ \text{and} \ \  \piHoro_{b',P}: \H^d \to P_{b'}$$ only differ by a rotation $P_b \to P_{b'}$ around $P$.
\end{cor}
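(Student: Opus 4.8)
The plan is to reduce the entire statement to the ``open book'' picture of \cref{thm:open-book-interpretation}: $\piHoro_{b,P}$ fixes $P$ pointwise and collapses every page $P_x$ onto the distinguished page $P_b$ by a rotation around $P$, and $\piHoro_{b',P}$ does the same onto $P_{b'}$. So it suffices to establish two facts: (i) there exists a rotation $\rho$ around $P$ with $\rho(P_b)=P_{b'}$, and (ii) this same $\rho$ satisfies $\piHoro_{b',P}=\rho\circ\piHoro_{b,P}$. Granting (i) and (ii), the restriction $\rho|_{P_b}\colon P_b\to P_{b'}$ is exactly the rotation demanded by the statement, because $\piHoro_{b,P}$ has image $P_b$.

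For (i) I would just run the projection on the other base point: set $q:=\piHoro_{b,P}(b')$. By \cref{thm:horosphere-proj-well-defined}, $q$ is the unique point of $S(b')\cap P_b$, and since $S(b')$ is (by \cref{cor:intersection-of-horospheres-is-sphere}) a sphere of positive radius lying in the geodesic submanifold $Q$ orthogonally complementing $P$ at its center, which meets $P$ only at that center, we get $q\in P_b\setminus P$. By \cref{lemma:intersection-of-horospheres-is-orbit}, $q$ lies in the $G$-orbit of $b'$, so $q=\rho_0(b')$ for some rotation $\rho_0$ around $P$. As $\rho_0$ fixes $P$ pointwise and is an isometry, it carries the page $P_{b'}$ (a half of $\GH(P\cup\{b'\})$) to a half of $\GH(P\cup\{q\})$, i.e.\ to a page through $q$; since $q$ is an interior point of $P_b$ and distinct pages meet only along $P$, this image must be $P_b$. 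Thus $\rho:=\rho_0^{-1}$ satisfies $\rho(P_b)=P_{b'}$.

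For (ii), fix $x\in\H^d$. If $x\in P$, then $\piHoro_{b,P}(x)=\piHoro_{b',P}(x)=x$ and $\rho(x)=x$, so the identity holds. If $x\notin P$, recall from \cref{lemma:intersection-of-horospheres-is-orbit} that $S(x)$ is a $G$-orbit, hence $\rho(S(x))=S(x)$; and by \cref{thm:horosphere-proj-well-defined}, $\piHoro_{b,P}(x)$ is the unique point of $S(x)\cap P_b$ while $\piHoro_{b',P}(x)$ is the unique point of $S(x)\cap P_{b'}$. Therefore
\[
\rho\big(\piHoro_{b,P}(x)\big)\in\rho\big(S(x)\cap P_b\big)=\rho(S(x))\cap\rho(P_b)=S(x)\cap P_{b'}=\{\piHoro_{b',P}(x)\},
\]
which gives $\piHoro_{b',P}=\rho\circ\piHoro_{b,P}$, completing the proof.

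I expect step (i) — producing an honest rotation around $P$ that moves the page $P_b$ to the page $P_{b'}$ — to be the only real content; step (ii) is just pushing the already-established facts (``$S(x)$ is a $G$-orbit'' and ``$P_b\cap S(x)$ is a single point'') through $\rho$. An alternative to the concrete construction in (i) is the abstract remark that rotations around $P$ act as the full rotation group on each normal space of $P$, hence transitively on the rays issuing from $P$ and therefore transitively on pages; I prefer the construction via $\piHoro_{b,P}(b')$ because it reuses machinery already in hand and avoids a separate normal-bundle argument.
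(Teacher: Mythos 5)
Your proof is correct and follows essentially the same route as the paper's: both rest on the facts that the fibers $S(x)$ are orbits of the group of rotations around $P$ and that each projection selects the unique intersection of $S(x)$ with its page, so the two projections differ by a rotation around $P$. The paper's version is a two-line appeal to the open-book interpretation (each restriction to a page is a rotation, and the difference of two rotations around $P$ is another rotation around $P$); your step (i) constructing $\rho$ explicitly from the orbit of $b'$ and your step (ii) verifying $\piHoro_{b',P}=\rho\circ\piHoro_{b,P}$ via the $\rho$-invariance of $S(x)$ simply make that argument fully explicit, and in particular address the implicit point that one and the same rotation works uniformly across all pages.
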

\begin{proof}
    By \cref{thm:open-book-interpretation}, when restricted to any page $P_x$, the maps $\piHoro_{b,P}: \H^d \to P_b$ and $\piHoro_{b',P}: \H^d \to P_{b'}$ are just rotations around $P$. The difference between any two rotations around $P$ is another rotation around $P$.
\end{proof}

In particular, \cref{cor:horosphere-proj-base-point-independent} implies \cref{thm:horosphere-proj-base-point-independent}:
\isometry*

As discussed in \cref{sec:horopca}, this theorem helps reduces parameters and simplifies the computation of $\piHoro_{b,P}$.

\begin{remark} \label{remark:flag-parameter-count}
    \cref{cor:horosphere-proj-only-depends-on-spine} and \cref{cor:horosphere-proj-base-point-independent} together imply that the \name{} algorithm \eqref{eq:horo_opt} only depends on the geodesic hulls $\GH(p_1), \GH(p_1,p_2), \dots \GH(p_1, \dots, p_K)$ of the ideal points and not the specific ideal points themselves. It follows that, theoretically, the search space of \eqref{eq:horo_opt} has dimension $dK - \frac12 K(K+1)$ -- the same as the dimension of the space of flags in Euclidean spaces.
    
    In our implementation, for simplicity we parametrize the $K$ ideal points independently, which results in a suboptimal search space dimension $(d-1)K$. Nevertheless, this is still slightly more efficient than the parametrizations used in PGA and BSA, which require have $(d+1)K$-dimensions.
\end{remark}

The following corollaries say that horospherical projections share a nice property with Euclidean orthogonal projections: When projecting to a $K$-dimensional submanifold, they preserve the distances along $K$ dimensions and collapse the distances along the other $d-K$ orthogonal dimensions:
\begin{cor} \label{cor:horosphere-proj-rotational-invariant}
    $\piHoro_{b,P}$ is invariant under rotations around $P$. In other words, if a rotation around $P$ takes $x$ to $y$ then $\piHoro_{b,P}(x) = \piHoro_{b,P}(y)$.
    
    Consequently, every $x \not\in P$ belongs to a $(d-K)$-dimensional submanifold that is collapsed to a point by $\piHoro_{b,P}$.
\end{cor}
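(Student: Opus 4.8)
The plan is to derive this corollary directly from \cref{lemma:intersection-of-horospheres-is-orbit} together with the well-definedness statement \cref{thm:horosphere-proj-well-defined}, with essentially no new geometric input. For the rotational invariance, I would recall that \cref{lemma:intersection-of-horospheres-is-orbit} identifies $S(x) = S(p_1,x) \cap \dots \cap S(p_K,x)$ with the orbit of $x$ under the group $G$ of rotations around $P$. Suppose a rotation $\rho \in G$ sends $x$ to $y$. Then $y$ lies in the $G$-orbit of $x$, so the $G$-orbit of $y$ is the same subset of $\H^d$ as the $G$-orbit of $x$; equivalently $S(y) = S(x)$. Now \cref{thm:horosphere-proj-well-defined} characterizes $\piHoro_{b,P}(z)$ purely in terms of $M$, the set $S(z)$, and the base point $b$ — it is the unique point of $M \cap S(z)$ strictly closer to $b$. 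Since $S(x) = S(y)$ forces $M \cap S(x) = M \cap S(y)$, the two projections coincide, i.e. $\piHoro_{b,P}(x) = \piHoro_{b,P}(y)$. (Alternatively one can argue via the open-book picture of \cref{thm:open-book-interpretation}: $\rho$ carries the page $P_x$ to the page $P_y$, and on each page $\piHoro_{b,P}$ is the rotation onto $P_b$, so the composed rotations around $P$ agree; but the $S(x)=S(y)$ argument is the shortest.)

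For the ``consequently'' clause, I would fix $x \notin P$. Since $P = \GH(p_1,\dots,p_K)$ is spanned by $K$ independent ideal points, $\dim P = K-1$, so the orthogonal complement $Q(x)$ of $P$ at $\piGeo_P(x)$ is a totally geodesic submanifold of dimension $d-(K-1)$. By \cref{cor:intersection-of-horospheres-is-sphere}, $S(x)$ is the metric sphere in $Q(x)$ centered at $\piGeo_P(x)$ and passing through $x$; because $x \notin P$ this sphere has positive radius and hence is a smooth submanifold of dimension $\dim Q(x) - 1 = d-K$, and it contains $x$. By \cref{lemma:intersection-of-horospheres-is-orbit}, $S(x)$ is a single $G$-orbit, and by the first part $\piHoro_{b,P}$ is constant on every $G$-orbit; therefore $\piHoro_{b,P}$ is constant on $S(x)$, i.e. $S(x)$ is collapsed to a point. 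Thus $x$ lies on a $(d-K)$-dimensional submanifold, namely $S(x)$, that $\piHoro_{b,P}$ collapses.

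I do not expect a genuine obstacle here: all the ingredients are already established, and the argument is a two-line consequence of the orbit description of $S(x)$. The only places requiring a little care are the dimension bookkeeping — checking that $\dim P = K-1$ so that the collapsed fibers come out $(d-K)$-dimensional — and the trivial edge case $x \in P$, where $S(x) = \{x\}$ by \cref{cor:intersection-of-horospheres-is-sphere} and the second assertion is vacuous for that $x$.
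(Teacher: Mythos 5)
Your proposal is correct and follows essentially the same route as the paper: both arguments reduce to the observation that $S(x)$ is a single orbit under rotations around $P$ (so $S(y)=S(x)$ when a rotation carries $x$ to $y$) combined with the characterization $\piHoro_{b,P}(z)=P_b\cap S(z)$ from \cref{thm:horosphere-proj-well-defined}, and the dimension count $\dim S(x)=\dim Q(x)-1=d-K$ via \cref{cor:intersection-of-horospheres-is-sphere} is identical. No gaps.
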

\begin{proof}
    The open book interpretation tells us that $\piHoro_{b,P}(x)$ and $\piHoro_{b,P}(y)$ are precisely the intersections of $P_b$ with $S(x)$ and $S(y)$, respectively. If $y$ belongs to the rotation orbit $S(x)$ of $x$ then the rotation orbit $S(y)$ of $y$ is the same as $S(x)$. Thus $\piHoro_{b,P}(x) = \piHoro_{b,P}(y)$.
    
    Hence, for every $x \in \H^d$, $S(x)$ is collapsed to a point by $\piHoro_{b,P}$. To see that $\dim S(x) = d-K$ when $x \not\in P$, recall that by \cref{cor:intersection-of-horospheres-is-sphere}, if $Q(x)$ is the orthogonal complement of $P$ at $\piGeo_P (x)$ then $S(x)$ is a hypersphere inside $Q(x)$. Since the ideal points $p_j$ are assumed to be ``affinely independent,'' we have $\dim P = K-1$, so $\dim Q(x) = d - (K-1)$, and $\dim S(x) = \dim Q(x) - 1 = d - K$.
\end{proof}
\begin{cor} \label{cor:horosphere-proj-preserve-some-distance}
    For every $x \in \H^d$, there exists a $K$-dimensional totally geodesic submanifold (with boundary) that contains $x$ and is mapped isometrically to $P_b$ by $\piHoro_{b,P}$. If $x \not\in P$ then such a manifold is unique.
\end{cor}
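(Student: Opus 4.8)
The candidate submanifold is already implicit in Theorem~\ref{thm:open-book-interpretation}: it is the \emph{page} through $x$. For $x \notin P$, set $M_x = \GH(P \cup \{x\})$, which is $K$-dimensional since $\dim P = K-1$ and $x \notin P$, and let $P_x$ be the half of $M_x$ bounded by $P$ that contains $x$. By Theorem~\ref{thm:open-book-interpretation}, $\piHoro_{b,P}$ restricted to $P_x$ is a rotation around $P$; since every rotation around $P$ is a global isometry of $\H^d$, this restriction is a bijective isometry $P_x \to P_b$. As $P_x$ is a $K$-dimensional totally geodesic submanifold with boundary $P$, and $x \in P_x$, this establishes existence. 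When $x \in P$ one may instead take $N = P_b$: here $P \subseteq P_b$, and Corollary~\ref{cor:intersection-of-horospheres-is-sphere} shows $\piHoro_{b,P}$ fixes every point of $P_b$; uniqueness genuinely fails in this case, since every page $P_y$ with $y \notin P$ also contains $P \ni x$ and maps isometrically onto $P_b$, which is why the statement restricts uniqueness to $x \notin P$.

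For uniqueness when $x \notin P$, let $N$ be a $K$-dimensional totally geodesic submanifold with boundary that contains $x$ and is mapped isometrically onto $P_b$ by $\phi := \piHoro_{b,P}|_N$. The plan is to force $N = P_x$ in three steps. First, $\phi$ is surjective and, by Corollary~\ref{cor:intersection-of-horospheres-is-sphere}, the fibre of $\piHoro_{b,P}$ over any $z \in P$ is the singleton $\{z\}$; hence $P \subseteq N$ and $\phi$ restricts to the identity on $P$. Second, $N$ is geodesically convex and contains $P \cup \{x\}$, so the smallest complete totally geodesic submanifold containing $N$ is a complete $K$-dimensional geodesic submanifold containing $P \cup \{x\}$, hence equal to $\GH(P \cup \{x\}) = M_x$; in particular $N \subseteq M_x$ and $\operatorname{int} N$ is open in $M_x$. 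Third, $\phi$ is an isometry of manifolds-with-boundary fixing $P$ pointwise, and $\partial P_b = P$, so $\partial N = \phi^{-1}(P) = P$.

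It then remains to note that $\operatorname{int} N$ is a nonempty open subset of $M_x$ that is also closed in $M_x \setminus P$ (its closure in $M_x$ being $\operatorname{int} N \cup \partial N = \operatorname{int} N \cup P$), hence a union of connected components of $M_x \setminus P$. But $P$ separates $M_x$ into exactly two components, the interiors of the two halves $P_x$ and $P_x'$; $\operatorname{int} N$ contains $x \in \operatorname{int} P_x$, and it cannot contain $\operatorname{int} P_x'$ as well, for otherwise $N = M_x$, while by Theorem~\ref{thm:open-book-interpretation} each of $P_x, P_x'$ already maps onto $P_b$, making $\piHoro_{b,P}|_{M_x}$ two-to-one over $\operatorname{int} P_b$ and contradicting injectivity of $\phi$. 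Hence $\operatorname{int} N = \operatorname{int} P_x$ and $N = P_x$.

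The routine parts (rotations around $P$ are isometries; the various closed forms) are immediate. The crux is the uniqueness direction, and within it the two structural facts: that $N$ must contain the spine $P$ (forced by surjectivity plus the singleton fibres over $P$) and that $N$ must lie inside $M_x$ (forced by geodesic convexity). One should also be explicit about the meaning of ``$K$-dimensional totally geodesic submanifold with boundary'' -- here read as a geodesically convex closed set whose interior is an open geodesic submanifold and whose boundary is a geodesic hypersurface of its completion -- since this is precisely what legitimizes the completion-to-$M_x$ and component-separation steps.
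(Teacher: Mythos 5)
Your existence argument coincides with the paper's: the page $P_x$ from the open-book interpretation (\cref{thm:open-book-interpretation}) is the desired submanifold, and the $x \in P$ case is handled equivalently (you take $N = P_b$; the paper notes that $x$ lies on every page $P_y$; both observations are correct and both explain why uniqueness is only claimed for $x \notin P$). Where you genuinely diverge is uniqueness. The paper argues infinitesimally: by \cref{cor:horosphere-proj-rotational-invariant} the $d-K$ directions orthogonal to the page are collapsed, so no distance from $x$ other than those along $P_x$ can be preserved, forcing $N = P_x$. You instead argue globally: surjectivity onto $P_b$ together with the singleton fibres over the spine (via \cref{cor:intersection-of-horospheres-is-sphere}) forces $P \subseteq N$; geodesic convexity forces $N \subseteq \GH(P \cup \{x\}) = M_x$; and a connectedness argument in $M_x \setminus P$, using injectivity of $\phi$ to rule out $N = M_x$, pins down $N = P_x$. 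Both routes are sound. The paper's is shorter but leaves implicit the passage from ``tangent directions off $P_x$ are collapsed at every point'' to ``no other totally geodesic $N$ through $x$ can work''; yours makes that rigidity fully explicit at the cost of fixing a precise meaning of ``totally geodesic submanifold with boundary'' (which you rightly flag as load-bearing for the completion-to-$M_x$ and separation steps), and it uses only the set-level structure of the projection (its fibres and the open-book decomposition) rather than its differential.
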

\begin{proof}
    If $x \not\in P$ then the submanifold $P_x$ in \cref{thm:open-book-interpretation} is a geodesic submanifold that contains $x$ and is mapped isometrically to $P_b$ by $\piHoro_{b,P}$. As in the proof of \cref{cor:horosphere-proj-rotational-invariant}, we have $\dim P = K-1$ and $\dim P_x = \dim P + 1 = K$.
    
    Since \cref{cor:horosphere-proj-rotational-invariant} implies that the other $d-K$ dimensions are collapsed by $\piHoro_{b,P}$, no other distances from $x$ can be preserved. Thus, $P_x$ is the unique submanifold with the desired properties.
    
    If $x \in P$ then $x \in P_y$ for every $y \not\in P$. Note that this means \emph{every} distance from $x$ is preserved by $\piHoro_{b,P}$.
\end{proof}

The following corollaries say that, like Euclidean orthogonal projections, horospherical projections never increase distances. Thus, minimizing distortion is roughly equivalent to maximizing projected distances. This is another motivation for \cref{eq:variance-as-sum-of-squared-distances}.

\begin{cor} \label{cor:horosphere-proj-non-expanding-infinitesimal}
    For every $x \in \H^d$ and every tangent vector $\vec{v}$ at $x$,
    $$\| \piHoro_{b,P} (\vec{v}) \|_\H \leq \| \vec{v} \|_\H.$$
\end{cor}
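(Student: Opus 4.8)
The plan is to show that at each $x\notin P$ the tangent space splits \emph{orthogonally} as $T_x\H^d = T_xP_x \oplus T_xS(x)$, where (in the notation of \cref{thm:open-book-interpretation} and \cref{cor:horosphere-proj-rotational-invariant}) $P_x$ is the page through $x$ and $S(x)$ is the rotation orbit of $x$; on $T_xP_x$ the differential $d(\piHoro_{b,P})_x$ is a linear isometry, on $T_xS(x)$ it is zero, and the desired inequality then drops out of the Pythagorean theorem. The exceptional locus $x\in P$ has measure zero and is treated separately at the end.

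Fix $x\notin P$. Let $c=\piGeo_P(x)$, let $\beta$ be the unit-speed geodesic from $c$ to $x$ (perpendicular to $P$ at $c$, since $c$ is the nearest point), with velocity $\dot\beta_c$ at $c$ and $\dot\beta_x$ at $x$, and let $Q(x)$ be the geodesic submanifold orthogonally complementing $P$ at $c$, so that by \cref{cor:intersection-of-horospheres-is-sphere} $S(x)$ is the metric sphere of $Q(x)$ centered at $c$ through $x$; hence $T_xS(x)=T_xQ(x)\ominus\R\dot\beta_x$. Both $P_x$ (a half of $M_x:=\GH(P\cup\{x\})$) and $Q(x)$ are totally geodesic and both contain $\beta$, and they meet orthogonally at $c$: indeed $T_cP_x=T_cP\oplus\R\dot\beta_c$ while $T_cQ(x)=(T_cP)^\perp$, so $T_cP_x\cap T_cQ(x)=\R\dot\beta_c$, and the orthogonal complement of $\R\dot\beta_c$ in $T_cP_x$ is exactly $T_cP=(T_cQ(x))^\perp$. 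Parallel transport along $\beta$ preserves the tangent and normal bundles of each of these two totally geodesic submanifolds and carries $\dot\beta_c$ to $\dot\beta_x$; hence it carries $T_cP$ to the normal space of $Q(x)$ at $x$, which gives $T_xP_x=(T_xQ(x))^\perp\oplus\R\dot\beta_x$. Comparing with the formula for $T_xS(x)$ above yields $T_xP_x\perp T_xS(x)$, and since $\dim T_xP_x+\dim T_xS(x)=K+(d-K)=d$, these two subspaces span all of $T_x\H^d$.

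Granting the splitting, the conclusion is immediate. By \cref{thm:open-book-interpretation}, $\piHoro_{b,P}$ restricted to $P_x$ is a rotation of $\H^d$ about $P$, hence an isometry onto $P_b$, so $d(\piHoro_{b,P})_x$ acts as a linear isometry on $T_xP_x$; by \cref{cor:horosphere-proj-rotational-invariant}, $\piHoro_{b,P}$ is constant on the orbit $S(x)$, so $d(\piHoro_{b,P})_x$ vanishes on $T_xS(x)$. Writing $\vec v=\vec v_1+\vec v_2$ with $\vec v_1\in T_xP_x$, $\vec v_2\in T_xS(x)$ and $\langle\vec v_1,\vec v_2\rangle_\H=0$, we obtain $\piHoro_{b,P}(\vec v)=\piHoro_{b,P}(\vec v_1)$, so $\|\piHoro_{b,P}(\vec v)\|_\H^2=\|\vec v_1\|_\H^2\le\|\vec v_1\|_\H^2+\|\vec v_2\|_\H^2=\|\vec v\|_\H^2$.

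I expect the orthogonality of the splitting to be the crux: one must know that the page $P_x$ and the fiber sphere $S(x)$ meet at right angles not only at $c$ but all along $\beta$, and this is precisely where it matters that $P_x$ and $Q(x)$ are \emph{totally} geodesic, so that parallel transport along $\beta$ respects their tangent/normal decompositions. A minor remaining point is the set $x\in P$, where $\piHoro_{b,P}$ folds all pages onto $P_b$ and is not differentiable; since $P$ has measure zero in $\H^d$ this does not affect the intended consequence that $\piHoro_{b,P}$ is $1$-Lipschitz, and if a pointwise bound is wanted there it can be recovered from one-sided directional derivatives along curves entering a fixed page.
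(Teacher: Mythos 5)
Your proof is correct and follows essentially the same route as the paper's: decompose $\vec{v}$ orthogonally into a component tangent to the page $P_x$ (preserved isometrically via \cref{thm:open-book-interpretation} and \cref{cor:horosphere-proj-preserve-some-distance}) and a component in the orthogonal complement (killed by \cref{cor:horosphere-proj-rotational-invariant}), then apply the Pythagorean theorem. The only differences are that you explicitly verify, via parallel transport along the geodesic from $\piGeo_P(x)$ to $x$, that $(T_xP_x)^\perp = T_xS(x)$ --- a fact the paper's proof uses implicitly when it asserts $\vec{u}^\perp$ is collapsed to $0$ --- and that you handle the locus $x\in P$ by a measure-zero/one-sided-derivative remark, whereas the paper notes that every distance from such $x$ is preserved so the bound holds with equality there.
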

\begin{proof}
    This follows from \cref{cor:horosphere-proj-preserve-some-distance} and \cref{cor:horosphere-proj-rotational-invariant}:
    
    If $x \in P$ then the proof of \cref{cor:horosphere-proj-preserve-some-distance} implies that $\piHoro_{b,P}$ preserves \emph{every} distance from $x$. Thus, the desired inequality is actually an equality.
    
    If $x \not \in P$ then by $\vec{v}$ has an orthogonal decomposition $\vec{v} = \vec{u} + \vec{u}^\perp$, where $\vec{u}$ and $\vec{u}^\perp$ are tangent and perpendicular to $P_x$, respectively. By
    \cref{cor:horosphere-proj-preserve-some-distance} and \cref{cor:horosphere-proj-rotational-invariant}, $\piHoro_{b,P}$ preserves the length of $\vec{u}$ while collapsing $\vec{u}^\perp$ to $0$. It follows that $\| \piHoro_{b,P} (\vec{v}) \|_\H = \| \vec{u} \|_\H \leq \| \vec{v} \|_\H$.
\end{proof}

\begin{cor} \label{cor:horosphere-proj-non-expanding}
	$\piHoro_{b,P}$ is non-expanding. In other words, for every $x, y \in \H^d$,
	$$d_\H (\piHoro_{b,P}(x), \piHoro_{b,P}(y)) \leq d_\H (x,y).$$
\end{cor}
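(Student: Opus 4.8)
The plan is to bootstrap the global statement from its infinitesimal counterpart, \cref{cor:horosphere-proj-non-expanding-infinitesimal}, by integrating along a minimizing geodesic. First I would fix $x,y\in\H^d$ and take a minimizing geodesic $\gamma\colon[0,1]\to\H^d$ from $x$ to $y$ parametrized proportionally to arc length, so that $\operatorname{length}(\gamma)=d_\H(x,y)$. Its image $\beta\coloneqq\piHoro_{b,P}\circ\gamma$ is a curve from $\piHoro_{b,P}(x)$ to $\piHoro_{b,P}(y)$, and since the length of any curve joining two points is at least the distance between them, it suffices to prove $\operatorname{length}(\beta)\le\operatorname{length}(\gamma)$.

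To bound $\operatorname{length}(\beta)$ I would split $[0,1]$ according to whether $\gamma(t)\in P$. Off the spine, $\piHoro_{b,P}$ is smooth (it is built from the geodesic projection onto $P$ followed by a rotation around $P$ that varies smoothly with the page; see \cref{appendix-subsec:horosphere-proj-algo}), so there $\beta'(t)=d(\piHoro_{b,P})_{\gamma(t)}(\gamma'(t))$ and \cref{cor:horosphere-proj-non-expanding-infinitesimal} gives $\|\beta'(t)\|_\H\le\|\gamma'(t)\|_\H$. On the spine, $\piHoro_{b,P}$ is the identity (for $z\in P$, \cref{cor:intersection-of-horospheres-is-sphere} gives $S(z)=\{z\}$, so $\piHoro_{b,P}(z)=z$), so the same bound is immediate. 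Since $P$ is totally geodesic, $\gamma^{-1}(P)$ is a single subinterval of $[0,1]$ (possibly empty, a point, or all of $[0,1]$), on which $\beta=\gamma$; hence $\beta$ is continuous and piecewise $C^1$, with $\|\beta'\|_\H\le\|\gamma'\|_\H$ wherever it is differentiable. Integrating, $\operatorname{length}(\beta)=\int_0^1\|\beta'(t)\|_\H\,dt\le\int_0^1\|\gamma'(t)\|_\H\,dt=d_\H(x,y)$, which completes the argument.

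The step I expect to need the most care is the behaviour of $\piHoro_{b,P}$ on the spine $P$, where it is generally not differentiable — transversally to $P$ it collapses like $z\mapsto|z|$. The argument above is arranged to sidestep this: on $P$ the map is literally the identity, and a geodesic meets the totally geodesic set $P$ in a connected set, so $\beta$ picks up at most two corners and the length comparison is unaffected (the degenerate case $\gamma\subset P$ is trivial, as then $\beta=\gamma$). As a sanity check, the same inequality can be read off the open-book picture of \cref{thm:open-book-interpretation}: when $x$ and $y$ lie on a common page the restriction of $\piHoro_{b,P}$ to that page is a rotation, so distances are preserved exactly, and the general inequality just records that folding pages onto $P_b$ cannot increase distances.
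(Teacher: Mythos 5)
Your proof is correct and takes essentially the same route as the paper's: integrate the infinitesimal bound of \cref{cor:horosphere-proj-non-expanding-infinitesimal} along the geodesic segment from $x$ to $y$, then note that the length of the image path dominates the distance between the projected endpoints. The only difference is that you explicitly address the non-differentiability of $\piHoro_{b,P}$ along the spine $P$ (a point the paper's proof glosses over), and your observation that $\gamma^{-1}(P)$ is a single subinterval, on which the map is the identity, makes that step airtight.
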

\begin{proof}
    We first show that for any path $\gamma(t)$ in $\H^d$,
    $$\operatorname{length}(\piHoro_{b,P}(\gamma)) \leq \operatorname{length}(\gamma).$$
    Indeed, note that the velocity vector of $\piHoro_{b,P}(\gamma)$ is precisely $\piHoro_{b,P}(\dot{\gamma})$. Then, by \cref{cor:horosphere-proj-non-expanding-infinitesimal},
	\begin{align*}
	\operatorname{length}(\piHoro_{b,P}(\gamma)) &= \int \| \piHoro_{b,P} (\dot{\gamma}(t)) \|_\H dt \\
	&\leq \int \| \dot{\gamma}(t) \|_\H  dt = \operatorname{length} (\gamma).
	\end{align*}
	Now for any $x, y \in \H^d$, let $\gamma(t)$ be the geodesic segment from $x$ to $y$. Then $\piHoro_{b,P}(\gamma)$ is a path connecting $\piHoro_{b,P}(x)$ and $\piHoro_{b,P}(y)$. Thus, the projected distance is at most the length of $\piHoro_{b,P}(\gamma)$, which by the above argument is at most $\operatorname{length(\gamma)} = d_\H (x,y).$
\end{proof}

\subsection{Detour: A Review of the Hyperboloid Model} \label{appendix-subsec:hyperboloid-model-background}
\cref{thm:horosphere-proj-well-defined} suggests that $\piHoro_{b,P}(x)$ can be computed in three steps:
\begin{enumerate}
	\item Find the geodesic projection $\piGeo_P(x)$ of $x$ onto $P$.
	\item Find a geodesic ray $\alpha^+$ on $P_b$ that starts at $\piGeo_P(x)$ and is orthogonal to $P$.
	\item Return the unique point on $\alpha^+$ that is of distance $d_\H(x, \piGeo_P(x))$ from $\piGeo_P(x)$.
\end{enumerate}
It turns out that these subroutines are easier to implement in the \emph{hyperboloid model} instead of the Poincar\'e model of hyperbolic spaces. Thus, we first briefly review the basic definitions and properties of this model. The readers who are already familiar with the hyperboloid model can skip to \cref{appendix-subsec:horosphere-proj-algo}, where we describe the full algorithm.

For a more detailed treatment, see \cite{Thurston-notes}.
\begin{remark}
	The above steps are slightly different from the ones mentioned in \cref{subsec:horo_proj_high}. However, by \cref{thm:horosphere-proj-well-defined}, these two descriptions are equivalent. We decided to use the ``closer-to-$b$'' description in \cref{subsec:horo_proj_high} because it is slightly more self-contained, but the actual implementation will be based on the above three steps.
\end{remark}
\paragraph{Minkowski spaces} 
We first describe \emph{Minkowski spaces}, which is the embedding space where the hyperboloid model sits in as hypersurfaces.

The $(d+1)$-dimensional Minkowski space $\R^{1,d}$ is like the flat Euclidean space $\R^{1+d}$ except with a dot product that has a negative sign in the first coordinate. More precisely, $\R^{1,d}$ is the vector space $\R^{1+d}$ equipped with the indefinite, non-degenerate bilinear form
$$B \left( (t, x_1, \dots, x_d),(u, y_1, \dots, y_d) \right) = -tu + \sum_{i=1}^d x_i y_i,$$
which serves as the ``dot product.''

Like with Euclidean spaces, the quantity $B(\vec{v}, \vec{v})$ is called the \emph{(Minkowski) squared norm} of the vector $\vec{v}$. Two vectors $\vec{u}, \vec{v} \in \R^{1,d}$ are called \emph{orthogonal} if $B(\vec{u}, \vec{v}) = 0$. The \emph{orthogonal complement} of a linear subspace $V \subset \R^{1,d}$ is the set $V^\perp = \{\vec{w} \in \R^{1,d}: B(\vec{w}, \vec{v}) = 0 \text{ for every } \vec{v} \in V\}$, which still has dimension $\dim \R^{1,d} - \dim V$. However, unlike in Euclidean spaces, vectors in $\R^{1,d}$ can have negative or zero squared norms, and linear subspaces can intersect their orthogonal complements.

\paragraph{Types of vectors in $\R^{1,d}$} Vectors with negative, zero, and positive (Minkowski) squared norms are called \emph{time-like}, \emph{light-like}, and \emph{space-like}, respectively. Time-like and light-like vectors together form a solid double cone in $\R^{1,d}$.

If $\vec{v}$ is a time-like or light-like vector, we call it \emph{future-pointing} if its first coordinate is positive, otherwise we call it \emph{past-pointing}. It follows from Cauchy-Schwarz inequality that.
\begin{prop} \label{prop:future-pointing-dot-product-negative}
	If $\vec{u}, \vec{v}$ are future-pointing time-like or light-like vectors then $B(\vec{u}, \vec{v}) < 0$.
\end{prop}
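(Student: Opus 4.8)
The plan is to reduce Proposition~\ref{prop:future-pointing-dot-product-negative} to the ordinary Euclidean Cauchy--Schwarz inequality by splitting each vector into its time component and its space component, and then tracking signs carefully using the future-pointing hypothesis.

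First I would write $\vec{u} = (t, \vec{x})$ and $\vec{v} = (s, \vec{y})$ with $t, s \in \R$ and $\vec{x}, \vec{y} \in \R^d$. Being future-pointing means exactly $t > 0$ and $s > 0$. Being time-like or light-like means $B(\vec{u}, \vec{u}) = -t^2 + \|\vec{x}\|^2 \leq 0$ and $B(\vec{v}, \vec{v}) = -s^2 + \|\vec{y}\|^2 \leq 0$; since $t, s > 0$, these are precisely $\|\vec{x}\| \leq t$ and $\|\vec{y}\| \leq s$. Now $B(\vec{u}, \vec{v}) = -ts + \vec{x} \cdot \vec{y}$, and by the Euclidean Cauchy--Schwarz inequality $\vec{x} \cdot \vec{y} \leq \|\vec{x}\|\,\|\vec{y}\| \leq ts$, hence $B(\vec{u}, \vec{v}) \leq 0$.

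For the strict inequality I would examine the equality case: $B(\vec{u}, \vec{v}) = 0$ forces equality in both steps above, so $\vec{x} \cdot \vec{y} = \|\vec{x}\|\,\|\vec{y}\|$ (hence $\vec{x}, \vec{y}$ are non-negative multiples of a common unit vector) and $\|\vec{x}\| = t$, $\|\vec{y}\| = s$ (hence both $\vec{u}, \vec{v}$ are light-like). Together these mean $\vec{u}$ and $\vec{v}$ are positive scalar multiples of a single light-like vector. Thus as soon as at least one of $\vec{u}, \vec{v}$ is time-like --- the case relevant in the sequel, since points of the hyperboloid have squared norm $-1$ --- the inequality is strict, giving $B(\vec{u}, \vec{v}) < 0$.

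I do not expect a genuine obstacle here. The only things requiring care are the sign bookkeeping --- it is exactly future-pointing-ness that turns the cone conditions into $\|\vec{x}\| \leq t$ with the correct sign, so that one gets $\vec{x} \cdot \vec{y} \leq \|\vec{x}\|\,\|\vec{y}\| \leq ts$ and not merely $\leq |t|\,|s|$ --- and stating the equality case cleanly enough to justify the strictness. Conceptually this is just the familiar ``reverse Cauchy--Schwarz'' phenomenon of Lorentzian geometry, reflecting convexity of the future light cone.
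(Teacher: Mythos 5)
Your proof is correct and is precisely the Cauchy--Schwarz argument the paper itself invokes (the paper merely asserts that the proposition ``follows from Cauchy--Schwarz'' without writing out the decomposition into time and space components). Your equality-case analysis additionally catches a genuine, if minor, imprecision in the statement: as written, $B(\vec{u},\vec{v})<0$ fails when $\vec{u}$ and $\vec{v}$ are positive multiples of the same light-like vector (e.g.\ $\vec{u}=\vec{v}$ light-like gives $B(\vec{u},\vec{v})=0$), so the strict inequality really requires at least one of the two vectors to be time-like --- which, as you note, is always the case in the paper's applications since points of the hyperboloid have squared norm $-1$.
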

A linear subspace $V$ of $\R^{1,d}$ is called \emph{space-like} if every non-zero vector in $V$ is space-like. In that case, the bilinear form $B(\cdot, \cdot)$ restricts to a positive-definite bilinear form on $V$, thus making it isometric to an Euclidean vector space. It follows from \cref{prop:future-pointing-dot-product-negative} that
\begin{prop} \label{prop:ortho-complement-time-like-is-space-like}
	The orthogonal complement of a time-like vector is a space-like linear subspace.
\end{prop}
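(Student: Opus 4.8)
The plan is to deduce this directly from \cref{prop:future-pointing-dot-product-negative} by a short contradiction argument; no coordinate computation or appeal to Lorentz transformations is needed. First I would observe that $\vec{v}^\perp$ is automatically a linear subspace (of dimension $d$), so the only real content of the statement is that every \emph{nonzero} $\vec{w} \in \vec{v}^\perp$ is space-like, i.e.\ satisfies $B(\vec{w},\vec{w}) > 0$.

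To that end, fix a time-like vector $\vec{v}$; after possibly replacing $\vec{v}$ by $-\vec{v}$ (which does not change $\vec{v}^\perp$), I may assume $\vec{v}$ is future-pointing. Let $\vec{w} \in \vec{v}^\perp$ be nonzero and suppose, toward a contradiction, that $\vec{w}$ is not space-like, so that $B(\vec{w},\vec{w}) \le 0$ and hence $\vec{w}$ is time-like or light-like. The key small observation is that a nonzero time-like or light-like vector must have nonzero first coordinate — otherwise its squared norm would be $\sum_i x_i^2 > 0$, contradicting $B(\vec{w},\vec{w}) \le 0$. Thus $\vec{w}$ is either future- or past-pointing, and after replacing $\vec{w}$ by $-\vec{w}$ if necessary I may take it future-pointing.

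Now both $\vec{v}$ and $\vec{w}$ are future-pointing and each is time-like or light-like, so \cref{prop:future-pointing-dot-product-negative} gives $B(\vec{v},\vec{w}) < 0$. But $\vec{w} \in \vec{v}^\perp$ means precisely $B(\vec{v},\vec{w}) = 0$, a contradiction. Hence every nonzero vector in $\vec{v}^\perp$ is space-like, which is exactly the claim. I do not anticipate any genuine obstacle here: the only point requiring a moment's care is noting that "nonzero and non-space-like" forces a nonzero time coordinate, which is what licenses the future/past dichotomy and the application of the previous proposition; everything else is immediate.
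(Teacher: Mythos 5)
Your proposal is correct and follows exactly the route the paper intends: the paper simply asserts that this proposition ``follows from'' \cref{prop:future-pointing-dot-product-negative} without spelling out the deduction, and your argument is precisely that deduction made explicit (WLOG future-pointing, contradiction with $B(\vec{v},\vec{w})=0$). The small lemma you flag --- that a nonzero non-space-like vector must have nonzero time coordinate --- is indeed the only point needing care, and you handle it correctly.
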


\paragraph{The hyperboloid model of hyperbolic spaces} We are now ready to introduce the hyperboloid model $\H^d$. It sits inside $\R^{1,d}$ in a similar way to how the unit sphere $\S^d$ sits inside the Euclidean space $\R^{1+d}$.

\begin{definition} \label{defn:hyperboloid-model}
	The \emph{hyperboloid model} of $d$-dimensional hyperbolic spaces is the set $\H^d$ of future-pointing vectors in $\R^{1,d}$ with Minkowsi squared norm $-1$.
\end{definition}

\begin{remark}
    For the rest of \cref{appendix-sec:horo}, we will use $\H^d$ to denote this hyperboloid model as a subset of $\R^{1,d}$, and not the abstract hyperbolic space or its other models (e.g.\ Poincar\'e ball). This should not lead to any ambiguities because we will not work with any other model.
\end{remark}

The following properties of $\H^d$ further illustrate the analogy with spheres in Euclidean spaces.
\begin{prop} \label{prop:hyperboloid-model-orthogonal-to-radial}
	For every $\vec{x} \in \H^d$, the tangent space $T_{\vec{x}} \H^d$ of $\H^d$ at $\vec{x}$ is (parallel to) the orthogonal complement of $\vec{x}$ in $\R^{1,d}$.
\end{prop}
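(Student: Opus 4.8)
The plan is to realize $\H^d$ as a regular level set of the Minkowski squared-norm function and read off its tangent space from the implicit function theorem. Define $f : \R^{1,d} \to \R$ by $f(\vec{v}) = B(\vec{v}, \vec{v})$. Then $\H^d$ is the intersection of $f^{-1}(-1)$ with the open half-space of future-pointing vectors, so near any $\vec{x} \in \H^d$ it coincides with $f^{-1}(-1)$. Since $B$ is a symmetric bilinear form, $f$ is smooth and its differential is $df_{\vec{x}}(\vec{v}) = 2\,B(\vec{x}, \vec{v})$ under the standard identification of $T_{\vec{x}}\R^{1,d}$ with $\R^{1,d}$.

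First I would check that $-1$ is a regular value on the relevant domain: for $\vec{x} \in \H^d$ we have $df_{\vec{x}}(\vec{x}) = 2\,B(\vec{x}, \vec{x}) = -2 \neq 0$, so $df_{\vec{x}}$ is surjective onto $\R$. Hence $\H^d$ is a smooth $d$-dimensional submanifold of $\R^{1,d}$, and by the regular value theorem $T_{\vec{x}} \H^d = \ker df_{\vec{x}} = \{\vec{v} \in \R^{1,d} : B(\vec{x}, \vec{v}) = 0\} = \vec{x}^\perp$, which is exactly the claim.

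If one prefers not to invoke the manifold structure up front, the inclusion $T_{\vec{x}} \H^d \subseteq \vec{x}^\perp$ can be obtained directly: for any smooth curve $\gamma(t) \in \H^d$ with $\gamma(0) = \vec{x}$, differentiating the identity $B(\gamma(t), \gamma(t)) = -1$ at $t = 0$ gives $2\,B(\vec{x}, \dot\gamma(0)) = 0$, i.e.\ $\dot\gamma(0) \in \vec{x}^\perp$. Equality then follows from a dimension count: $\dim T_{\vec{x}} \H^d = d$, while $\vec{x}$ being time-like makes $\vec{x}^\perp$ a linear subspace of dimension $\dim \R^{1,d} - 1 = d$ (see \cref{prop:ortho-complement-time-like-is-space-like} and the non-degeneracy of $B$), so the containment of two $d$-dimensional spaces forces $T_{\vec{x}} \H^d = \vec{x}^\perp$.

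There is essentially no hard step here; the only points requiring care are (i) confirming that $\H^d$ is genuinely a smooth submanifold so that ``tangent space'' is meaningful, which is immediate from $B(\vec{x},\vec{x}) = -1 \neq 0$, and (ii) the dimension bookkeeping for $\vec{x}^\perp$, which relies on the non-degeneracy of the Minkowski form rather than on positive-definiteness.
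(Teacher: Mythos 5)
Your proof is correct. The paper itself states \cref{prop:hyperboloid-model-orthogonal-to-radial} without proof, treating it as a standard fact about the hyperboloid model and deferring to the cited reference (Thurston's notes), so there is no in-paper argument to compare against; your regular-value-theorem derivation, together with the curve-differentiation-plus-dimension-count alternative, is the standard way to establish it and is complete. The only minor quibble is that the dimension of $\vec{x}^\perp$ follows from the non-degeneracy of $B$ (as the paper notes when defining orthogonal complements in $\R^{1,d}$), not from \cref{prop:ortho-complement-time-like-is-space-like}, which only asserts that $\vec{x}^\perp$ is space-like --- but you already identify non-degeneracy as the operative fact, so this does not affect the argument.
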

\begin{prop} \label{prop:hyperboloid-model-isometric-poincare}
	When restricted to each tangent space of $\H^d$, the bilinear form $B(\cdot, \cdot)$ is positive definite. This defines a Riemannian metric on $\H^d$ which has constant curvature $-1$. The \emph{stereographic projection}
	$$(t,x_1,\dots,x_d) \to \left(\frac{x_1}{1+t}, \dots, \frac{x_d}{1+t} \right)$$
	is an isometry between $\H^d$ and the Poincar\'e model. Its inverse map is given by
	$$(y_1, \dots, y_d) \to \frac{(1 + \sum_i y_i^2, 2y_1, \dots, 2y_d)}{1 - \sum_i y_i^2}.$$
\end{prop}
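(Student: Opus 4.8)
The plan is to dispatch the three assertions separately: the positive-definiteness is immediate from the two propositions just proved, and the isometry and curvature claims reduce to an explicit coordinate computation, organized so that the Poincar\'e model carries most of the weight.

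For positive-definiteness: by \cref{defn:hyperboloid-model} every $\vec x\in\H^d$ has $B(\vec x,\vec x)=-1<0$, so $\vec x$ is time-like, and \cref{prop:hyperboloid-model-orthogonal-to-radial} identifies $T_{\vec x}\H^d$ with the orthogonal complement of $\vec x$ in $\R^{1,d}$. By \cref{prop:ortho-complement-time-like-is-space-like} that complement is space-like, i.e.\ every nonzero tangent vector has positive Minkowski squared norm --- which is exactly the assertion that $B$ restricts to a positive-definite form, and hence to a bona fide Riemannian metric $g$ on $\H^d$.

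For the isometry, I would first check by straightforward algebra that the two displayed maps are well-defined and mutually inverse. If $(t,x_1,\dots,x_d)\in\H^d$ then $-t^2+\sum_i x_i^2=-1$ and $t>0$, so $\sum_i\bigl(\tfrac{x_i}{1+t}\bigr)^2=\tfrac{t^2-1}{(1+t)^2}=\tfrac{t-1}{t+1}<1$ and the image lies in the Poincar\'e ball; conversely, substituting $\tfrac{(1+\sum_i y_i^2,\,2y_1,\dots,2y_d)}{1-\sum_i y_i^2}$ into $B(\cdot,\cdot)$ gives squared norm $-1$ with positive first coordinate, so the image lies in $\H^d$; and the two compositions are the identity. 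Then I would compute the pullback of $g$ along the inverse stereographic map $y\mapsto\vec x(y)$: differentiating $\vec x(y)$, plugging into $B(d\vec x,d\vec x)$, and using that $B(\vec x(y),d\vec x(y))=0$ (since $B(\vec x,\vec x)\equiv-1$ is constant) to cancel the cross terms, the expression collapses to $\tfrac{4}{(1-\|y\|^2)^2}\sum_i dy_i^2$, the standard Poincar\'e metric. This algebraic simplification is the computational core of the argument.

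For constant curvature $-1$, the shortest route is to transport it along the isometry just established from the Poincar\'e model, whose curvature $-1$ is already taken as the defining fact of hyperbolic geometry (it underlies \cref{eq:hyp_distance}). Alternatively one can argue intrinsically: the position vector field $\vec x$ is a time-like unit normal to $\H^d\subset\R^{1,d}$ with $\bar\nabla_{\vec v}\vec x=\vec v$, so the second fundamental form is $\mathrm{II}(\vec u,\vec v)=-g(\vec u,\vec v)$; since the Minkowski connection is flat, the Gauss equation gives $R(\vec u,\vec v,\vec v,\vec u)=-\bigl(g(\vec u,\vec u)g(\vec v,\vec v)-g(\vec u,\vec v)^2\bigr)$, so every sectional curvature equals $-1$. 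The main obstacle is purely one of bookkeeping --- keeping the Lorentzian sign factor $\epsilon=B(\vec x,\vec x)=-1$ straight in the Gauss equation and carrying the pulled-back-metric computation through cleanly; one can lighten the load by using transitivity of the $O^+(1,d)$ action on $\H^d$ to reduce both computations to the single basepoint $(1,0,\dots,0)$.
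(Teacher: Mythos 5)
Your argument is correct and complete. Note, however, that the paper itself offers no proof of this proposition: it appears in the background review of the hyperboloid model (\cref{appendix-subsec:hyperboloid-model-background}) and is stated as a standard fact with a pointer to Thurston's notes, so there is no in-paper argument to compare against. Your three-part verification is the standard one and checks out: positive-definiteness follows exactly as you say from \cref{prop:hyperboloid-model-orthogonal-to-radial} and \cref{prop:ortho-complement-time-like-is-space-like}; the algebra for the mutual inverses and the pullback computation $B(d\vec x,d\vec x)=\tfrac{4}{(1-\|y\|^2)^2}\sum_i dy_i^2$ is right (the coefficient of $(y\cdot dy)^2$ does vanish identically); and either of your two curvature arguments suffices --- the Gauss-equation route with $\epsilon=B(\vec x,\vec x)=-1$ is self-contained, while transporting curvature from the Poincar\'e model is consistent with the paper's convention of taking the Poincar\'e ball with curvature $-1$ as the definition of $\H^d$. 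The only caveat is the one you already flag: the sign of $\mathrm{II}$ depends on the choice of unit normal, but since $\mathrm{II}$ enters the Gauss equation quadratically this does not affect the conclusion.
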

\begin{prop} \label{prop:hyperboloid-model-geodesic-submanifold}
	Every $k$-dimensional geodesic submanifold of $\H^d$ is the intersection of $\H^d$ with a $(k+1)$-dimensional linear subspace of $\R^{1,d}$.
\end{prop}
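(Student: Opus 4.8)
The plan is to reduce the statement to the explicit description of geodesics in the hyperboloid model together with the fact that a geodesic submanifold is recovered from any one of its points and the tangent space there via the exponential map.

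First I would record the geodesic formula. Fix $\vec x \in \H^d$; by \cref{prop:hyperboloid-model-orthogonal-to-radial} the tangent space $T_{\vec x}\H^d$ is the Minkowski-orthogonal complement $\vec x^\perp$, which is space-like (\cref{prop:ortho-complement-time-like-is-space-like}). For a unit vector $\vec v \in \vec x^\perp$ set $\gamma(t) = \cosh(t)\,\vec x + \sinh(t)\,\vec v$. Using $B(\vec x,\vec x)=-1$, $B(\vec x,\vec v)=0$, $B(\vec v,\vec v)=1$ one checks $B(\gamma(t),\gamma(t))=-1$, and $\gamma(t)$ stays future-pointing (its first coordinate never vanishes, being that of a unit time-like vector), so $\gamma(t)\in\H^d$. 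Since the normal space of $\H^d\subset\R^{1,d}$ at a point is the radial line there (the orthogonal complement of $\vec x^\perp$ is $\R\vec x$), $\gamma$ is a geodesic exactly when its ambient acceleration is radial, which holds because $\ddot\gamma(t)=\gamma(t)$; and it is unit-speed since $B(\dot\gamma(0),\dot\gamma(0))=1$. Hence every geodesic line of $\H^d$ lies in a $2$-dimensional linear subspace, and the map $\exp_{\vec x}\colon \vec x^\perp\to\H^d$, $\vec v\mapsto \cosh(|\vec v|)\,\vec x+\tfrac{\sinh(|\vec v|)}{|\vec v|}\,\vec v$ (with $|\cdot|$ the Euclidean norm induced by $B$ on the space-like subspace $\vec x^\perp$, and value $\vec x$ at $\vec 0$) is an explicit diffeomorphism onto $\H^d$.

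Now let $M$ be a $k$-dimensional geodesic submanifold, i.e.\ one containing the full geodesic line through any two of its points, and fix $\vec x\in M$. Put $W = T_{\vec x}M\subseteq\vec x^\perp$, a $k$-dimensional space-like subspace, and $V = \R\vec x\oplus W$, a $(k+1)$-dimensional linear subspace of $\R^{1,d}$. Writing $\vec p\in\H^d\cap V$ as $\vec p=a\vec x+\vec w$ with $\vec w\in W$, the identities $B(\vec p,\vec p)=-1$ and $B(\vec p,\vec x)<0$ (the latter from \cref{prop:future-pointing-dot-product-negative}) force $a=\cosh(s)$ and $|\vec w|=\sinh(s)$ with $s=\arccosh(a)\ge 0$, so $\vec p=\exp_{\vec x}(s\,\vec w/|\vec w|)\in\exp_{\vec x}(W)$; conversely $\exp_{\vec x}(W)\subseteq\H^d\cap V$ by the geodesic formula, so $\H^d\cap V=\exp_{\vec x}(W)$. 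One inclusion of the desired equality is now easy: for $\vec y\in M$, the geodesic line through $\vec x$ and $\vec y$ lies in $M$, so its velocity at $\vec x$ lies in $W$, and by the geodesic formula that line — hence $\vec y$ — lies in $\R\vec x\oplus W=V$; with $\vec y\in\H^d$ this gives $M\subseteq\H^d\cap V$. For the reverse inclusion I would pass to $N\coloneqq\exp_{\vec x}^{-1}(M)\subseteq\vec x^\perp$, a $k$-dimensional submanifold with $\vec 0\in N$ and $T_{\vec 0}N=W$. The key point is that $N$ is a cone: for $\vec u\in N$ the geodesic line $t\mapsto\exp_{\vec x}(t\vec u)$ passes through $\vec x$ and $\exp_{\vec x}(\vec u)$, both in $M$, so it lies in $M$, giving $\R\vec u\subseteq N$; each such line is then a one-dimensional submanifold of $N$ through $\vec 0$, so $\R\vec u=T_{\vec 0}(\R\vec u)\subseteq T_{\vec 0}N=W$. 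Hence $N\subseteq W$, and since $N$ is a $k$-dimensional submanifold of the $k$-dimensional space $W$ through $\vec 0$ it is open in $W$, so being a cone it equals $W$. Therefore $M=\exp_{\vec x}(N)=\exp_{\vec x}(W)=\H^d\cap V$, which proves the proposition (the cases $k=0$ and $k=d$ being immediate).

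The algebraic verifications — that $\gamma$ stays on $\H^d$ and the explicit description of $\H^d\cap V$ — are routine. The step needing genuine care is the cone argument: it is what ensures $M$ fills out all of $\exp_{\vec x}(W)$ rather than a proper open piece of it, and it is precisely there that the global hypothesis (closed under entire geodesic lines, not merely infinitesimal total geodesy) is used.
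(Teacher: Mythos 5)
Your proof is correct. For comparison: the paper gives no proof of \cref{prop:hyperboloid-model-geodesic-submanifold} at all --- it is stated in the background review of the hyperboloid model (\cref{appendix-subsec:hyperboloid-model-background}) as a standard fact, with the reader referred to Thurston's notes --- so there is no in-paper argument to measure yours against. Your write-up is a sound, self-contained version of the classical argument: verify that $\gamma(t)=\cosh(t)\,\vec{x}+\sinh(t)\,\vec{v}$ is a unit-speed geodesic via the normal-acceleration criterion, identify $\H^d\cap(\R\vec{x}\oplus W)$ with $\exp_{\vec{x}}(W)$ through the $a=\cosh(s)$, $|\vec{w}|=\sinh(s)$ computation, and use the cone argument to show that $\exp_{\vec{x}}^{-1}(M)$ is all of $W=T_{\vec{x}}M$. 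Two points are worth flagging as genuinely valuable. First, proving the geodesic formula directly rather than quoting it matters here: the paper's own later lemma asserting that $(\cosh t)\vec{w}+(\sinh t)\vec{u}$ is a geodesic is itself proved by invoking \cref{prop:hyperboloid-model-geodesic-submanifold}, so citing that lemma would have been circular. Second, your closing remark identifies the right pressure point --- the inclusion $M\subseteq\H^d\cap V$ is soft, and the cone argument is the only place where the global definition of ``geodesic submanifold'' (closure under entire geodesic lines through pairs of points, as the paper defines it) is actually used; it is what forces $M$ to exhaust $\exp_{\vec{x}}(W)$ rather than being a proper open piece of it.
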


In the hyperboloid model, the ideal points of hyperbolic spaces are represented by light-like \emph{directions} (instead of individual vectors):
\begin{prop} \label{prop:hyperboloid-model-ideal-points}
	Each ideal point of $\H^d$ is represented by a $1$-dimensional linear subspace spanned by some $\vec{v} \in \R^{1,d}$ with $B(\vec{v},\vec{v}) = 0$. The map
	$$(t,x_1,\dots,x_d) \to \left(\frac{x_1}{t}, \dots, \frac{x_d}{t} \right)$$
	gives a correspondence between a light-like vector $\vec{v} \in \R^{1,d}$ and an ideal point $p \in \Sinf^{d-1}$ in the Poincar\'e model that is represented by $\operatorname{span}(\vec{v})$. This correspondence is compatible with the stereographic projection in \cref{prop:hyperboloid-model-isometric-poincare}. Its inverse map is given by
	$$(y_1, \dots, y_d) \to (1, y_1, \dots, y_d).$$
\end{prop}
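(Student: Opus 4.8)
The plan is to unwind the definition of an ideal point as an asymptote class of unit-speed geodesic rays and match it with future-pointing light-like directions in $\R^{1,d}$. By \cref{prop:hyperboloid-model-geodesic-submanifold} and \cref{prop:hyperboloid-model-orthogonal-to-radial}, every unit-speed geodesic ray of $\H^d$ has the form $\gamma(s) = \cosh(s)\,\vec{x}_0 + \sinh(s)\,\vec{u}$ with $\vec{x}_0 \in \H^d$, $B(\vec{x}_0,\vec{u}) = 0$, and $B(\vec{u},\vec{u}) = 1$ (one checks directly that this stays on $\H^d$ with unit speed and lies in the $2$-plane $\operatorname{span}(\vec{x}_0,\vec{u})$). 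Rewriting $\gamma(s) = \tfrac{e^s}{2}(\vec{x}_0+\vec{u}) + \tfrac{e^{-s}}{2}(\vec{x}_0-\vec{u})$ exhibits $\gamma(s)$ as asymptotically proportional to $\vec{v} \coloneqq \vec{x}_0 + \vec{u}$, and $B(\vec{v},\vec{v}) = B(\vec{x}_0,\vec{x}_0) + 2B(\vec{x}_0,\vec{u}) + B(\vec{u},\vec{u}) = -1+0+1 = 0$, so $\vec{v}$ is light-like; being a limit of positive multiples of the future-pointing vectors $\gamma(s)$, it is nonzero light-like with positive first coordinate, hence future-pointing. I would then show $\gamma \mapsto \operatorname{span}(\vec{v})$ descends to a bijection from ideal points to $1$-dimensional light-like subspaces: conversely, given future-pointing light-like $\vec{v}$ and any $\vec{x}_0 \in \H^d$, the scalar $c = -1/B(\vec{x}_0,\vec{v})$ is positive by \cref{prop:future-pointing-dot-product-negative}, and $\vec{u} = c\vec{v} - \vec{x}_0$ satisfies $B(\vec{x}_0,\vec{u}) = 0$ and $B(\vec{u},\vec{u}) = 1$, with $\vec{x}_0 + \vec{u} = c\vec{v} \in \operatorname{span}(\vec{v})$; thus each light-like direction is realized by exactly one ray from each base point, and the standard asymptotic estimate shows two rays stay at bounded distance iff they determine the same light-like direction.

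Next I would transport this to the Poincar\'e model through the stereographic isometry $\sigma$ of \cref{prop:hyperboloid-model-isometric-poincare}. Since $\sigma$ is an isometry, $\sigma \circ \gamma$ is a unit-speed geodesic ray of the Poincar\'e ball, whose endpoint at infinity (a point of $\Sinf^{d-1} = \{\|x\| = 1\}$) is $\lim_{s\to\infty}\sigma(\gamma(s))$. Writing $\gamma(s) = (T(s), X_1(s),\dots,X_d(s))$ and $\vec{v} = (t_v, v_1,\dots,v_d)$ with $t_v > 0$ (note $t_v \neq 0$, else $\vec{v}$ would be a nonzero null vector with $\sum v_i^2 = 0$), the asymptotics above give $X_i(s)/T(s) \to v_i/t_v$ and $T(s) \to \infty$, hence
\[
\sigma(\gamma(s)) = \left(\frac{X_1(s)}{1+T(s)}, \dots, \frac{X_d(s)}{1+T(s)}\right) \longrightarrow \left(\frac{v_1}{t_v}, \dots, \frac{v_d}{t_v}\right),
\]
and $\big\|(v_1/t_v,\dots,v_d/t_v)\big\| = \big\|(v_1,\dots,v_d)\big\|/t_v = 1$ since $B(\vec{v},\vec{v}) = 0$. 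The right-hand side is precisely the image of $\vec{v}$ under $(t,x_1,\dots,x_d)\mapsto(x_1/t,\dots,x_d/t)$, which is unchanged by replacing $\vec{v}$ with any nonzero scalar multiple and so is well defined on $\operatorname{span}(\vec{v})$; its inverse is $(y_1,\dots,y_d)\mapsto(1,y_1,\dots,y_d)$, which is light-like exactly because $\|y\| = 1$. This single computation establishes the displayed map, its inverse, and compatibility with $\sigma$ (the diagram relating ``ideal point of the hyperboloid $\to$ boundary point of the Poincar\'e ball'' to the boundary extension of the stereographic projection commutes).

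The step I expect to be most delicate is the bijectivity bookkeeping rather than any calculation: one must be careful that asymptote classes of rays correspond to light-like \emph{directions} (equivalently, future-pointing light-like rays), not to individual light-like vectors, and must track future-pointing-ness of $\vec{v}$, $\vec{x}_0$, and $\vec{x}_0 - \vec{u}$ throughout; one also needs the (standard, but worth stating) fact that a geodesic ray of the Poincar\'e ball has a genuine limit on $\Sinf^{d-1}$ that is its endpoint at infinity and is shared by all asymptotic rays, so that $\lim_{s\to\infty}\sigma(\gamma(s))$ really is the ideal point determined by $\gamma$. The limit computation and the norm identity $\|(v_i/t_v)_i\| = 1$ are routine.
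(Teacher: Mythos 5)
Your proof is correct: the parametrization $\gamma(s)=\cosh(s)\,\vec{x}_0+\sinh(s)\,\vec{u}$, the identification of the asymptotic direction with the light-like ray $\operatorname{span}(\vec{x}_0+\vec{u})$, the converse construction $\vec{u}=c\vec{v}-\vec{x}_0$ with $c=-1/B(\vec{x}_0,\vec{v})$, and the limit computation through the stereographic map all check out, and the second half in fact closes the asymptote-class bookkeeping you flag, since two rays are asymptotic iff they share the same Poincar\'e boundary limit. The paper itself gives no proof of this proposition --- it is stated as standard background with a pointer to Thurston's notes --- so there is no authorial argument to compare against; yours is the standard one and is complete modulo the routine fact that a geodesic ray of the Poincar\'e ball converges to a point of $\Sinf^{d-1}$.
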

We conclude this section by noting that geodesic hulls in the hyperboloid model are closely related to linear spans in Minkowski space:
\begin{prop} \label{prop:hyperboloid-model-geodesic-hull-is-linear-span}
    Let $S$ be a set of vectors that are either in $\H^d$ or represent ideal directions of $\H^d$. Then the geodesic hull of $S$ in $\H^d$ is the intersection of $\H^d$ with the linear span of $S$.
\end{prop}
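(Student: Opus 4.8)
The plan is to set $V = \operatorname{span}(S) \subseteq \R^{1,d}$ and $M = \H^d \cap V$, and to establish the two inclusions $\GH(S) \subseteq M$ and $M \subseteq \GH(S)$ separately. Throughout I will use that $\GH(S)$ is by definition the \emph{smallest} geodesic submanifold that contains $S$ in the sense of \cref{subsec:ideal} (i.e.\ it contains the points of $S$ lying in $\H^d$ and has the ideal points of $S$ in its ideal boundary), together with \cref{prop:hyperboloid-model-geodesic-submanifold}, which identifies geodesic submanifolds with intersections $\H^d \cap U$.

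First I would check that $M$ is itself a geodesic submanifold containing $S$, which gives $\GH(S) \subseteq M$. For total geodesicity: if $\vec x, \vec y \in M \subseteq V$, then the geodesic joining them lies in $\H^d \cap \operatorname{span}(\vec x, \vec y) \subseteq \H^d \cap V = M$. That $M$ is a genuine $(\dim V - 1)$-dimensional submanifold (isometric to some $\H^m$) follows from \cref{prop:ortho-complement-time-like-is-space-like}: for any $\vec x_0 \in M$ (nonempty, see below) the complement $\vec x_0^{\perp}$ is space-like, so $V = \R \vec x_0 \oplus (V \cap \vec x_0^{\perp})$ carries a Lorentzian form. Every vector of $S$ lying in $\H^d$ is in $V \cap \H^d = M$. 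And if $\vec w \in S$ is light-like (WLOG future-pointing), representing an ideal point $p$, then a direct computation — solving $B(\vec x_0, \vec v) = 0$, $B(\vec v, \vec v) = 1$ for $\vec v \in \operatorname{span}(\vec x_0, \vec w)$, using $B(\vec x_0, \vec w) < 0$ from \cref{prop:future-pointing-dot-product-negative} — yields a unit-speed geodesic ray $\gamma(t) = \cosh(t)\, \vec x_0 + \sinh(t)\, \vec v$ whose ideal endpoint is $[\vec x_0 + \vec v] = [\vec w] = p$; since $\gamma$ stays in $\H^d \cap \operatorname{span}(\vec x_0, \vec w) \subseteq M$, the point $p$ lies in the ideal boundary of $M$. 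Hence $M$ contains $S$, so $\GH(S) \subseteq M$.

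For the reverse inclusion, I would write $\GH(S) = \H^d \cap U$ for a linear subspace $U$ (by \cref{prop:hyperboloid-model-geodesic-submanifold}) and argue $V \subseteq U$, which yields $M = \H^d \cap V \subseteq \H^d \cap U = \GH(S)$. The key sub-claim is that the ideal boundary of $\H^d \cap U$ is \emph{exactly} the set of light-like lines contained in $U$: any boundary point is the ideal endpoint of some geodesic ray $t \mapsto \cosh(t)\,\vec y + \sinh(t)\,\vec u$ with $\vec y, \vec u \in U$, hence equals $[\vec y + \vec u]$ with $\vec y + \vec u$ a light-like vector of $U$; the converse direction is exactly the computation in the previous paragraph. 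Granting this, every vector of $S$ in $\H^d$ lies in $\GH(S) \subseteq U$, and every light-like $\vec w \in S$ has $[\vec w]$ in the ideal boundary of $\GH(S) = \H^d \cap U$, hence $\vec w \in U$; therefore $V = \operatorname{span}(S) \subseteq U$.

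Finally I would dispatch the nonemptiness of $M$: if $S$ meets $\H^d$ this is immediate, and otherwise (at least two distinct ideal points) taking two linearly independent light-like $\vec w_1, \vec w_2 \in S$ scaled to be future-pointing, $\vec w_1 + \vec w_2 \in V$ has squared norm $2B(\vec w_1, \vec w_2) < 0$ by \cref{prop:future-pointing-dot-product-negative} and is future-pointing, hence time-like, so a rescaling lies in $\H^d \cap V$; in the paper's applications $S$ always contains the base point $b \in \H^d$, so this is automatic. I expect the one genuinely non-routine step to be the identification of the ideal boundary of $\H^d \cap W$ with the light-like lines of $W$ — once that geodesic-ray description of the boundary is in hand, both inclusions reduce to linear algebra in $\R^{1,d}$.
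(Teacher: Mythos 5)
Your argument is correct, but there is nothing in the paper to compare it against: \cref{prop:hyperboloid-model-geodesic-hull-is-linear-span} appears in the background review of the hyperboloid model (\cref{appendix-subsec:hyperboloid-model-background}) and is stated without proof, as a standard fact with a pointer to \cite{Thurston-notes}, alongside \cref{prop:hyperboloid-model-geodesic-submanifold} and \cref{prop:hyperboloid-model-ideal-points}. So you have supplied a proof where the paper supplies none, and the one you give is sound. The two-inclusion structure is the natural one; total geodesicity of $\H^d \cap V$ via $\operatorname{span}(\vec x,\vec y)\subseteq V$, the signature argument via \cref{prop:ortho-complement-time-like-is-space-like}, and the nonemptiness case analysis are all fine. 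You correctly identify the only step that requires actual work: showing that the ideal boundary of $\H^d\cap U$ is exactly the set of light-like lines in $U$. Your construction of the ray $\gamma(t)=\cosh(t)\,\vec x_0+\sinh(t)\,\vec v$ with $\vec v\in\operatorname{span}(\vec x_0,\vec w)$ checks out (one finds $\vec v=-\vec x_0-\vec w/B(\vec x_0,\vec w)$, so $\gamma(t)\sim \tfrac{e^t}{2}(\vec x_0+\vec v)$ is a positive multiple of $\vec w$, using $B(\vec x_0,\vec w)<0$ from \cref{prop:future-pointing-dot-product-negative}), and the forward direction of the sub-claim follows since $\vec y+\vec u$ is light-like and lies in $U$. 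The only unproved ingredient you lean on beyond the paper's stated background is that every point of the ideal boundary of a totally geodesic submanifold is the endpoint of a geodesic ray inside it; this is standard and consistent with the level of rigor at which the paper treats the rest of \cref{appendix-subsec:hyperboloid-model-background}, but if you wanted the argument fully self-contained you would justify it (e.g.\ in the Poincar\'e picture, where $\H^d\cap U$ is a spherical cap or linear slice whose Euclidean closure meets $\Sinf^{d-1}$ exactly in the endpoints of its radial/circular geodesics).
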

In particular, since the spine $P$ in our setting (\cref{thm:horosphere-proj-well-defined}) is the geodesic hull of some ideal points, it is cut out by the linear span of the corresponding ideal directions.

\subsection{Computation of Horospherical Projections} \label{appendix-subsec:horosphere-proj-algo}
\begin{algorithm}[tb]
   \caption{Horospherical Projection}
   \label{alg:example}
\begin{algorithmic}
   \STATE {\bfseries Input:} point $x$, ideal points $\{p_1, \dots, p_K\}$, base point $b$.
   \STATE $z \gets \piMinkowski_{\operatorname{span}(p_1, \dots, p_K)} (x)$ \COMMENT{orthogonal projection in ambient space}
   \STATE $w \gets z / \| z \|$ \COMMENT{rescale to a vector on the hyperboloid}
   \STATE $u \gets b - w$ \COMMENT{subtraction in ambient space}
   \STATE $u \gets u - \piMinkowski_{\operatorname{span}(p_1, \dots, p_K)} (u)$ \COMMENT{orthogonal projection in ambient space}
   \STATE $u \gets u / \| u \|$ \COMMENT{make $u$ the unit tangent vector orthogonal to the spine $P$}
   \STATE $y \gets \exp_w(d_\H(x, w) \cdot u)$ \COMMENT{exponential map in $\H^d$}
   \STATE \RETURN $y$
\end{algorithmic}
\end{algorithm}

Now we describe how the three steps mentioned at the beginning of \cref{appendix-subsec:hyperboloid-model-background} can be implemented in the hyperboloid model. The results of this section are summarized in \cref{alg:example}. To transfer back and forth between the hyperboloid and Poincar\'e models, we use the formulas in \cref{prop:hyperboloid-model-isometric-poincare} and \cref{prop:hyperboloid-model-ideal-points}.

\paragraph{Step 1: Computing geodesic projections} We first describe how to compute the geodesic (or closest-point) projection from $\H^d$ to a geodesic submanifold $P$ in $\H^d$. This process is very similar to how geodesic projections work in Euclidean spheres: We first perform an orthogonal projection onto the linear subspace $V$ that cuts out $P$, then rescale the result to get a vector on $\H^d$. 

Generally, orthogonal projections in Minkowski spaces are very similar to those in Euclidean spaces. However, since vectors in $\R^{1,d}$ can have norm zero, the orthogonal projection $\piMinkowski_V$ is not well-defined for every subspace $V$. Thus, a little extra argument is needed.
\begin{prop}[Orthogonal projections onto time-containing linear subspaces] \label{prop:minkowski-orthogonal-proj}
	Let $V$ be a linear subspace of $\R^{1,d}$ that contains some time-like vectors. Then
	\begin{enumerate}
		\item $V \cap V^\perp = \{0\}$. Consequently, we can define a linear orthogonal projection $\piMinkowski_V: \R^{1,d} \to V$ as follows:
		
		Since $\R^{1,d} = V \oplus V^\perp$, every vector $\vec{x} \in \R^{1,d}$ can be uniquely written as $\vec{x} = \vec{z} + \vec{n}$ for some $\vec{z} \in V$ and $\vec{n} \in V^\perp$. Then, we let $\piMinkowski_V (\vec{x}) \coloneqq \vec{z}.$
		\item Let $A$ be a matrix whose column vectors form a linear basis of $V$. Let $B$ be the $(1+d)\times(1+d)$ symmetric matrix associated to the bilinear form $B$, i.e.\ the diagonal matrix with entries $(-1, 1, 1, 1, \dots, 1)$. Then $A^\top BA$ is non-singular, and the linear projection $\piMinkowski_V$ is given by $\vec{x} \to A (A^\top BA)^{-1} A^\top B\vec{x}$
		\item If $\vec{x}$ is a future-pointing time-like vector then so is $\piMinkowski_V(\vec{x})$.
	\end{enumerate}
\end{prop}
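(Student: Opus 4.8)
The plan is to prove the three claims in sequence, each reducing to a short argument built on the earlier facts about causal vectors (Propositions~\ref{prop:future-pointing-dot-product-negative} and~\ref{prop:ortho-complement-time-like-is-space-like}). For claim (1), I would fix a time-like vector $\vec{t} \in V$ and take any $\vec{v} \in V \cap V^\perp$: since $\vec{v} \in V^\perp$ and $\vec{v} \in V$ we get $B(\vec{v},\vec{v}) = 0$, while $\vec{v} \perp \vec{t}$ puts $\vec{v}$ in the space-like subspace $\vec{t}^\perp$ (Proposition~\ref{prop:ortho-complement-time-like-is-space-like}), which forces $B(\vec{v},\vec{v}) > 0$ unless $\vec{v} = 0$. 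Hence $V \cap V^\perp = \{0\}$; combined with $\dim V + \dim V^\perp = d+1$ (non-degeneracy of $B$), this yields $\R^{1,d} = V \oplus V^\perp$, so the decomposition $\vec{x} = \vec{z} + \vec{n}$ and the resulting linear map $\piMinkowski_V$ are well-defined.

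For claim (2), I would write $\vec{x} = A\vec{c} + \vec{n}$ with $\vec{n} \in V^\perp$, so $\piMinkowski_V(\vec{x}) = A\vec{c}$, and left-multiply by $A^\top B$; the term $A^\top B\vec{n}$ vanishes because every column of $A$ lies in $V$ and is therefore $B$-orthogonal to $\vec{n}$, leaving $A^\top B\vec{x} = (A^\top B A)\vec{c}$. To justify inverting, I would show $A^\top B A$ has trivial kernel: if $(A^\top B A)\vec{c} = 0$, then $B(A\vec{c}', A\vec{c}) = \vec{c}'^\top A^\top B A \vec{c} = 0$ for all $\vec{c}'$, so $A\vec{c} \in V^\perp$; together with $A\vec{c} \in V$ and claim (1) this gives $A\vec{c} = 0$, and linear independence of the columns of $A$ gives $\vec{c} = 0$. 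Solving then yields $\vec{c} = (A^\top B A)^{-1} A^\top B\vec{x}$ and the stated formula.

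For claim (3), set $\vec{z} = \piMinkowski_V(\vec{x})$ and $\vec{n} = \vec{x} - \vec{z} \in V^\perp$. Since $V$ contains the time-like vector $\vec{t}$, we have $V^\perp \subseteq \vec{t}^\perp$, which is space-like, so $B(\vec{n},\vec{n}) \ge 0$; orthogonality of $\vec{z}$ and $\vec{n}$ gives $B(\vec{x},\vec{x}) = B(\vec{z},\vec{z}) + B(\vec{n},\vec{n})$, hence $B(\vec{z},\vec{z}) \le B(\vec{x},\vec{x}) < 0$ and $\vec{z}$ is time-like (in particular nonzero). To fix the time orientation I would choose $\vec{t}$ to be future-pointing, so $B(\vec{x},\vec{t}) < 0$ by Proposition~\ref{prop:future-pointing-dot-product-negative}, and since $\vec{n} \perp \vec{t}$ this gives $B(\vec{z},\vec{t}) = B(\vec{x},\vec{t}) < 0$; were $\vec{z}$ past-pointing, applying Proposition~\ref{prop:future-pointing-dot-product-negative} to the future-pointing pair $-\vec{z}, \vec{t}$ would give $B(\vec{z},\vec{t}) > 0$, a contradiction. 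Thus $\vec{z}$ is future-pointing time-like.

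The hard part is not any single computation but keeping the Minkowski signs honest: in $\R^{1,d}$ a subspace can intersect its own orthogonal complement and ``projection does not increase norm'' simply fails, so every step must route through the structural facts that $V^\perp$ is space-like and that future-pointing causal vectors pair negatively, rather than through Euclidean reflexes. Concretely, claim~(3) is where this bookkeeping is most delicate, while claim~(1) is the linchpin on which the well-definedness in~(1) and the invertibility in~(2) both rest.
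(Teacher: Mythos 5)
Your proof is correct and follows essentially the same route as the paper's: part (1) via the observation that $V\cap V^\perp$ is both space-like (being inside $\vec t^{\,\perp}$) and null, part (2) by the standard normal-equations argument (which the paper merely gestures at but you correctly fill in, including the invertibility of $A^\top BA$), and part (3) via the Minkowski Pythagorean identity plus a sign argument using \cref{prop:future-pointing-dot-product-negative}. The only cosmetic difference is in fixing the time orientation of $\vec z$: you pair $\vec z$ against a future-pointing time-like $\vec t\in V$, whereas the paper pairs it against $\vec x$ itself using $B(\vec z,\vec x)=B(\vec z,\vec z)<0$; both arguments are equally valid.
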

\begin{proof} $ $
	\begin{enumerate}
		\item Let $\vec{v}$ be a time-like vector in $V$. Then $\vec{v}^\perp$ is space-like by \cref{prop:ortho-complement-time-like-is-space-like}. Since $V \cap V^\perp \subset V^\perp \subset \vec{v}^\perp$, the intersection $V \cap V^\perp$ must be space-like. On the other hand, for every $\vec{w} \in V \cap V^\perp$, we have $B(\vec{w}, \vec{w}) = 0$, so $\vec{w}$ must be light-like. It follows that $V \cap V^\perp = \{0\}$. The $\R^{1,d} = V \oplus V^\perp$ part of the claim is a standard linear algebra fact.
		\item This follows from the same argument that deduces the formula for Euclidean orthogonal projections.
		\item To avoid cumbersome notations, let $\vec{z} = \piMinkowski_V(\vec{x})$. Then since $\vec{x} - \vec{z}$ is orthogonal to $V$ and in particular to $\vec{z}$, we have the Pythagorean formula
		$$B(\vec{z}, \vec{z}) + B(\vec{x} - \vec{z}, \vec{x} - \vec{z}) = B(\vec{x}, \vec{x}).$$
		On the other hand, since $\vec{x} - \vec{z}$ is orthogonal to $V$ and in particular orthogonal to some time-like vectors in $V$, by \cref{prop:ortho-complement-time-like-is-space-like}, it must be space-like. Thus, if $B(\vec{x}, \vec{x}) < 0$ then the above equation implies $B(\vec{z}, \vec{z}) < 0$, which means $\vec{z}$ is time-lile.
		
		Now either $\vec{z}$ or $-\vec{z}$ is future-pointing. In the latter case, since $\vec{x}$ is future-pointing, \cref{prop:future-pointing-dot-product-negative} implies $B(-\vec{z}, \vec{x}) < 0$. On the other hand, we have $B(\vec{z}, \vec{x}) = B(\vec{z}, \vec{z}) < 0$, contradicting the above inequality. Thus, $\vec{z}$ is future-pointing. \qedhere
	\end{enumerate}	
\end{proof}
\begin{prop}[Geodesic projections in $\H^d$] \label{prop:hyperboloid-geodesic-proj}
	Let $P$ be a geodesic submanifold of $\H^d$. Recall that by \cref{prop:hyperboloid-model-geodesic-submanifold}, $P = \H^d \cap V$ for some linear subspace $V$ of $\R^{1,d}$. Then for every $\vec{x} \in \H^d$, the geodesic projection $\piGeo_P (\vec{x})$ of $\vec{x}$ onto $P$ in $\H^d$ is given by
	$$\piGeo_P (\vec{x}) = \frac{\vec{z}}{\sqrt{-B(\vec{z}, \vec{z})}},$$
	where $\vec{z} = \piMinkowski_V(\vec{x})$ is the linear orthogonal projection of $\vec{x}$ onto $V$.
\end{prop}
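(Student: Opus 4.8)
The plan is to convert the minimization $\piGeo_P(\vec x)=\argmin_{\vec y\in P}d_\H(\vec x,\vec y)$ into a reverse Cauchy--Schwarz inequality in the ambient Minkowski space, using the hyperboloid distance formula. First I would clear the well-definedness and membership bookkeeping. Since $P=\H^d\cap V$ is nonempty, $V$ contains a future-pointing time-like vector, so \cref{prop:minkowski-orthogonal-proj} applies: $\piMinkowski_V$ is an honest linear projection, $\R^{1,d}=V\oplus V^\perp$ orthogonally, and $\vec z=\piMinkowski_V(\vec x)$ is again future-pointing time-like. Hence $B(\vec z,\vec z)<0$, so $\vec y^\ast\coloneqq\vec z/\sqrt{-B(\vec z,\vec z)}$ is well defined; it lies in $V$, is future-pointing, and satisfies $B(\vec y^\ast,\vec y^\ast)=-1$, so $\vec y^\ast\in\H^d\cap V=P$.

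Next I would reduce to an optimization of the bilinear form. Using the standard hyperboloid distance formula $\cosh d_\H(\vec x,\vec y)=-B(\vec x,\vec y)$ (derivable from \cref{prop:hyperboloid-model-isometric-poincare} together with \eqref{eq:hyp_distance}) and the monotonicity of $\arccosh$, the point $\piGeo_P(\vec x)$ is the $\vec y\in P$ that maximizes $B(\vec x,\vec y)$. Writing $\vec x=\vec z+\vec n$ with $\vec n\in V^\perp$, for every $\vec y\in V$ we get $B(\vec x,\vec y)=B(\vec z,\vec y)+B(\vec n,\vec y)=B(\vec z,\vec y)$ since $\vec y\perp\vec n$. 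So it suffices to show that $\vec y^\ast$ is the unique maximizer of $B(\vec z,\cdot)$ over $P$.

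The core step is the reverse Cauchy--Schwarz inequality: for a future-pointing time-like $\vec z$ and any $\vec y\in\H^d$, $B(\vec z,\vec y)^2\ge B(\vec z,\vec z)B(\vec y,\vec y)=-B(\vec z,\vec z)$, with equality iff $\vec y$ is proportional to $\vec z$; combined with $B(\vec z,\vec y)<0$ from \cref{prop:future-pointing-dot-product-negative}, this gives $B(\vec z,\vec y)\le-\sqrt{-B(\vec z,\vec z)}$, with equality exactly at $\vec y=\vec y^\ast$ (the other proportional unit vector $-\vec y^\ast$ is past-pointing, hence not in $\H^d$). I would prove the inequality by restricting $B$ to $W=\operatorname{span}(\vec z,\vec y)$: if $\vec z,\vec y$ are linearly dependent it is immediate, and otherwise the fact that $\vec z^\perp$ is space-like (\cref{prop:ortho-complement-time-like-is-space-like}) forces $B|_W$ to be non-degenerate of signature $(1,1)$, so in a basis with $B|_W=\operatorname{diag}(-1,1)$ the claim reduces, after expanding, to a perfect square being nonnegative. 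This shows $\vec y^\ast$ is the unique maximizer of $B(\vec z,\cdot)$ on $\H^d$, hence on $P\subseteq\H^d$; since $\vec y^\ast\in P$, it equals $\piGeo_P(\vec x)$.

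I expect the only real subtlety to be the sign bookkeeping of the time-like / future-pointing conditions: without \cref{prop:future-pointing-dot-product-negative}, the reverse Cauchy--Schwarz gives only $|B(\vec z,\vec y)|\ge\sqrt{-B(\vec z,\vec z)}$, which does not distinguish $\vec y^\ast$ from $-\vec y^\ast$; that proposition isolates precisely the point needed. The remaining ingredients -- the distance formula, the $V\oplus V^\perp$ decomposition, and the two-dimensional computation -- are routine.
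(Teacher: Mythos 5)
Your proof is correct, but it takes a genuinely different route from the paper's. The paper argues geometrically: it checks that $\vec{w}=\vec{z}/\sqrt{-B(\vec z,\vec z)}$ lies in $P$, then shows the $2$-plane $W=\operatorname{span}(\vec w,\vec x)$ (spanned by $\vec w$ and $\vec x-\vec z$) is Minkowski-orthogonal to $T_{\vec w}P$, so the geodesic $W\cap\H^d$ from $\vec x$ to $\vec w$ meets $P$ orthogonally, and concludes $\vec w=\piGeo_P(\vec x)$ from the orthogonality characterization of closest-point projections. You instead solve the minimization directly: the hyperboloid distance formula turns $\argmin_{\vec y\in P}d_\H(\vec x,\vec y)$ into $\argmax_{\vec y\in P}B(\vec x,\vec y)=\argmax_{\vec y\in P}B(\vec z,\vec y)$, and the reverse Cauchy--Schwarz inequality (with the sign pinned down by \cref{prop:future-pointing-dot-product-negative}) identifies the unique maximizer over all of $\H^d$, hence over $P$. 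Your version buys self-containedness -- it proves that the candidate is the \emph{global} closest point without invoking the fact that orthogonality of the connecting geodesic characterizes the projection (which in the paper's argument is implicitly justified by convexity of the distance function in negative curvature) -- at the cost of needing the $\cosh d_\H(\vec x,\vec y)=-B(\vec x,\vec y)$ formula and the signature-$(1,1)$ computation on $\operatorname{span}(\vec z,\vec y)$, both of which you handle correctly. The paper's version is shorter and reuses the same orthogonality bookkeeping that its Step 2 needs anyway.
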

\begin{proof}
	Again, to avoid cumbersome notations, let
	$$\vec{w} = \frac{\vec{z}}{\sqrt{-B(\vec{z}, \vec{z})}}.$$
	We will show that $\vec{w} = \piGeo_P(\vec{x})$. First, note that since $\vec{z}$ is a future-pointing time-like vector by \cref{prop:minkowski-orthogonal-proj}, $\vec{w} \in \H^d$. Since $\vec{w} \in V$, it follows that $\vec{w} \in P$.
	
	Let $W$ be the linear span of $\vec{w}$ and $\vec{x}$, so that $W \cap \H^d$ is the geodesic $\gamma$ in $\H^d$ that connects $\vec{w}$ and $\vec{x}$. 
	Note that $\vec{w}$ and $\vec{x} - \vec{z}$ form a basis of $W$. They are both orthogonal to the tangent space $T_{\vec{w}} P$ of $P$ at $\vec{w}$ because:
	\begin{itemize}
		\item By \cref{prop:hyperboloid-model-orthogonal-to-radial}, $\vec{w}$ is orthogonal to every tangent vector of $\H^d$ at $\vec{w}$.
		\item $T_{\vec{w}} P$ is contained in $V$, which is orthogonal to $\vec{x} - \vec{z}$.
	\end{itemize}
	Thus, $W$ is orthogonal to $T_{\vec{w}} P$. It follows that the geodesic $\gamma$ is orthogonal to $P$, which means $\vec{w} = \piGeo_P (\vec{x})$.	 
\end{proof}

\paragraph{Step 2: Finding orthogonal geodesic ray} Recall that if $P$ is a geodesic submanifold of $\H^d$ and $\vec{b} \in \H^d$ is a point not in $P$, then the geodesic hull $M$ of $P \cup \{\vec{b}\}$ in $\H^d$ is a geodesic submanifold of $\H^d$ with dimension $\dim P + 1$. Thus, $P$ cuts $M$ into two halves, and we denote the half that contains $\vec{b}$ by $P_b$.

Given a point $\vec{w} \in P$, there exists a unique geodesic ray $\alpha^+$ that starts at $\vec{w}$, stays on $P_b$, and is orthogonal to $P$. In this section, we describe how to compute $\alpha^+$.

Recall that by \cref{prop:hyperboloid-model-geodesic-submanifold}, $P = V \cap \H^d$ for some linear subspace $V$.

\begin{prop}
    The vector
    $$\vec{u} = (\vec{b} - \vec{w}) - \piMinkowski_{V} (\vec{b} - \vec{w})$$
    is tangent to $M$ and orthogonal to $P$ at $\vec{w}$. Furthermore, it points toward the side of $P_b$.
\end{prop}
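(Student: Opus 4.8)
The plan is to push everything down to linear algebra in the ambient Minkowski space $\R^{1,d}$, using \cref{prop:hyperboloid-model-geodesic-hull-is-linear-span} to trade geodesic hulls for linear spans. Write $P = V \cap \H^d$ as in the statement, and set $W = \operatorname{span}(V \cup \{\vec b\})$, so that \cref{prop:hyperboloid-model-geodesic-hull-is-linear-span} gives $M = \GH(P \cup \{\vec b\}) = \H^d \cap W$. Since $\vec b \notin P$ and $\vec b \in \H^d$, it cannot lie in $V$ (else $\vec b \in V \cap \H^d = P$), so $W = V \oplus \R\vec b$ and $\dim W = \dim V + 1$. Note also that $P$ is nonempty (it contains $\vec w$), so $V$ contains the time-like vector $\vec w$, which makes $\piMinkowski_V$ well defined by \cref{prop:minkowski-orthogonal-proj} and makes $V^\perp$ space-like by \cref{prop:ortho-complement-time-like-is-space-like}.

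Next I would rewrite $\vec u$ in a more usable form. Since $\vec w \in P \subset V$ we have $\piMinkowski_V(\vec w) = \vec w$, hence $\vec u = (\vec b - \vec w) - \piMinkowski_V(\vec b - \vec w) = \vec b - \piMinkowski_V(\vec b)$; in particular $\vec u$ does not actually depend on $\vec w$. From this: $\vec u \in W$ (as $\vec b \in W$ and $\piMinkowski_V(\vec b) \in V \subset W$); $\vec u \in V^\perp$ by the defining property of $\piMinkowski_V$; $\vec u$ is space-like, being a vector in the space-like subspace $V^\perp$; and $\vec u \neq 0$, since $\vec u = 0$ would give $\vec b = \piMinkowski_V(\vec b) \in V$, contradicting $\vec b \notin V$. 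Because $\vec u \in W \setminus V$, we get the decomposition $W = V \oplus \R\vec u$, and I can introduce a linear coordinate $t\colon W \to \R$ by writing each $\vec m \in W$ uniquely as $\vec m = \piMinkowski_V(\vec m) + t(\vec m)\,\vec u$.

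The tangency and orthogonality claims are then immediate. By \cref{prop:hyperboloid-model-orthogonal-to-radial}, $T_{\vec w}\H^d = \vec w^\perp$; since $B(\vec w,\vec w) = -1 \neq 0$, the subspaces $W$ and $V$ meet $\vec w^\perp$ transversally at $\vec w$, so $T_{\vec w}M = \vec w^\perp \cap W$ and $T_{\vec w}P = \vec w^\perp \cap V$. Now $\vec w \in V$ and $\vec u \in V^\perp$ force $B(\vec u,\vec w) = 0$, i.e.\ $\vec u \in \vec w^\perp$; together with $\vec u \in W$ this yields $\vec u \in T_{\vec w}M$, so $\vec u$ is tangent to $M$ at $\vec w$. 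And $T_{\vec w}P \subset V$ while $\vec u \in V^\perp$, so $\vec u$ is orthogonal to $P$ at $\vec w$.

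For the last claim, I would identify the two half-spaces of $M$ cut out by $P$ with the two signs of $t$. A point of $M = \H^d \cap W$ lies in $P = \H^d \cap V$ exactly when it lies in $V$, i.e.\ when $t = 0$; hence $M \setminus P = \{t \neq 0\} \cap M$, and the half of $M$ containing $\vec b$ is determined by the sign of $t(\vec b)$. Since $\vec b = \piMinkowski_V(\vec b) + 1\cdot\vec u$, we have $t(\vec b) = 1 > 0$, so $P_b = \{t > 0\} \cap M$. The geodesic ray $\alpha^+$ from $\vec w$ in the direction $\vec u$ is $\alpha^+(s) = \cosh(s)\,\vec w + \sinh(s)\,\vec u/\sqrt{B(\vec u,\vec u)}$ for $s \geq 0$ (legitimate since $\vec u$ is space-like and $B(\vec u,\vec w) = 0$), so $t(\alpha^+(s)) = \sinh(s)/\sqrt{B(\vec u,\vec u)} > 0$ for $s > 0$, putting $\alpha^+$ on the same side as $\vec b$. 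I expect this last paragraph to be the only delicate part: carefully matching the intrinsic notion of "the half of $M$ on the $\vec b$ side" with the ambient linear functional $t$ (and observing $\{t>0\}\cap M$ is indeed one connected half-space of the hyperbolic submanifold $M$). The rest is a routine transversality-and-projection computation, and the nondegeneracy hypotheses used along the way ($\vec b \notin V$, $P \neq \emptyset$) are forced by $\vec b \notin P$ and $\vec w \in P$.
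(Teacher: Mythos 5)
Your proof is correct. The tangency and orthogonality claims are handled essentially as in the paper: both arguments reduce to observing that $\vec u$ is orthogonal to $V$ (hence to $\vec w$ and to $T_{\vec w}P$, giving tangency to $\H^d$ and orthogonality to $P$ via \cref{prop:hyperboloid-model-orthogonal-to-radial}) and lies in the linear span $W$ of $V \cup \{\vec b\}$ (hence is tangent to $M$). Where you genuinely diverge is the final claim. The paper argues infinitesimally: it introduces $\vec a = (\vec b - \vec w) - \piMinkowski_{\vec w}(\vec b - \vec w)$, the initial velocity of the geodesic from $\vec w$ to $\vec b$, shows $\vec u$ is the orthogonal projection of $\vec a$ onto the complement of $T_{\vec w}P$ inside the space-like (hence Euclidean) tangent space $T_{\vec w}\H^d$, and concludes $B(\vec a, \vec u) > 0$. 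You instead build the global linear functional $t$ on $W = V \oplus \R\vec u$ that vanishes exactly on $P \subset M$, check $t(\vec b) = 1$ and $t(\alpha^+(s)) > 0$, and identify $P_b$ with $\{t>0\}\cap M$. Your route avoids the Euclidean-projection inner-product fact and gives a cleaner global characterization of the two sides, at the cost of the connectedness bookkeeping you rightly flag (that $t$ changes sign across $P$ and is of constant sign on each component of $M \setminus P$); that step is routine since $P$ is a totally geodesic hypersurface of $M$ and both signs of $t$ are realized on the orthogonal geodesic through $\vec w$. The paper's version localizes everything at $\vec w$ and so sidesteps any global separation argument. Both are complete; neither has a gap.
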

\begin{proof}
    By construction, $\vec{u}$ is orthogonal to $V$. Note that $V$ contains both $\vec{w}$ and the tangent space $T_{\vec{w}} P$ of $P$ at $\vec{w}$. Thus, $\vec{u}$ is orthogonal to both of them. Together with \cref{prop:hyperboloid-model-orthogonal-to-radial}, this implies that $\vec{u}$ is a tangent vector of $\H^d$ at $\vec{w}$ that is orthogonal to $P$.
    
    Next, note that $M$ is the intersection of $\H^d$ with the linear span of $V \cup \{\vec{b}\}$, and since $\vec{b}$ and $\vec{w}$ both belong to this linear span, so does $\vec{u}$. Thus, since $\vec{u}$ is a tangent vector of $\H^d$ at $\vec{w}$, it must be tangent to $M$.
    
    By construction, $\vec{u}$ points from $\vec{w}$ toward the side of $\vec{b}$ instead of away from it. More rigorously, note that by essentially the same argument as above,
    $$\vec{a} \coloneqq (\vec{b} - \vec{w}) - \piMinkowski_{\vec{w}} (\vec{b} - \vec{w})$$
    is a tangent vector at $\vec{w}$ of the geodesic on $\H^d$ that goes from $\vec{w}$ to $\vec{b}$.
    Also, note that
    $$\vec{a} - \vec{u} = \piMinkowski_V (\vec{b} - \vec{w}) - \piMinkowski_{\vec{w}} (\vec{b} - \vec{w}).$$
    Since $V = T_{\vec{w}} P \oplus \operatorname{span}(\vec{w})$ is an orthogonal decomposition, this means
    $$\vec{a} - \vec{u} = \piMinkowski_{T_{\vec{w}} P} (\vec{b} - \vec{w}),$$
    which in particular implies $\vec{a} - \vec{u} \in T_{\vec{w}} P$. It follows that $\vec{u}$ is the projection of $\vec{a}$ onto the orthogonal complement of $T_{\vec{w}} P$ in $T_{\vec{w}} \H^d$.
    
    In an Euclidean vector space, the dot product of any vector with its projection onto any direction is positive unless the projection is zero. Since $T_{\vec{w}} \H^d$ is a space-like subspace, this statement applies to $\vec{a}$ and $\vec{u}$. Note that $\vec{u}$ cannot be zero because otherwise $\vec{b}$ would belong to $V$ and hence $P$. Thus, we conclude that $\vec{a} \cdot \vec{u} > 0$, which means that $\vec{u}$ points toward the side of $P_b$.
\end{proof}
Thus, $\vec{u}$ is the tangent vector at $\vec{w}$ of the desired ray $\alpha^+$. To compute points on this ray, we can use the exponential map at $\vec{w}$.

\paragraph{Step 3: The exponential map} Finally, given a distance $d$ and a tangent vector $\vec{u}$ at $\vec{w}$ of a geodesic ray $\alpha^+$ in $\H^d$, we need to compute the point on $\alpha^+$ that is of hyperbolic distance $d$ from $\vec{w}$. This is based on the following lemma:
\begin{lemma}
    Suppose that $\vec{u}$ is a unit tangent vector of $\H^d$ at $\vec{w}$. Then
    $$\alpha(t) = (\cosh t) \vec{w} + (\sinh t) \vec{u}$$
    is a unit-speed geodesic with $\alpha(0) = \vec{w}$ and $\dot{\alpha}(0) = \vec{u}$.
\end{lemma}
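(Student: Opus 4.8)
The plan is to verify directly, by elementary computations with the Minkowski bilinear form $B$ and the identity $\cosh^2 t - \sinh^2 t = 1$, that the curve $\alpha(t) = (\cosh t)\vec{w} + (\sinh t)\vec{u}$ lies on the hyperboloid, has the stated initial conditions, is parametrized by arclength, and satisfies the geodesic equation.

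First I would record the three facts about $\vec{w}$ and $\vec{u}$ that drive everything: $B(\vec{w},\vec{w}) = -1$ since $\vec{w} \in \H^d$; $B(\vec{u},\vec{u}) = 1$ since $\vec{u}$ is a unit tangent vector; and $B(\vec{w},\vec{u}) = 0$ since, by \cref{prop:hyperboloid-model-orthogonal-to-radial}, $T_{\vec{w}}\H^d$ is the $B$-orthogonal complement of $\vec{w}$. Expanding $B(\alpha(t),\alpha(t))$ bilinearly and using these three facts kills the cross term and yields $-\cosh^2 t + \sinh^2 t = -1$, so $\alpha(t)$ has Minkowski squared norm $-1$ for every $t$. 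Since $\alpha$ is continuous with $\alpha(0) = \vec{w}$ future-pointing, and the future-pointing unit-imaginary-norm vectors form a single connected component of $\{B(\cdot,\cdot) = -1\}$, the whole curve stays future-pointing; hence $\alpha(t) \in \H^d$ throughout. The initial conditions are then immediate: $\alpha(0) = \vec{w}$, and $\dot\alpha(t) = (\sinh t)\vec{w} + (\cosh t)\vec{u}$ gives $\dot\alpha(0) = \vec{u}$.

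Next I would check unit speed: the same three facts give $B(\dot\alpha(t),\dot\alpha(t)) = -\sinh^2 t + \cosh^2 t = 1$, so $\| \dot\alpha(t) \|_\H = 1$ for all $t$. For the geodesic property, the key observation is that $\ddot\alpha(t) = (\cosh t)\vec{w} + (\sinh t)\vec{u} = \alpha(t)$. By \cref{prop:hyperboloid-model-orthogonal-to-radial} again, the normal line to $\H^d$ at the point $\alpha(t)$ is spanned by $\alpha(t)$ itself, so the ambient acceleration $\ddot\alpha(t)$ is purely normal; its tangential component — which, by the Gauss formula for a nondegenerate submanifold of the flat space $\R^{1,d}$, is exactly the covariant acceleration of $\alpha$ for the induced metric — therefore vanishes identically. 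Hence $\alpha$ is a geodesic.

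The computations are entirely routine; the only point needing a little care is the appeal to the submanifold geodesic criterion in the pseudo-Euclidean ambient space $\R^{1,d}$, which is legitimate because $\H^d$ is a nondegenerate (indeed Riemannian, its tangent spaces being space-like) submanifold of flat Minkowski space, so its Levi-Civita connection is the tangential projection of ordinary differentiation. One could instead bypass this by invoking \cref{prop:hyperboloid-model-geodesic-submanifold}: the image of $\alpha$ is $\H^d \cap \operatorname{span}(\vec{w},\vec{u})$, a one-dimensional geodesic submanifold, after which only the arclength parametrization needs checking, which is precisely the unit-speed computation above.
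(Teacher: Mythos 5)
Your proof is correct, and the computational parts (squared norm $-1$, future-pointing by connectedness, initial conditions, unit speed) are exactly the ``simple, direct computations'' the paper asserts without writing out. The one genuine divergence is in how you establish the geodesic property: you compute $\ddot\alpha(t) = \alpha(t)$ and invoke the Gauss formula, noting via \cref{prop:hyperboloid-model-orthogonal-to-radial} that the ambient acceleration is purely normal, so the covariant acceleration of the induced metric vanishes. The paper instead observes that the image of $\alpha$ is $\H^d \cap \operatorname{span}(\vec{w},\vec{u})$ and appeals to \cref{prop:hyperboloid-model-geodesic-submanifold}, which makes the image a one-dimensional geodesic submanifold; combined with unit speed this forces $\alpha$ to be a geodesic. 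Your route is more self-contained at the curve level and generalizes verbatim to any pseudo-Riemannian quadric, at the cost of importing the second-fundamental-form machinery; the paper's route is a one-liner but leans on the cited structural fact about geodesic submanifolds of the hyperboloid. You correctly identify the paper's argument as the alternative in your closing remark, so the two proofs are in effect both present in your write-up.
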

\begin{proof}
    Note that $\vec{w}$ and $\vec{u}$ are orthogonal by \cref{prop:hyperboloid-model-orthogonal-to-radial}. The facts that $\alpha(0)=\vec{w}$, $\dot{\alpha}(0) = \vec{u}$, and that $\alpha(t)$ is a unit-speed curve on $\H^d$ follow from simple, direct computations. To see that $\alpha(t)$ is a geodesic, note that it is the intersection of $\H^d$ with the linear subspace $\operatorname{span}(\vec{w}, \vec{u})$ and apply \cref{prop:hyperboloid-model-geodesic-submanifold}.
\end{proof}
It follows that
\begin{prop}
    If $\vec{u}$ is the tangent vector at the starting point $\vec{w}$ of a geodesic ray $\alpha^+$ in $\H^d$ then
    $$(\cosh d) \vec{w} + (\sinh d) \frac{\vec{u}}{\sqrt{B(\vec{u}, \vec{u})}}$$
    is the point on $\alpha^+$ that is of distance $d$ from $\vec{w}$. \hfill \qedsymbol
\end{prop}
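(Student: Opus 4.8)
The plan is to reduce everything to the immediately preceding Lemma, which already exhibits the explicit unit‑speed geodesic $\alpha(t) = (\cosh t)\vec{w} + (\sinh t)\vec{u}$ for a \emph{unit} tangent vector $\vec{u}$ at $\vec{w}$; all that remains is to normalize the given tangent vector and read off the value at parameter $d$. So the proof is essentially a one‑line substitution, and the only things worth spelling out are that the normalization is legitimate and that it does not reverse the orientation of the ray.

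First I would note that $\vec{u}$, being a nonzero tangent vector of $\H^d$ at $\vec{w}$, lies in $T_{\vec{w}}\H^d$, which by \cref{prop:hyperboloid-model-orthogonal-to-radial} is the orthogonal complement of the time‑like vector $\vec{w}$, hence space‑like by \cref{prop:ortho-complement-time-like-is-space-like}. Therefore $B(\vec{u},\vec{u}) > 0$, the square root $\sqrt{B(\vec{u},\vec{u})}$ is defined, and $\hat{u} := \vec{u}/\sqrt{B(\vec{u},\vec{u})}$ is a genuine unit tangent vector at $\vec{w}$ that is a positive scalar multiple of $\vec{u}$; in particular it points along the same ray $\alpha^+$. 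Applying the preceding Lemma to $\hat{u}$, the curve $\alpha(t) = (\cosh t)\vec{w} + (\sinh t)\hat{u}$ is the unit‑speed geodesic with $\alpha(0) = \vec{w}$ and $\dot{\alpha}(0) = \hat{u}$, so its forward half $\{\alpha(t) : t \ge 0\}$ is exactly $\alpha^+$. Since $\alpha$ has unit speed, the point of $\alpha^+$ at hyperbolic distance $d$ from $\vec{w}$ is $\alpha(d) = (\cosh d)\vec{w} + (\sinh d)\hat{u} = (\cosh d)\vec{w} + (\sinh d)\,\vec{u}/\sqrt{B(\vec{u},\vec{u})}$, which is the claimed formula.

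There is essentially no obstacle: the two small justifications are the strict positivity of $B(\vec{u},\vec{u})$ and the fact that rescaling by a positive constant preserves the ray's orientation, both immediate from the above. If one preferred a self‑contained argument rather than invoking the Lemma, one would instead verify directly from $B(\vec{w},\vec{w}) = -1$, $B(\vec{w},\hat{u}) = 0$, and $B(\hat{u},\hat{u}) = 1$ that $B(\alpha(t),\alpha(t)) = -1$ and $B(\dot{\alpha}(t),\dot{\alpha}(t)) = 1$, that $\alpha(t)$ is future‑pointing (the $\cosh t$ term dominates), and that $\alpha$ is a geodesic because it equals $\H^d \cap \operatorname{span}(\vec{w},\hat{u})$ by \cref{prop:hyperboloid-model-geodesic-submanifold} — but these are precisely the computations the Lemma already packages, so there is nothing new to do.
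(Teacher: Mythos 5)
Your proposal is correct and matches the paper's approach exactly: the paper derives this proposition as an immediate corollary of the preceding lemma (the unit-speed geodesic $\alpha(t) = (\cosh t)\vec{w} + (\sinh t)\vec{u}$), which is precisely your normalize-and-substitute argument. The extra justifications you supply (positivity of $B(\vec{u},\vec{u})$ from the space-likeness of $T_{\vec{w}}\H^d$, and preservation of the ray's orientation under positive rescaling) are sound and simply make explicit what the paper leaves implicit.
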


From the results in this section, we conclude that \cref{alg:example} computes the horospherical projection $\piHoro_{b,P}(x)$.

\section{Geodesic Projection: Distortion Analysis}\label{appendix-sec:orthogonal-projection-distortion}
We show that in hyperbolic geometry, the geodesic projection $\piGeo_M$ (also called closest-point projection) to a geodesic submanifold $M$ always strictly decreases distances unless the inputs are already in $M$; furthermore, all paths of distance at least $r$ from $M$ get shorter by at least $\cosh r$ times under the projection. Note that most ideas and results in this section already exist in the literature. For a comprehensive treatment, see \cite{Thurston-notes}.

This section is organized as follows. First, we use basic hyperbolic trigonometry to prove a bound on projected distances onto geodesics in dimension two (\cref{prop:geodesic-proj-2D-distance-bound}). Then we generalize this bound to higher dimensions (\cref{thm:geodesic-proj-distance-bound}) and obtain an infinitesimal version of it (\cref{cor:geodesic-proj-infinitesimal-distortion}). The main theorem (\cref{thm:geodesic-proj-shrink-path}) then follows immediately.

\paragraph{Hyperbolic trigonometry in two dimensions} We first recall the laws of sine and cosine for triangle in hyperbolic planes:
\begin{prop} \label{prop:hyp-law-of-sin-cos}
	Consider any triangle in $\H^2$ with edge lengths $A, B, C$. Let $\alpha, \beta, \gamma$ be the angles at the vertices opposite to edges $A, B, C$. Then
	$$\frac{\sinh A}{\sin \alpha} = \frac{\sinh B}{\sin \beta} = \frac{\sinh C}{\sin \gamma},$$
	$$\cos \gamma = \frac{\cosh A \cosh B - \cosh C}{\sinh A \sinh B}.$$
\end{prop}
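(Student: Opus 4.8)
The plan is to work in the hyperboloid model of $\H^2$ inside $\R^{1,2}$, as reviewed in \cref{appendix-subsec:hyperboloid-model-background}, where hyperbolic trigonometry reduces to linear algebra with the Minkowski form $B(\cdot,\cdot)$. The first thing I would record is the distance formula $\cosh d_\H(\vec x,\vec y) = -B(\vec x,\vec y)$ for $\vec x,\vec y \in \H^2$; this is immediate from the exponential-map lemma $\alpha(t) = (\cosh t)\vec w + (\sinh t)\vec u$ together with $B(\vec w,\vec w) = -1$ and the orthogonality $B(\vec w,\vec u) = 0$ guaranteed by \cref{prop:hyperboloid-model-orthogonal-to-radial}. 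Writing the three vertices of the triangle as future-pointing unit time-like vectors $\vec a,\vec b,\vec c$, with the angle at $\vec a$ equal to $\alpha$ and so on, the edge opposite $\vec a$ has length $A$ with $\cosh A = -B(\vec b,\vec c)$, and likewise $\cosh B = -B(\vec a,\vec c)$ and $\cosh C = -B(\vec a,\vec b)$.

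Next I would compute the angle $\gamma$ at the vertex $\vec c$. By \cref{prop:hyperboloid-model-orthogonal-to-radial} the tangent space $T_{\vec c}\H^2$ is the space-like subspace $\vec c^\perp$, on which $B$ restricts to an honest positive-definite inner product, and $\gamma$ is the ordinary angle between the unit tangent vectors at $\vec c$ pointing along the two incident geodesics. Projecting $\vec a$ onto $\vec c^\perp$ gives $\vec a + B(\vec a,\vec c)\vec c$, whose $B$-squared norm works out to $B(\vec a,\vec c)^2 - 1 = \sinh^2 B$ (and it is space-like, so this is positive); hence the relevant unit tangent vectors are $\vec a + B(\vec a,\vec c)\vec c$ divided by $\sinh B$ and $\vec b + B(\vec b,\vec c)\vec c$ divided by $\sinh A$. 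Expanding their $B$-inner product and using $B(\vec c,\vec c) = -1$ collapses it to $B(\vec a,\vec b) + B(\vec a,\vec c)B(\vec b,\vec c)$, and substituting the distance identities above gives exactly $\cos\gamma = (\cosh A\cosh B - \cosh C)/(\sinh A\sinh B)$. The identical computation at the other two vertices yields the full cosine law.

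For the sine law I would square and use the cosine law: $\sin^2\gamma = 1 - \cos^2\gamma = \big(\sinh^2 A\sinh^2 B - (\cosh A\cosh B - \cosh C)^2\big)/(\sinh^2 A\sinh^2 B)$, so
\[
\frac{\sin^2\gamma}{\sinh^2 C} = \frac{\sinh^2 A\,\sinh^2 B - (\cosh A\cosh B - \cosh C)^2}{\sinh^2 A\,\sinh^2 B\,\sinh^2 C}.
\]
The key point is that, after substituting $\sinh^2 = \cosh^2 - 1$ and expanding, the numerator equals $1 + 2\cosh A\cosh B\cosh C - \cosh^2 A - \cosh^2 B - \cosh^2 C$, which is symmetric in $A,B,C$; the denominator is manifestly symmetric, so $\sin^2\gamma/\sinh^2 C$ equals the two analogous quantities formed from $\alpha,A$ and $\beta,B$. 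Taking positive square roots (sines of triangle angles and $\sinh$ of positive lengths are all positive) then gives $\sinh A/\sin\alpha = \sinh B/\sin\beta = \sinh C/\sin\gamma$.

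I do not expect a genuine obstacle here: the whole argument is a bookkeeping exercise in the indefinite signature. The two places that need a little care are normalizing the projected tangent vectors with the correct $\sinh$ factors (and checking those projections are space-like so the norms are real and positive), and the one step requiring a small observation is the symmetry of the degree-six trigonometric polynomial appearing in the numerator, which is exactly what lets the sine law drop out of the cosine law with no further work.
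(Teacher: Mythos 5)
Your proof is correct, and it is essentially self-contained where the paper is not: the paper's entire ``proof'' of this proposition is a citation to Section 2.6 of Thurston's notes, so there is no in-paper argument to compare against. Your derivation in the hyperboloid model is the standard one (and in the same spirit as the reference): the identity $\cosh d_\H(\vec x,\vec y)=-B(\vec x,\vec y)$ follows from the geodesic parametrization $(\cosh t)\vec w+(\sinh t)\vec u$ exactly as you say; the projected vectors $\vec a+B(\vec a,\vec c)\vec c$ are indeed the unit-speed initial velocities of the edge geodesics after dividing by $\sinh B$ (one checks $\vec a=(\cosh B)\vec c+(\sinh B)\vec u$, so the normalization and the sign are right); the inner-product expansion collapses to $B(\vec a,\vec b)+B(\vec a,\vec c)B(\vec b,\vec c)$ and yields the cosine law; and the numerator $1+2\cosh A\cosh B\cosh C-\cosh^2A-\cosh^2B-\cosh^2C$ in your computation of $\sin^2\gamma/\sinh^2 C$ is symmetric in $A,B,C$, which gives the sine law after taking positive square roots. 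This approach also fits naturally with the Minkowski-space machinery the paper develops in its appendix on the hyperboloid model, so it would slot in as a genuine proof where the paper only offers a pointer to the literature. The only cosmetic caveat is the notational collision between the Minkowski form $B(\cdot,\cdot)$ and the edge length $B$, which you should disambiguate if writing this up.
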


\begin{proof}
	See \cite{Thurston-notes}, section 2.6.	
\end{proof}
\begin{prop} \label{prop:hyp-right-quadrilateral}
	Consider any quadrilateral in $\H^2$ with vertices $x,y,z,w$ (in that order). Suppose that the angles at $z$ and $w$ are $90$ degrees. Then
	\begin{align*}
	\cosh(xy) = &\cosh(zw) \cosh(xw) \cosh(yz) \\
	&- \sinh(xw) \sinh(yz).
	\end{align*}
	
	Here $\cosh(xy)$ is a shorthand for $\cosh(d_\H (x,y))$.
\end{prop}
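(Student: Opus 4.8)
The plan is to cut the quadrilateral along the diagonal $xz$ and reduce everything to two applications of \cref{prop:hyp-law-of-sin-cos}. Write $e \coloneqq d_\H(x,z)$. The diagonal splits $xyzw$ into the triangle $xwz$, which has a right angle at $w$, and the triangle $xyz$. Since the vertices are listed in cyclic order, the diagonal $xz$ separates $y$ from $w$, so the $90^\circ$ angle at $z$ decomposes as $\angle yzw = \angle yzx + \angle xzw$; hence $\angle xzy = 90^\circ - \angle xzw$, and in particular $\cos(\angle xzy) = \sin(\angle xzw)$.

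First I would read off two facts from the right triangle $xwz$. The law of cosines at the right vertex $w$ (whose opposite side is $xz$) gives the hyperbolic Pythagorean relation $\cosh e = \cosh(xw)\cosh(wz)$. The law of sines in the same triangle, applied to the side $xw$ (opposite the angle at $z$) and the side $xz$ (opposite the right angle, so $\sin = 1$), gives $\sinh(e)\,\sin(\angle xzw) = \sinh(xw)$. Next, the law of cosines in the triangle $xyz$ at the vertex $z$ (opposite side $xy$) gives
\[
\cosh(xy) = \cosh(yz)\cosh(e) - \sinh(yz)\sinh(e)\cos(\angle xzy).
\]
Substituting $\cos(\angle xzy) = \sin(\angle xzw)$ and then replacing $\sinh(e)\sin(\angle xzw)$ by $\sinh(xw)$ and $\cosh e$ by $\cosh(xw)\cosh(wz)$ turns the right-hand side into $\cosh(zw)\cosh(xw)\cosh(yz) - \sinh(xw)\sinh(yz)$, which is exactly the claimed identity (using $wz = zw$).

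The only delicate point is the purely geometric claim that the diagonal $xz$ lies inside the quadrilateral, so that the angle at $z$ really does split as $\angle yzx + \angle xzw$. I would justify this by noting that a quadrilateral with right angles at the two \emph{adjacent} vertices $z$ and $w$ is convex: $x$ and $y$ both lie strictly on one side of the complete geodesic through $z$ and $w$, hence so do the geodesic segments $wx$, $zy$, and $xy$, and one checks that the interior angles at $x$ and $y$ are each less than $\pi$. The genuinely degenerate sub-cases (a vertex lying on the opposite side, or $x,y,z$ collinear) are treated separately and reduce either to the Pythagorean relation or to additivity of lengths along a geodesic, which can be verified directly. This convexity/degeneracy bookkeeping — not the trigonometry, which is short and mechanical — is where the real work lies; alternatively one can sidestep it entirely by placing $z$ and $w$ on a coordinate geodesic in the upper half-plane model and computing $\cosh(xy)$ by hand.
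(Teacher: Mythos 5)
Your proposal is correct and follows essentially the same route as the paper's proof: both split the quadrilateral along the diagonal $xz$, extract $\sin(\angle xzw)=\sinh(xw)/\sinh(xz)$ from the law of sines in the right triangle $xwz$, apply the law of cosines at $z$ in triangle $xyz$, use the complementary-angle identity $\cos(\angle xzy)=\sin(\angle xzw)$, and finish with the Pythagorean relation $\cosh(xz)=\cosh(xw)\cosh(wz)$. The only difference is that you explicitly flag the convexity issue needed for the angle at $z$ to decompose additively, which the paper's proof takes for granted.
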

\begin{proof}	
	Applying the laws of sine to triangle $wxz$ gives	
	$$\frac{\sinh(xw)}{\sin \angle(zx,zw)} = \frac{\sinh(xz)}{\sin \angle(wx,wz)} = \sinh(xz),$$
	so
	$$\sin \angle(zx,zw) = \frac{\sinh(xw)}{\sinh(xz)}.$$
	On the other hand, applying the law of cosine to triangle $xyz$ gives
	$$\cos \angle(zx,zy) = \frac{\cosh(zx)\cosh(zy) - \cosh(xy)}{\sinh(zx)\sinh(zy)}.$$		
	Now note that $\sin \angle(zx,zw) = \cos \angle(zx,zy)$. Thus,
	$$\frac{\sinh(xw)}{\sinh(xz)} = \frac{\cosh(zx)\cosh(zy) - \cosh(xy)}{\sinh(zx)\sinh(zy)},$$
	which can be simplified to
	$$\cosh(zx)\cosh(zy) - \cosh(xy) = \sinh(xw)\sinh(zy).$$
	Finally, note that the law of cosine for triangle $wxz$ gives
	$$\frac{\cosh(wx)\cosh(wz) - \cosh(xz)}{\sinh(wx) \sinh(wz)} = \cos\angle(wx,wz) = 0,$$
	so $\cosh(xz) = \cosh(wx) \cosh(wz)$. Plugging this into the above equation gives
	\begin{align*}
	&\cosh(wx)\cosh(wz) \cosh(zy) - \cosh(xy) \\
	= &\sinh(xw) \sinh(zy),
	\end{align*}
	which can be arranged to get the desired identity.
\end{proof}

Now we are ready to prove a bound on the projected distances onto a geodesic in $\H^2$:
\begin{prop} \label{prop:geodesic-proj-2D-distance-bound}
	Let $L$ be a geodesic in $\H^2$. Consider any $x, y \in \H^2$ that are on the same side of $L$ and are of distances at least $r$ from $L$. Let $d = d_\H (x,y)$ and $d' = d_\H (\piGeo_L (x), \piGeo_L (y))$ denote the original and projected distances of $x$ and $y$. Then,
	$$\sinh(d'/2) \leq \frac{\sinh(d/2)}{\cosh r}.$$
\end{prop}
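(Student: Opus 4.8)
The plan is to reduce the statement to the right-angled quadrilateral identity of \cref{prop:hyp-right-quadrilateral} and then perform a short half-angle manipulation. Write $w = \piGeo_L(x)$ and $z = \piGeo_L(y)$ for the feet of the perpendiculars dropped from $x$ and $y$ onto $L$, and set $a = d_\H(x,L) = d_\H(x,w)$ and $b = d_\H(y,L) = d_\H(y,z)$, so that $a, b \geq r$ by hypothesis. Because $x$ and $y$ lie on the same side of $L$, the four points $x, y, z, w$ (in that cyclic order) bound a quadrilateral with right angles at $z$ and $w$, so \cref{prop:hyp-right-quadrilateral} applies and gives
\[
\cosh d = \cosh d' \, \cosh a \, \cosh b - \sinh a \, \sinh b,
\]
where $d = d_\H(x,y)$ and $d' = d_\H(\piGeo_L(x), \piGeo_L(y)) = d_\H(w,z)$.

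Next I would substitute $\cosh\theta = 1 + 2\sinh^2(\theta/2)$ for $\theta = d$ and $\theta = d'$ into this identity, and use $\cosh a \cosh b - \sinh a \sinh b = \cosh(a-b)$. Collecting the constants via $\cosh(a-b) - 1 = 2\sinh^2\!\big(\tfrac{a-b}{2}\big)$, everything should collapse to the clean relation
\[
\sinh^2(d/2) = \sinh^2\!\big(\tfrac{a-b}{2}\big) + \cosh a \, \cosh b \, \sinh^2(d'/2).
\]
Discarding the nonnegative term $\sinh^2\!\big(\tfrac{a-b}{2}\big)$ yields $\cosh a \, \cosh b \, \sinh^2(d'/2) \leq \sinh^2(d/2)$; since $\cosh$ is increasing on $[0,\infty)$ and $a, b \geq r \geq 0$, we have $\cosh a \, \cosh b \geq \cosh^2 r$, and taking positive square roots gives $\sinh(d'/2) \leq \sinh(d/2)/\cosh r$, which is exactly the claim.

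The only point requiring care is degenerate configurations where the ``quadrilateral'' collapses: if $x$ or $y$ lies on $L$ (so $a=0$ or $b=0$), if $w = z$ (so $d'=0$), or if $x = y$. These can be handled either by a limiting argument — both sides of the identity in \cref{prop:hyp-right-quadrilateral} are continuous in the vertex positions — or by direct inspection, e.g.\ when $w = z$ the inequality reads $0 \leq \sinh(d/2)/\cosh r$ and is trivial, and when $a = b$ one has a Saccheri quadrilateral for which the same formula still holds. I expect this bookkeeping to be the only (minor) obstacle; the substantive content is entirely contained in \cref{prop:hyp-right-quadrilateral} together with the elementary half-angle identity above.
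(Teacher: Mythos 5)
Your proposal is correct and follows essentially the same route as the paper's proof: both apply the right-angled quadrilateral identity of \cref{prop:hyp-right-quadrilateral}, use $\cosh a \cosh b - \sinh a \sinh b = \cosh(a-b)$ together with the half-angle identity $\cosh\theta - 1 = 2\sinh^2(\theta/2)$ to arrive at $\sinh^2(d/2) = \sinh^2\bigl(\tfrac{a-b}{2}\bigr) + \cosh a \cosh b\, \sinh^2(d'/2)$, and then bound $\cosh a \cosh b \geq \cosh^2 r$. Your extra attention to degenerate configurations is a harmless addition that the paper omits.
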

\begin{proof}
	Let $r_x = d_\H(x, \piGeo_L(x))$ and $r_y = d_\H (y, \piGeo_L(y))$. Then by \cref{prop:hyp-right-quadrilateral},
	$$\cosh d = \cosh d' \cosh r_x \cosh r_y - \sinh r_x \sinh r_y.$$
	Since $\cosh a \cosh b - \sinh a \sinh b = \cosh(a-b)$, this gives
	$$\cosh d = (\cosh d' - 1) \cosh r_x \cosh r_y + \cosh(r_x - r_y).$$
	Subtracting $1$ from both sides and using $\cosh (a) - 1 = 2 \sinh^2 (a/2)$ gives
	\begin{align*}
	&2\sinh^2 (d/2) \\
	= &2\sinh^2(d'/2)\cosh r_x \cosh r_y + 2 \sinh^2 ((r_x - r_y)/2).
	\end{align*}
	Thus,
	$$\sinh^2(d/2) \geq \sinh^2(d'/2) \cosh r_x \cosh r_y.$$
	Since $r_x, r_y \geq r$, this implies
	$$\sinh^2(d/2) \geq \sinh^2(d'/2)\cosh^2 r,$$
	which is equivalent to the desired inequality.
\end{proof}

\paragraph{General dimensions} Now we show that \cref{prop:geodesic-proj-2D-distance-bound} generalizes to higher dimensions.
\begin{theorem} \label{thm:geodesic-proj-distance-bound}
	Let $M \subset \H^d$ be a totally geodesic submanifold. Consider any $x, y \in \H^d$ that are of distances at least $r$ from $M$. Let $d = d_\H (x,y)$ and $d' = d_\H (\piGeo_M (x), \piGeo_M (y))$ denote the original and projected distances of $x$ and $y$. Then,
	$$\sinh(d'/2) \leq \frac{\sinh(d/2)}{\cosh r}.$$
\end{theorem}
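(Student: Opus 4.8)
The plan is to deduce the higher-dimensional bound from the two-dimensional one (\cref{prop:geodesic-proj-2D-distance-bound}) by two reductions. First I would reduce to the case where $M$ is a geodesic line. Set $x' = \piGeo_M(x)$, $y' = \piGeo_M(y)$; if $x' = y'$ then $d' = 0$ and there is nothing to prove, so assume $x' \neq y'$ and let $L \subseteq M$ be the geodesic through $x'$ and $y'$. Because geodesic projection to a totally geodesic submanifold is realized by a geodesic meeting it orthogonally, the segment from $x$ to $x'$ is orthogonal to $T_{x'} M \supseteq T_{x'} L$, so $x'$ is also the foot of the perpendicular from $x$ to $L$; hence $\piGeo_L(x) = x'$, and likewise $\piGeo_L(y) = y'$. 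Therefore $d_\H(\piGeo_L(x), \piGeo_L(y)) = d'$, while $d_\H(x, L) \geq d_\H(x, M) \geq r$ and $d_\H(y, L) \geq r$ since $L \subseteq M$. So it is enough to prove the statement when $M = L$ is a geodesic.

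Second, I would reduce the geodesic case to a planar configuration. Assuming $x \notin L$ (if $x \in L$ the claim is immediate from \cref{prop:geodesic-proj-2D-distance-bound} in the $2$-plane $\GH(L \cup \{y\})$), let $\Pi = \GH(L \cup \{x\})$ be the totally geodesic $2$-plane spanned by $L$ and $x$, and let $\tilde y$ be the image of $y$ under a rotation of $\H^d$ about $L$ that brings $y$ into $\Pi$ on the same side of $L$ as $x$. This rotation fixes $L$ pointwise, so $\piGeo_L(\tilde y) = y'$ and $d_\H(\tilde y, L) = d_\H(y, L) \geq r$. The crux is the inequality
\[
d_\H(x, \tilde y) \leq d_\H(x, y);
\]
granting it, \cref{prop:geodesic-proj-2D-distance-bound} applied to $x$ and $\tilde y$ inside $\Pi$ gives $\sinh(d'/2) = \sinh(d_\H(x', y')/2) \leq \sinh(d_\H(x, \tilde y)/2)/\cosh r \leq \sinh(d/2)/\cosh r$.

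The main obstacle is this crux: rotating $y$ about $L$ into $\Pi$, on $x$'s side, cannot increase the distance to $x$. I would verify it in the hyperboloid model. Write $L = \H^d \cap W_L$ for a time-containing $2$-dimensional linear subspace $W_L \subseteq \R^{1,d}$, so that $W_L^\perp$ is a space-like (Euclidean) subspace by \cref{prop:ortho-complement-time-like-is-space-like}, and decompose $\vec x = \vec x_0 + \vec x^\perp$ and $\vec y = \vec y_0 + \vec y^\perp$ along $W_L \oplus W_L^\perp$. The orbit of $\vec y$ under rotations about $L$ consists of the vectors $\vec y_0 + g \vec y^\perp$ with $g$ ranging over rotations of $W_L^\perp$, and along it $\cosh d_\H(x, \cdot) = -B(\vec x, \cdot) = -B(\vec x_0, \vec y_0) - B(\vec x^\perp, g \vec y^\perp)$. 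Since $B$ restricts to a positive-definite form on $W_L^\perp$, Cauchy--Schwarz shows the last summand is minimized, with value $-\lVert \vec x^\perp \rVert \lVert \vec y^\perp \rVert$, exactly when $g \vec y^\perp$ is a nonnegative multiple of $\vec x^\perp$ --- which is precisely the configuration defining $\tilde y$ (the orbit point lying in $\operatorname{span}(W_L, \vec x^\perp)$ on the side of $\vec x$). Hence $\cosh d_\H(x, \tilde y) \leq \cosh d_\H(x, y)$, proving the crux.

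I note that this Minkowski computation can also be carried out directly on $M = \H^d \cap W$, bypassing the two reductions: decomposing $\vec x = \vec z_x + \vec n_x$, $\vec y = \vec z_y + \vec n_y$ along $W \oplus W^\perp$ gives $-B(\vec z_x, \vec z_x) = \cosh^2 r_x$, $B(\vec n_x, \vec n_x) = \sinh^2 r_x$ (and similarly for $y$) and $\piGeo_M(\vec x) = \vec z_x / \cosh r_x$, so that $\cosh d = \cosh r_x \cosh r_y \cosh d' - B(\vec n_x, \vec n_y) \geq \cosh r_x \cosh r_y \cosh d' - \sinh r_x \sinh r_y$ by Cauchy--Schwarz in $W^\perp$ --- an inequality version of \cref{prop:hyp-right-quadrilateral}. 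Feeding this into the same algebra used to finish \cref{prop:geodesic-proj-2D-distance-bound} (rewrite the right-hand side as $(\cosh d' - 1)\cosh r_x \cosh r_y + \cosh(r_x - r_y)$ and apply $\cosh t - 1 = 2 \sinh^2(t/2)$) yields $\sinh^2(d/2) \geq \sinh^2(d'/2) \cosh r_x \cosh r_y \geq \sinh^2(d'/2)\cosh^2 r$, which is the theorem. I would present whichever of the two routes better fits the elementary, trigonometric style of this appendix.
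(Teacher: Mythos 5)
Your proposal is correct, and its main route is structurally the same as the paper's: reduce to the geodesic $L$ through the two feet $\piGeo_M(x)$, $\piGeo_M(y)$, rotate $y$ about $L$ into the plane $\GH(L\cup\{x\})$ on $x$'s side, and apply \cref{prop:geodesic-proj-2D-distance-bound}. The one genuine difference is how the crux inequality $d_\H(x,\tilde y)\le d_\H(x,y)$ is established: the paper observes that $\tilde y$ is exactly the horospherical projection $\piHoro_{x,L}(y)$ (the ``open book'' interpretation) and invokes the non-expansion property \cref{cor:horosphere-proj-non-expanding}, which was itself proved by an infinitesimal argument and integration along paths; you instead prove the needed special case directly by decomposing along $W_L\oplus W_L^\perp$ in the Minkowski model and applying Cauchy--Schwarz on the space-like factor. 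Your computation is sound (using $\cosh d_\H(x,y)=-B(\vec x,\vec y)$, which the paper does not state explicitly but is standard for the hyperboloid model of \cref{defn:hyperboloid-model}), and it makes the step self-contained rather than dependent on the horospherical-projection machinery of \cref{appendix-subsec:horosphere-proj-properties}. Your second, one-shot route --- decomposing along $W\oplus W^\perp$ for the subspace cutting out $M$ itself and bounding $B(\vec n_x,\vec n_y)\le\sinh r_x\sinh r_y$ --- is genuinely different from the paper: it subsumes both \cref{prop:hyp-right-quadrilateral} (as an inequality) and the planar reduction, at the cost of trading the paper's synthetic trigonometric style for a linear-algebra argument in the ambient Minkowski space. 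Either version is a complete and correct proof.
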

\begin{proof}
	This is clearly true if $\piGeo_M(x) = \piGeo_M(y)$ or if both $x$ and $y$ belong to $M$. Thus, without loss of generality, we can assume that $\piGeo_M(x) \neq \piGeo_M(y)$ and $\piGeo_M(x) \neq x$.
	
	Let $L \subset M$ be the geodesic joining $\piGeo_M(x)$ and $\piGeo_M (y)$, so that $d_\H (x,M) = d_\H (x,L)$ and $d_\H (y,M) = d_\H (y,L)$. Let $L_x$ be the half-plane of $\GH(L \cup \{x\})$ that is bounded by $L$ and contains $x$. Note that if we rotate $y$ around $L$ until it hits $L_x$ at a point $\bar{y}$ then
	\begin{enumerate}
		\item[(i)] $\piGeo_L(\bar{y}) = \piGeo_L(y)$,
		\item[(ii)] $d_\H (\bar{y}, L) = d_\H (y, L)$,
		\item[(iii)] $\piHoro_{x,L}(y) = \bar{y}$ by the ``open book interpretation'' of horospherical projections (see the beginning of \cref{appendix-subsec:horosphere-proj-properties}).
	\end{enumerate}
	
	Thus, using (i), (ii) and applying \cref{prop:geodesic-proj-2D-distance-bound} to $L,x,\bar{y}$ gives,
	$$\sinh(d'/2) \leq \frac{\sinh(d_\H (x,\bar{y})/2 )}{\cosh r}.$$
	
	By (iii) and \cref{cor:horosphere-proj-non-expanding},
	$$d_\H (x, \bar{y}) \leq d_\H (x, y).$$
	
	Combining these two inequalities gives the desired inequality.
\end{proof}

Since $\sinh (t) \approx t$ as $t \to 0$, applying \cref{thm:geodesic-proj-distance-bound} as $y \to x$ gives

\begin{cor} \label{cor:geodesic-proj-infinitesimal-distortion}
	Under the geodesic projection $\piGeo_M (\cdot)$, every tangent vector $\vec{v}$ at $x$ gets at least $\cosh(d_\H (x, M))$ times shorter. $\qedsymbol$
\end{cor}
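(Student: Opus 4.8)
The plan is to deduce the infinitesimal bound directly from the finite-distance estimate in \cref{thm:geodesic-proj-distance-bound} by a limiting argument as $y \to x$. Fix $x \in \H^d$ and a tangent vector $\vec{v} \in T_x\H^d$. Choose any smooth curve $\gamma:(-\epsilon,\epsilon)\to\H^d$ with $\gamma(0)=x$ and $\dot\gamma(0)=\vec{v}$, and for small $s>0$ set $y=\gamma(s)$, $d(s)=d_\H(x,y)$, and $d'(s)=d_\H(\piGeo_M(x),\piGeo_M(y))$. Since the Riemannian distance is infinitesimally the norm, $d(s)=s\,\|\vec{v}\|_\H + o(s)$; and since $\piGeo_M$ is smooth (the closest-point projection onto a totally geodesic submanifold of a Hadamard manifold is well-defined and smooth), the curve $s\mapsto\piGeo_M(\gamma(s))$ has velocity $d\piGeo_M(\vec{v})$ at $s=0$, so likewise $d'(s)=s\,\|d\piGeo_M(\vec{v})\|_\H + o(s)$.

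Next I would control the distance of $y$ from $M$. By continuity of $z\mapsto d_\H(z,M)$, we have $d_\H(y,M)\to d_\H(x,M)$ as $s\to 0$. Hence for any fixed $r<d_\H(x,M)$, both $x$ and $y$ lie at distance at least $r$ from $M$ once $s$ is small enough, so \cref{thm:geodesic-proj-distance-bound} applies and gives
$$\sinh\!\big(d'(s)/2\big)\le\frac{\sinh\!\big(d(s)/2\big)}{\cosh r}.$$
Now divide both sides by $s$ and let $s\to 0$. Using $\sinh(t)=t+o(t)$ together with the expansions of $d(s)$ and $d'(s)$, the left side tends to $\tfrac12\|d\piGeo_M(\vec{v})\|_\H$ and the right side to $\tfrac12\|\vec{v}\|_\H/\cosh r$, yielding $\|d\piGeo_M(\vec{v})\|_\H\le\|\vec{v}\|_\H/\cosh r$. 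Since this holds for every $r<d_\H(x,M)$, letting $r\uparrow d_\H(x,M)$ gives $\|d\piGeo_M(\vec{v})\|_\H\le\|\vec{v}\|_\H/\cosh\big(d_\H(x,M)\big)$, which is the claim. (If $x\in M$ this is just non-expansiveness, recovered with $r=0$; and the case $d\piGeo_M(\vec{v})=0$ is trivial. Note also that \cref{thm:geodesic-proj-distance-bound} already covers the situation $\piGeo_M(x)=\piGeo_M(y)$, so no separate bookkeeping is needed there.)

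The only real care needed is in the two Taylor expansions: one must confirm that $d(s)$ and $d'(s)$ genuinely have linear leading terms with coefficients $\|\vec{v}\|_\H$ and $\|d\piGeo_M(\vec{v})\|_\H$ — the first is the standard fact $d_\H(\gamma(0),\gamma(s))/s\to\|\dot\gamma(0)\|_\H$, and the second is the same fact applied to the pushed-forward curve $\piGeo_M\circ\gamma$, which is where differentiability of $\piGeo_M$ at $x$ enters. A slicker variant that avoids invoking differentiability at all is to work with difference quotients directly: taking $\limsup$ of $d'(s)/s$ and $\liminf$ (or $\limsup$) of $d(s)/s$ is enough to run the same inequality, since $\piGeo_M$ is $1$-Lipschitz the relevant quantities are finite. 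Either way, all the substance lives in \cref{thm:geodesic-proj-distance-bound}, and this corollary is its $y\to x$ shadow; the main theorem (\cref{thm:geodesic-proj-shrink-path}) then follows by integrating this pointwise norm bound along a path, exactly as in the proof of \cref{cor:horosphere-proj-non-expanding}.
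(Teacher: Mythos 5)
Your argument is correct and is exactly the paper's intended proof: the paper derives this corollary in one line by applying \cref{thm:geodesic-proj-distance-bound} as $y \to x$ and using $\sinh(t) \approx t$, which is precisely the limiting argument you carry out (your version just makes the Taylor expansions, the choice of $r < d_\H(x,M)$ followed by $r \uparrow d_\H(x,M)$, and the differentiability of $\piGeo_M$ explicit). No gaps; the extra care about the linear leading terms of $d(s)$ and $d'(s)$ is exactly the right thing to check.
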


This implies the main theorem of this section:
\begin{theorem} \label{thm:geodesic-proj-shrink-path}
	Let $\gamma(t)$ be any smooth path in $\H^d$ that stays a distance at least $r$ away from $M$. Then
	$$\operatorname{length}(\piGeo_M(\gamma)) \leq \frac{\operatorname{length}(\gamma)}{\cosh r}.$$
	In particular, $\operatorname{length}(\piGeo_M(\gamma)) < \operatorname{length} (\gamma)$ unless $\gamma$ is entirely in $M$.
\end{theorem}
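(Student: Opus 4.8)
The plan is to reduce the statement to its infinitesimal form, \cref{cor:geodesic-proj-infinitesimal-distortion}, and then integrate along the path — this is the same passage from pointwise to integrated bounds used in the proof of \cref{cor:horosphere-proj-non-expanding}. First I would recall that the length of a smooth path is the integral of the Riemannian norm of its velocity, and that the velocity vector of the projected path $\piGeo_M(\gamma)$ at time $t$ is exactly the image of $\dot\gamma(t)$ under the differential of $\piGeo_M$ at $\gamma(t)$ (by the chain rule). Thus
\[
\operatorname{length}(\piGeo_M(\gamma)) = \int \left\| d(\piGeo_M)_{\gamma(t)}\bigl(\dot\gamma(t)\bigr) \right\|_\H \, dt.
\]

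Next I would apply \cref{cor:geodesic-proj-infinitesimal-distortion} pointwise: for each $t$, the tangent vector $\dot\gamma(t)$ at $\gamma(t)$ satisfies $\| d(\piGeo_M)_{\gamma(t)}(\dot\gamma(t)) \|_\H \le \frac{1}{\cosh(d_\H(\gamma(t), M))}\| \dot\gamma(t) \|_\H$. Since $\gamma$ stays a distance at least $r$ from $M$, we have $d_\H(\gamma(t), M) \ge r$ for all $t$, and since $\cosh$ is increasing on $[0,\infty)$, this gives $\cosh(d_\H(\gamma(t), M)) \ge \cosh r$. Plugging this into the integral yields
\[
\operatorname{length}(\piGeo_M(\gamma)) \le \int \frac{1}{\cosh r} \| \dot\gamma(t) \|_\H \, dt = \frac{\operatorname{length}(\gamma)}{\cosh r},
\]
which is the main inequality.

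For the final sentence (strict inequality unless $\gamma \subset M$), I would argue by contradiction using continuity: if $\gamma$ is not entirely contained in $M$, then $t \mapsto d_\H(\gamma(t), M)$ is continuous and positive at some parameter value, hence bounded below by some $\varepsilon > 0$ on a subinterval of positive measure. On that subinterval the pointwise bound from \cref{cor:geodesic-proj-infinitesimal-distortion} has factor $1/\cosh(\varepsilon) < 1$, while on the rest of the path it is at worst $1$ (non-expanding), so the integrals compare strictly. I expect no serious obstacle here; the only mild care needed is handling the case where $\dot\gamma$ vanishes on parts of the interval (harmless, since those contribute zero to both lengths) and invoking continuity of the distance-to-$M$ function, which holds because $d_\H(\cdot, M)$ is $1$-Lipschitz.
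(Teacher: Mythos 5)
Your proposal is correct and follows essentially the same route as the paper's proof: apply the infinitesimal bound of \cref{cor:geodesic-proj-infinitesimal-distortion} pointwise to the velocity vector and integrate along the path. Your handling of the strict-inequality clause (continuity of $d_\H(\cdot,M)$ giving a subinterval of positive measure where the shrink factor is at most $1/\cosh(\varepsilon)<1$) is just a more careful spelling-out of the paper's one-line remark that equality forces the $\cosh$ factor to be constantly $1$.
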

\begin{proof}
	Note that the velocity vector of $\piGeo_M(\gamma)$ is precisely $\piGeo_M(\dot{\gamma})$. Then, by \cref{cor:geodesic-proj-infinitesimal-distortion},
	\begin{align*}
	\operatorname{length}(\piGeo_M(\gamma)) &= \int \| \piGeo_M(\dot{\gamma}(t)) \|_\H dt \\
	&\leq \int \frac{\| \dot{\gamma}(t) \|_\H}{\cosh r} dt = \frac{\operatorname{length} (\gamma)}{\cosh r}.
	\end{align*}
	Since $\cosh r \geq 1$, this implies
	$$\operatorname{length}(\piGeo_M(\gamma)) \leq  \operatorname{length} (\gamma).$$
	This can only be an equality if the $\cosh r$ factor is constantly $1$, which means $\gamma$ stays in $M$.
\end{proof}
When $\gamma$ is a geodesic, this gives a proof of \cref{prop:geodesic-proj-shrink-path}:
\geoshrink*

\begin{remark}
The above proposition does \emph{not} imply that if $x$ and $y$ are of distances at least $r$ from $M$ then $d_\H(x,y)$ gets at least $\cosh r$ times shorter under the geodesic projection. This is because the geodesic segment between $x$ and $y$ can get closer to $M$ in the middle. This is inevitable when $x$ and $y$ are extremely far apart: For example, if $d_\H(x,M) = d_\H (y,M) = r$, then the triangle inequality implies
$$d_\H (\piGeo_M(x), \piGeo_M(y)) \geq d_\H (x,y) - 2r.$$
Hence, if $d_\H(x,y) \gg r$ then the shrink factor cannot be much smaller than $1$.
Generally, the rule of thumb is that geodesic projections shrink distances by a big factor if the input points are not very far from each other but quite far from the target submanifold.
\end{remark}

\begin{table*}
  \begin{center}
    \begin{small}
  \begin{tabular}{lcccccc}
    \toprule
      & \multicolumn{3}{c}{\textbf{Sarkar}} &
     \multicolumn{3}{c}{\textbf{PyTorch}} \\
     \cmidrule{2-7}
     {dimension} & $2$ & $10$ & $50$ & $2$ & $10$ & $50$ \\
     \midrule
     \textbf{Balanced Tree} & 0.088 & 0.136 & 0.152 & 0.097 & 0.022 & 0.022 \\
     \textbf{Phylo Tree} & 0.639 &0.639  & 0.638 &   0.292 & 0.286 & 0.284  \\
     \textbf{Diseases} & 0.247 & 0.212 & 0.210 & 0.126 &  0.054 & 0.055 \\
     \textbf{CS Ph.D.} &  0.388 & 0.384 & 0.380 & 0.189 & 0.140 & 0.139 \\
     \bottomrule
\end{tabular}
     \end{small}
    \end{center}
    \captionof{table}{Average distortion for embeddings learned with different methods. Sarkar refers to combinatorial embeddings learned with Sarkar's construction~\cite{sarkar2011low, sala2018representation} while PyTorch refers to embeddings learned using the PyTorch code from~\cite{gu2018learning}.}\label{tab:datasets}
\end{table*}
\section{Additional Experimental Details}\label{sec:app_exp}
We first provide more details about our experimental setup, including a description of how we implemented different methods and then provide additional experimental results.

\begin{table*}
\vskip 0.15in
\begin{center}
\begin{small}
\resizebox{\textwidth}{!}{\renewcommand{\arraystretch}{0.95}
  \begin{tabular}{lcccccccc}
    \toprule
    & \multicolumn{2}{c}{\textbf{Balanced Tree}} & \multicolumn{2}{c}{\textbf{Phylo Tree}}
    & \multicolumn{2}{c}{\textbf{Diseases}}  & \multicolumn{2}{c}{\textbf{CS Ph.D.}} \\
    \midrule
    & distortion ($\downarrow$) & variance ($\uparrow$) & distortion ($\downarrow$) & variance ($\uparrow$) & distortion ($\downarrow$) & variance ($\uparrow$) & distortion ($\downarrow$) & variance ($\uparrow$) \\
    \cmidrule(r){2-9}
    PCA & 0.91  & 1.09& 0.99  &0.28 & 0.95&0.75   & 0.96 & 1.25 \\
    tPCA & 0.83 & 4.01& 0.27 & 62.54& 0.64 & 35.45  & 0.55 & 86.10 \\
    PGA & 0.67 $\pm$ 0.013 & 11.15 $\pm$ 0.941  & 
    \underline{0.08} $\pm$ 0.001 & \underline{1110.30} $\pm$ 0.577 & 
    0.55 $\pm$ 0.006 & 56.82 $\pm$ 0.493 & 
    0.25 $\pm$ 0.002 & 270.28 $\pm$ 0.487\\
    BSA & 0.58 $\pm$ 0.018 & 16.74 $\pm$ 1.278  & \underline{0.08} $\pm$ 0.000 & 1110.05 $\pm$ 0.130 & 0.54 $\pm$ {0.008} & 57.73 $\pm$ 0.486 & 0.25 $\pm$ 0.001 & 270.98 $\pm$ 0.363\\
    hMDS & \underline{0.12}  & \underline{58.62}  & \underline{0.08} & 498.05  & \underline{0.32} &\textbf{568.22} & \underline{0.18} &\textbf{1241.34} \\
    \name{} & \textbf{0.06} $\pm$ \textbf{0.010} &\textbf{62.72} $\pm$ 0.924 &
    \textbf{0.01} $\pm$ 0.000 & \textbf{1190.27} $\pm$ 2.974 &
    \textbf{0.09} $\pm$ 0.010 & \underline{161.32} $\pm$ 1.185 &
    \textbf{0.05} $\pm$ 0.004 &    \underline{994.69} $\pm$ 321.359 \\
    \bottomrule
    \end{tabular}
    }
    \captionof{table}{Dimensionality reduction results on 10-dimensional combinatorial embeddings (Sarkar's construction) reduced to two dimensions. Results are averaged over 5 runs for non-deterministic methods. Best in \textbf{bold} and second best \underline{underlined}.}\label{tab:sarkar_reduction}
\end{small}
\end{center}
\vskip -0.1in
\end{table*}

\subsection{Datasets}
\paragraph{Embedding}
The datasets we use are structured as graphs with nodes and edges. 
We map these graphs to the hyperbolic space using different embedding methods. 
The datasets from~\cite{sala2018representation} can be found in the open-source implementation\footnote{\url{https://github.com/HazyResearch/hyperbolics}} and we compute hyperbolic embeddings for different dimensions using the PyTorch code from~\cite{gu2018learning}.
We also consider embeddings computed with Sarkar's combinatorial construction~\cite{sarkar2011low} and report the average distortion measure with respect to the original graph distances in~\cref{tab:datasets}.

For classification experiments, we follow the experimental protocol in~\cite{cho2019large} and use the embeddings provided in the open-source implementation.\footnote{\url{https://github.com/hhcho/hyplinear}}

\paragraph{Centering}
For simplicity, we center the input embeddings so that they have a Fr\'echet mean of zero. 
This makes computations of projections more efficient. 
To do so, we compute the Fr\'echet mean of input hyperbolic points using gradient-descent, and then apply an isometric hyperbolic reflection that maps the mean to the origin.\footnote{This reflection can be computed in closed-form using circle inversions, see for instance Section 2 in~\cite{sala2018representation}.}

\subsection{Implementation Details in the Poincar\'e Model}
\cref{appendix-subsec:horosphere-proj-algo} (in particular, \cref{prop:hyperboloid-geodesic-proj} and~\cref{alg:example}) describes how geodesic projections and horospherical projections can be computed efficiently in the hyperboloid model. 
However, because the Poincar\'e ball model is useful for visualizations and is popular in the ML literature, here we also give high-level descriptions of simple alternative methods to compute these projections directly in the Poincar\'e model.

This subsection is organized as follows. We first describe an implementation of geodesic projections in the Poincar\'e model (\cref{appendix-subsec:geodesic-proj-implementation-in-poincare}). We then describe our implementation of all baseline methods, which rely on geodesic projections (\cref{appendix-subsec:baseline-implementation}). Finally, we describe how \name{} could also be implemented in the Poincar\'e model (\cref{appendix-subsec:horosphere-proj-implementation-in-poincare}).

\subsubsection{Computing Geodesic Projections} \label{appendix-subsec:geodesic-proj-implementation-in-poincare}
Recall that geodesic projections map points to a target submanifold such that the projection of a point is the point on the submanifold that is closest to it.
In the Poincar\'e model, when the target submanifold goes through the origin, geodesic projections can be computed efficiently as follows.

Consider any geodesic submanifold $P$ that contains the origin in the Poincar\'e Ball: this must be a linear space since any geodesic that goes through the origin in this model is a straight-line.
The Euclidean reflection with respect to $P$ is a hyperbolic isometry (i.e.\ preserves distances).  
Given any point $x$, we can compute its Euclidean (and hyperbolic) reflection $r_P(x)$ with respect to $P$. Then, the (hyperbolic geometry) midpoint between $x$ and $r_P(x)$ belongs to $P$ and is the geodesic projection of $x$ onto $P$. There is a closed-form expression for this midpoint, which can be derived using Mob\"{i}us operations.

\subsubsection{Implementation of Baselines} \label{appendix-subsec:baseline-implementation}
We now detail our baseline implementation. 

\paragraph{PCA and tPCA} 
We used the Singular Value Decomposition (SVD) PyTorch implementation to implement both PCA and tPCA. 
Before using the SVD algorithm, tPCA maps points to the tangent space at the Fr\'chet mean using the logarithmic map. 
Having centered the data, this mapping can be done using the logarithmic map at the origin which is simply: 
\[
\log_\mathbf{o}(x)=\arctanh(||x||)\frac{x}{||x||}.
\]

\paragraph{PGA and BSA}
Both PGA and BSA rely on geodesic projections. 
When the data is centered, the target submanifolds in BSA and PGA are simply linear spaces going through the origin. We can therefore compute the geodesic projections in these methods using the computations described above. 
To test their sensitivity to the base point, we also implemented two baselines that perturb the base point, by adding Gaussian noise. 

\paragraph{hMDS}
To implement hMDS, we simply implemented the formulas in the original paper (see Algorithm 2 in~\cite{sala2018representation}).

\paragraph{hAE} 
For hyperbolic autoencoders, we used two fully-connected layers following the approach of~\cite{ganea2018hyperbolic}. 
One layer was used for encoding into the reduced number of dimensions, and one layer was used for decoding into the original number of dimensions. 
We trained these networks by minimizing the reconstruction error, and used the intermediate hidden representations as low-dimensional representations. 

\subsubsection{\name{} Implementation} \label{appendix-subsec:horosphere-proj-implementation-in-poincare}
The definition of horospherical projections onto $K > 1$ directions involves taking the intersection of $K$ horospheres $S(p_1, x) \cap \dots \cap S(p_K, x)$ (\cref{subsec:horo_proj_high}).
Although \cref{appendix-subsec:horosphere-proj-algo} shows that horospherical projections can be computed efficiently in the hyperboloid model without actually computing these intersections, it is also possible to implement \name{} by directly computing these intersections.
This can be done in a simple iterative fashion.

For example, in the Poincar\'e ball, horospheres can be represented as Euclidean spheres.
Note that the intersection of two $d$-dimensional Euclidean hyperspheres $S(p_0, r_0)$ and $S(p_1, r_1)$ is a $d-1$-dimensional hypersphere which can be represented by a center, a radius, and $d-1$-dimensional subspace.
The radius is uniquely determined by the radii of the original spheres $r_0, r_1$ and the distance between their centers $\|p_0-p_1\|$ and analyzing the 2-dimensional case, for which there is a simple closed formula.
The center can similarly be found as a weighted combination of the centers $p_0, p_1$ based on the formula for the 2D case.
Lastly, this $d-1$-dimensional subspace can be represented by noting it is the orthogonal space to the vector $p_1-p_0$.
Thus the intersection of these $d$-dimensional hyperspheres can be easily computed, and this process can be iterated to find the intersection of $K$ spheres $S(p_i, r_i)$.
\begin{figure}
    \centering
    \includegraphics[width=3in]{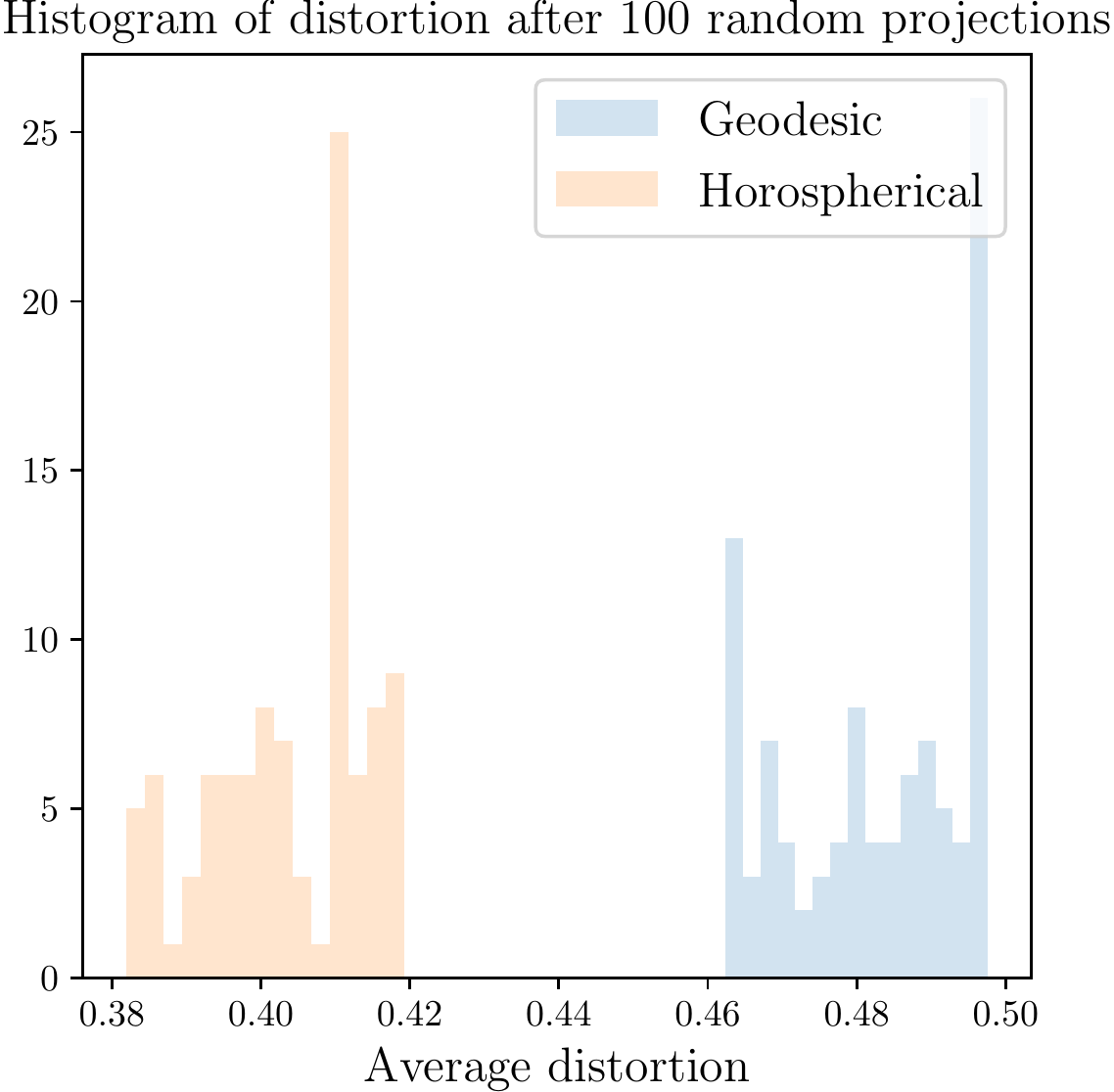}
    \caption{Histogram of average distortion computed for the projections of 1000 random hyperbolic points onto 100 random directions.
    Best in \textbf{bold} and second best \underline{underlined}.}
    \label{fig:distortion_hist}
\end{figure}

\subsection{Additional Experimental Results}
\subsubsection{Distortion Analysis}
We first analyze the average distortion incurred by horospherical and geodesic projections on a toy example with synthetically-generated data.
We generate 1000 points the Poincar\'e disk by sampling tangent vectors from a multivariate Gaussian, and mapping them to the disk using the exponential map at the origin. 
We then sample 100 random ideal points (directions) in the disk, and consider the corresponding straight-line geodesics pointing towards these directions. 
We project the datapoints onto these geodesics and measure the average distortion from before and after projection and visualize the results in a histogram~(\cref{fig:distortion_hist}).
As we can see, Horospherical projections achieve much lower distortion than geodesic projections on average, suggesting that these projections may better preserve information such as distances in higher-dimensional datasets.

\subsubsection{Dimensionality Reduction Results}

\paragraph{Sarkar embeddings}
\cref{tab:pytorch_reduction} showed dimensionality reduction results for embeddings learned with optimization. 
Here, we consider the same reduction experiment on combinatorial embeddings and report the results in~\cref{tab:sarkar_reduction}. 
The results confirm the trends observed in~\cref{tab:pytorch_reduction}: \name{} outperforms baseline methods, with significant improvements on distance preservation.

\paragraph{More dimension/component configurations}
We consider the reduction of 50-dimensional PyTorch embeddings of the Diseases and CS Ph.D. datasets. 
We plot average distortion and explained Fr\'echet variance for different number of components in~\cref{fig:dim_plots}.
\name{} significantly outperforms all previous generalizations of PCA.  
\name{} also outperforms hMDS, which is a competitive baseline, but not a PCA method as discussed before.  
    \begin{figure*}
        \centering
        \begin{subfigure}[b]{0.475\textwidth}
            \centering
            \includegraphics[width=\textwidth]{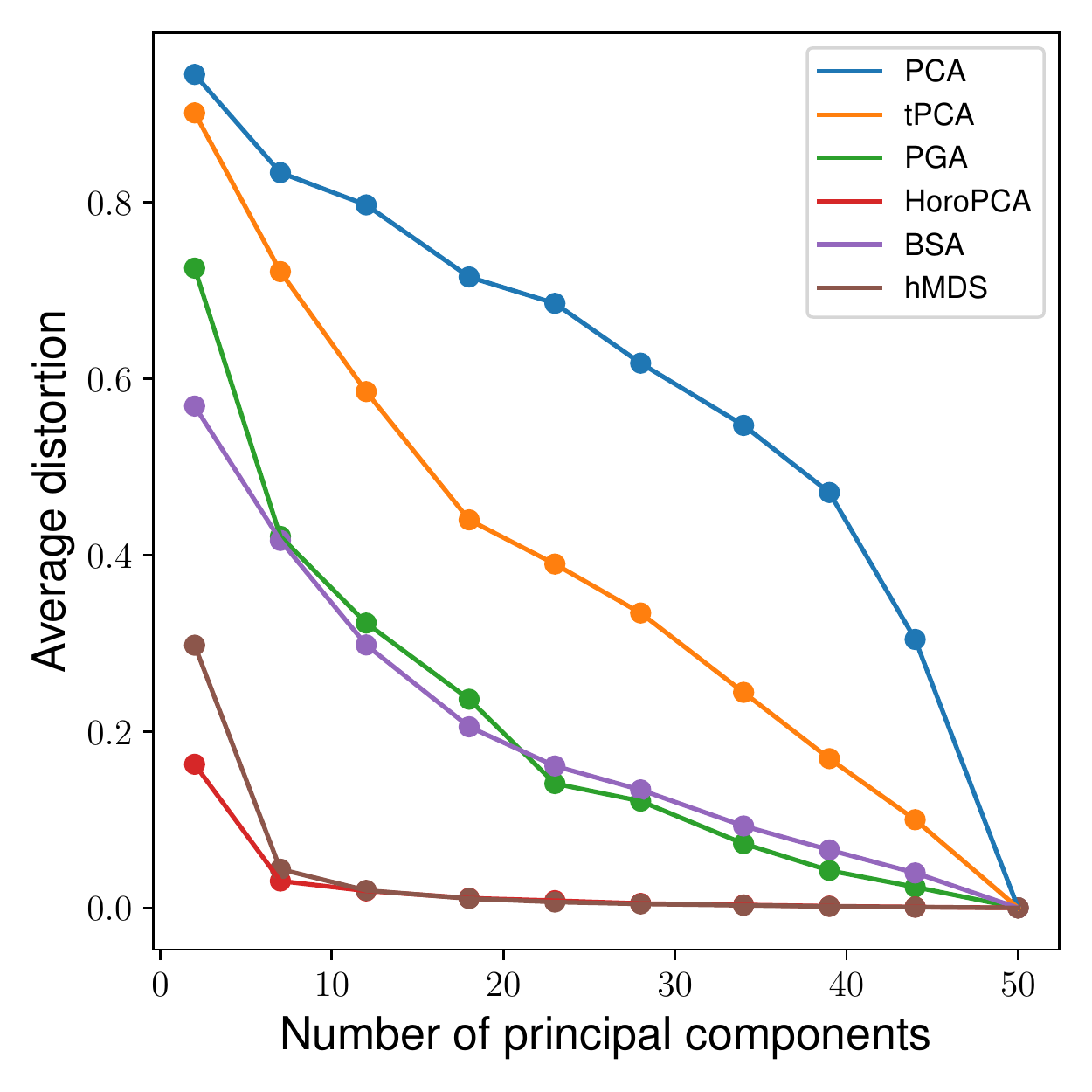}
            \caption[]%
            {{\small Diseases average distortion.}}
        \end{subfigure}
        \hfill
        \begin{subfigure}[b]{0.475\textwidth}  
            \centering 
            \includegraphics[width=\textwidth]{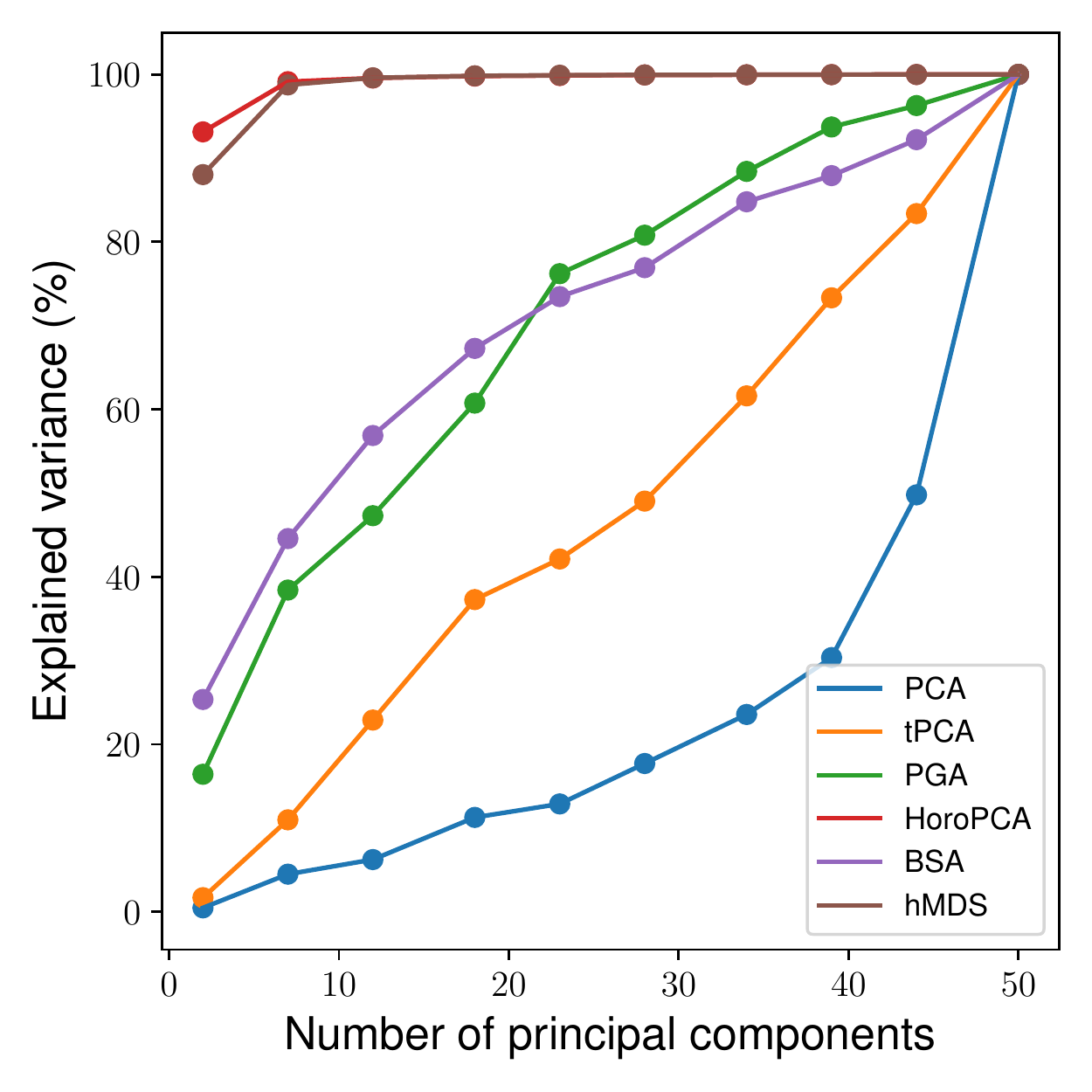}
            \caption[]%
            {{\small Diseases explained Fr\'echet variance.}}  
        \end{subfigure}
        \vskip\baselineskip
        \begin{subfigure}[b]{0.475\textwidth}   
            \centering 
            \includegraphics[width=\textwidth]{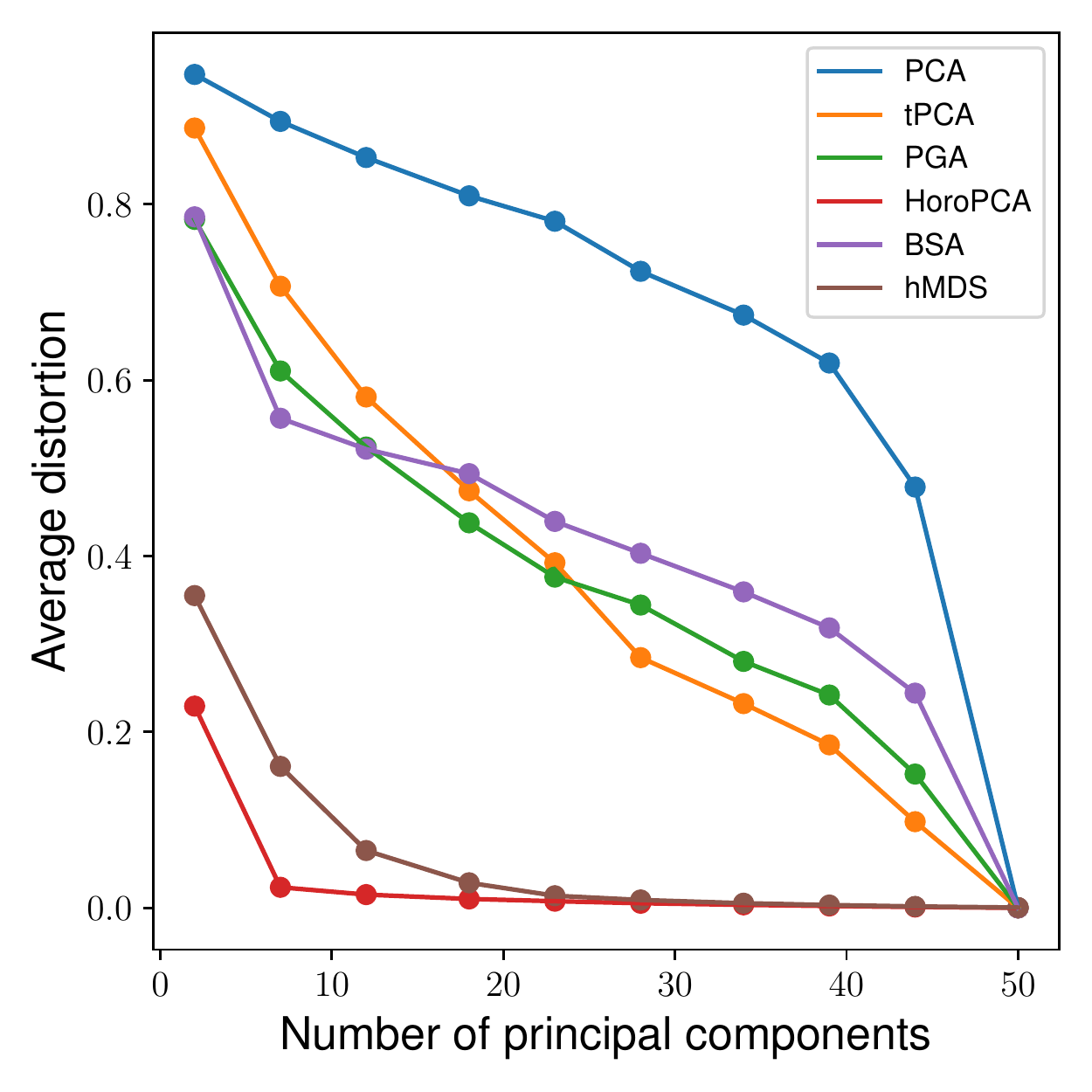}
            \caption[]%
            {{\small CS Ph.D. average distortion.}}  
        \end{subfigure}
        \hfill
        \begin{subfigure}[b]{0.475\textwidth}   
            \centering 
            \includegraphics[width=\textwidth]{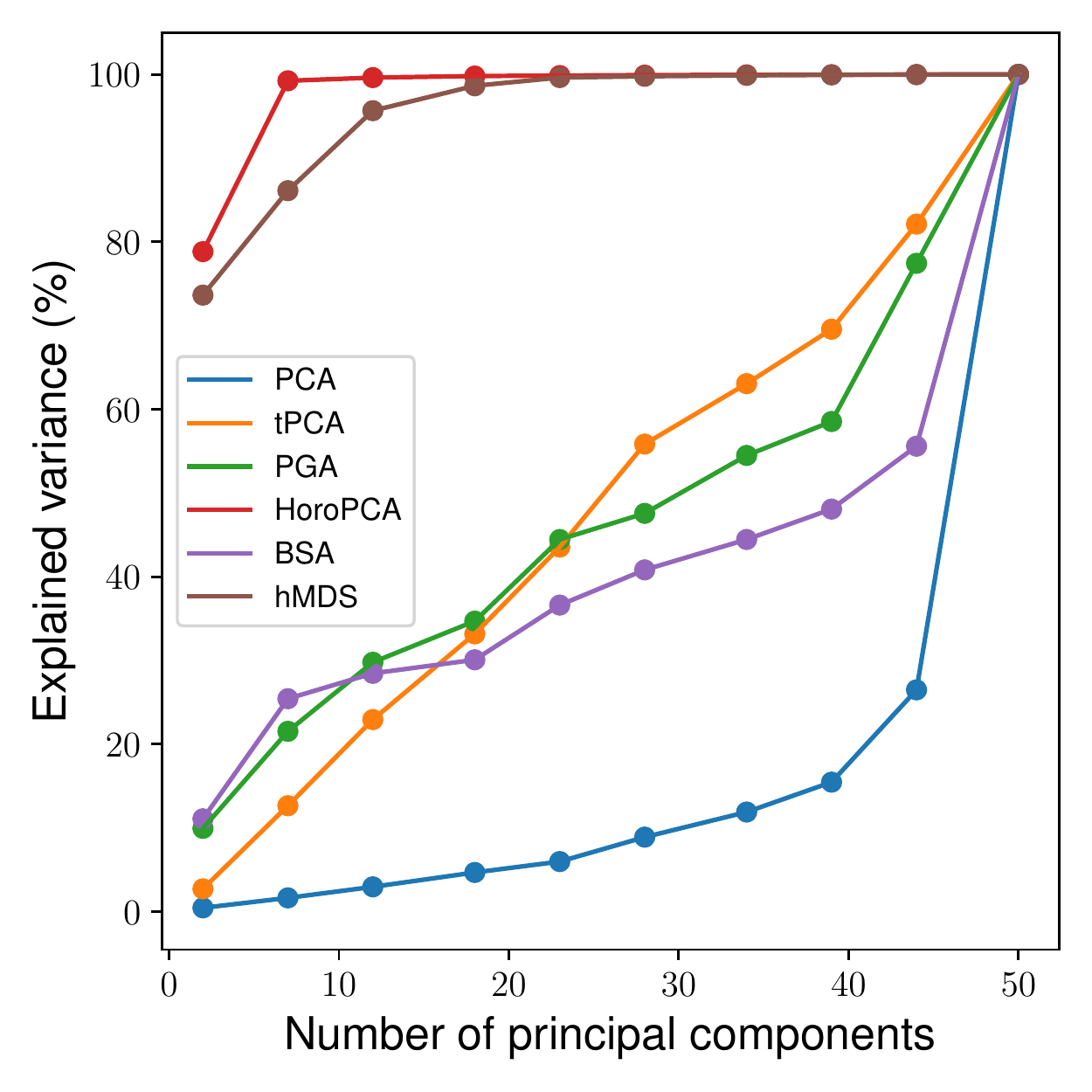}
            \caption[]%
            {{\small CS Ph.D. explained Fr\'echet variance.}}     
        \end{subfigure}
        \caption[]
        {\small Average distortion and explained Fr\'echet variance for different numbers of principal components.} 
        \label{fig:dim_plots}
    \end{figure*}
\begin{figure*}[t]
        \centering
        \begin{subfigure}[b]{0.49\textwidth}
            \centering
            \includegraphics[width=0.92\textwidth]{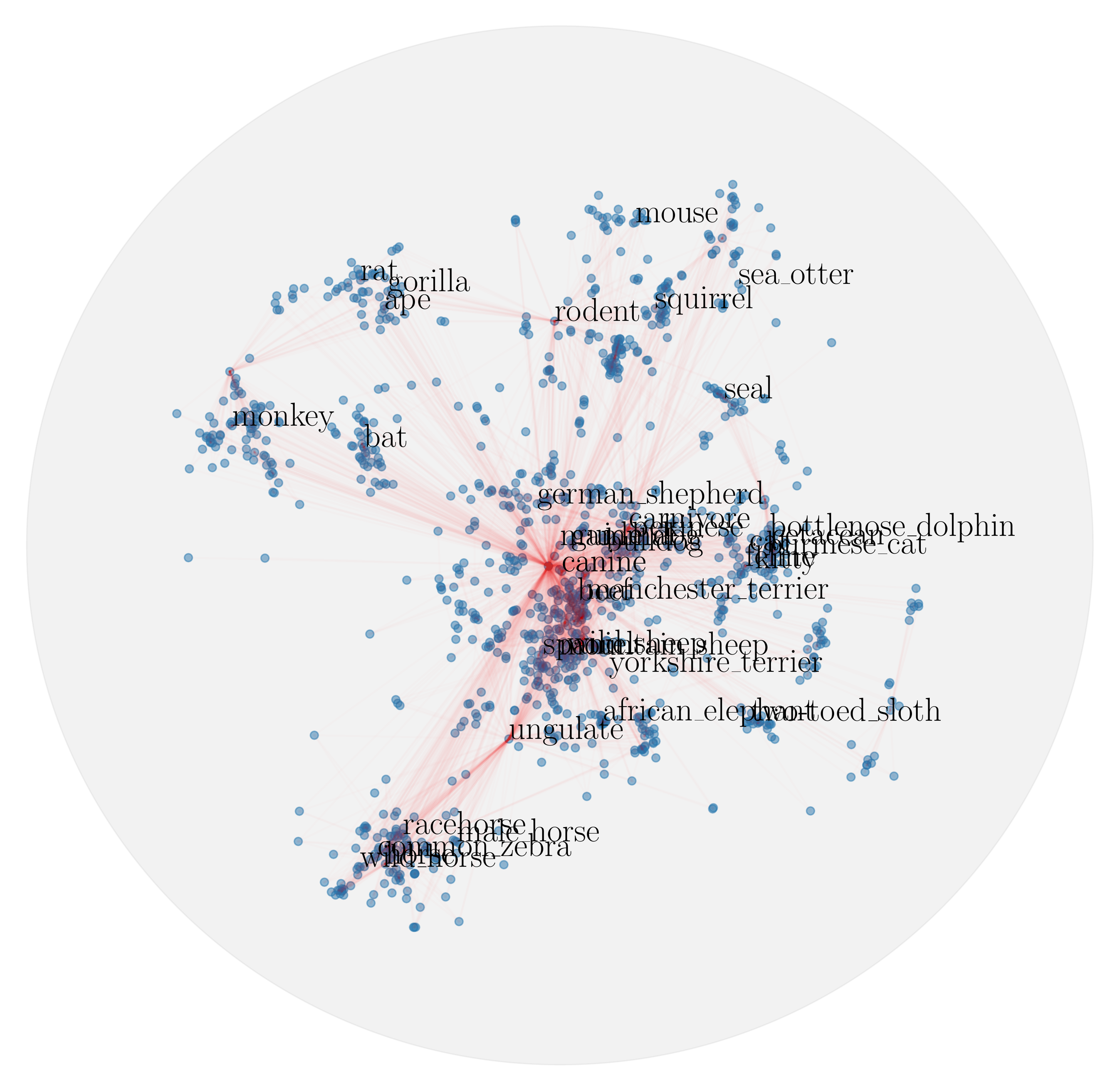}
            \caption[]%
            {{\small PCA (average distortion: 0.942)}}
        \end{subfigure}
        \begin{subfigure}[b]{0.49\textwidth}  
            \centering 
            \includegraphics[width=\textwidth]{images/pga_wordnet.pdf}
            \caption[]%
            {{\small PGA (average distortion: 0.534)}}  
        \end{subfigure}
        \vskip\baselineskip
        \begin{subfigure}[b]{0.49\textwidth}   
            \centering 
            \includegraphics[width=\textwidth]{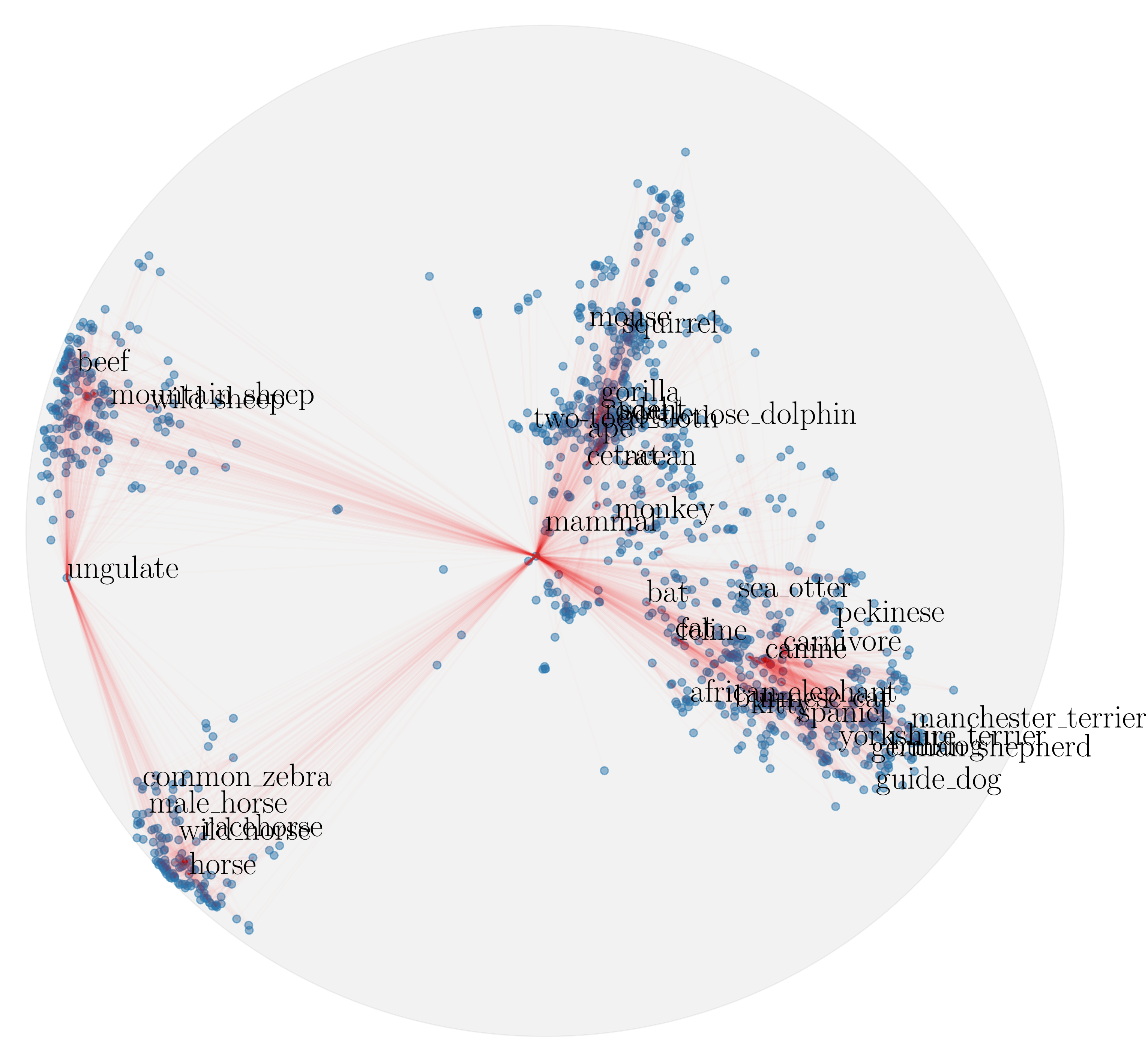}
            \caption[]%
            {{\small BSA (average distortion: 0.532)}}  
        \end{subfigure}
        \begin{subfigure}[b]{0.49\textwidth}   
            \centering 
            \includegraphics[width=\textwidth]{images/horopca_wordnet.pdf}
            \caption[]%
            {{\small \name{} (average distortion: 0.078)}}     
        \end{subfigure}
        \caption[]
        {\small Visualization of embeddings of the WordNet mammal subtree computed by reducing 10-dimensional Poincar\'e embeddings~\cite{nickel2017poincare}.}
        \label{fig:poincare_wordnet_full}
    \end{figure*}

\end{document}